\documentclass[10pt,letterpaper]{article}

\PassOptionsToPackage{nonamebreak}{natbib}

\usepackage[letterpaper,textwidth=5.5in,textheight=9in,centering]{geometry}
\usepackage{times}
\usepackage{amsmath}
\usepackage{bm}
\usepackage{natbib}
\setcitestyle{authoryear,round,citesep={;},aysep={,},yysep={;}}

\usepackage{hyperref}
\usepackage{url}
\usepackage{booktabs}
\usepackage{graphicx}
\usepackage{multirow}
\usepackage[table]{xcolor}
\usepackage{etoolbox}
\usepackage{titlesec}
\usepackage{abstract}

\usepackage{amsmath,amsfonts,bm}

\def\eqref#1{equation~\ref{#1}}
\def\1{\bm{1}}

\def\vb{{\bm{b}}}

\def\vx{{\bm{x}}}

\def\mA{{\bm{A}}}
\def\mB{{\bm{B}}}

\def\mW{{\bm{W}}}

\DeclareMathAlphabet{\mathsfit}{\encodingdefault}{\sfdefault}{m}{sl}
\SetMathAlphabet{\mathsfit}{bold}{\encodingdefault}{\sfdefault}{bx}{n}

\titleformat{\section}
  {\normalfont\bfseries\fontsize{12}{13}\selectfont}
  {\thesection}{0.65em}{}

\titleformat{\subsection}
  {\normalfont\bfseries\fontsize{10}{11}\selectfont}
  {\thesubsection}{0.65em}{}

\titlespacing*{\section}
  {0pt}{12pt plus 2pt minus 2pt}{5pt}

\titlespacing*{\subsection}
  {0pt}{9pt plus 1pt minus 1pt}{3pt}

\AtBeginEnvironment{table}{\setlength{\abovecaptionskip}{0pt}}

\usepackage{amssymb}
\usepackage{amsthm}
\theoremstyle{plain}
\newtheorem{theorem}{Theorem}[section]
\newtheorem{lemma}[theorem]{Lemma}
\newtheorem{corollary}[theorem]{Corollary}

\theoremstyle{definition}
\newtheorem{definition}[theorem]{Definition}

\begin{document}

\fontsize{10}{11}\selectfont

% ------------------------------------------------------------
% Title block
% ------------------------------------------------------------
\begin{center}

% \rule{\textwidth}{0.45pt}
\rule{\textwidth}{2pt}

\vspace{0.85em}

{\fontsize{15}{17}\selectfont\bfseries
Hidden Trajectories Don't Lie: Correcting Corruption\\[-0.05em]
Drift for Test-Time Adaptation\par}

\vspace{0.55em}

\rule{\textwidth}{0.45pt}

\vspace{0.85em}

{\fontsize{9.8}{11}\selectfont\bfseries
Amirhossein Sabour\textsuperscript{1,2,*}
\qquad
Amir Khasahmadi\textsuperscript{1}
\qquad
Mengliu Zhao\textsuperscript{1,\(\dagger\)}\par}

\vspace{0.35em}

{\fontsize{8.8}{10}\selectfont
\textsuperscript{1}Autodesk Research
\qquad
\textsuperscript{2}McMaster University\par}

\vspace{0.12em}

{\fontsize{8}{9}\selectfont\ttfamily
\{amirhossein.sabour,amir.khasahmadi,mengliu.zhao\}@autodesk.com\par}

\end{center}

\vspace{0.35em}

\begingroup
\renewcommand{\thefootnote}{\fnsymbol{footnote}}
\footnotetext[1]{This study was conducted during the author's internship at Autodesk Research.}
\footnotetext[2]{Corresponding author: \texttt{mengliu.zhao@autodesk.com}}
\endgroup

\begin{abstract}
Test-time adaptation (TTA) adapts pretrained models to distribution shifts encountered during deployment. However, most existing methods rely primarily on the weight representations of the final layer or signals derived from the model’s output probability distribution. In this work, we study \emph{hidden trajectories}: the ordered sequence of intermediate representations produced across a deep neural network. We observe that clean and corrupted trajectories become more separable as corruption severity increases. Based on this observation, we introduce \emph{Drift-TTA}, which adds zero-initialized low-rank adapters to a frozen backbone and trains them to align corrupted hidden trajectories with those from cleaner inputs. The proposed approach supports two variants: a source-dependent version trained with clean targets, and a source-free setting that learns from corrupted images at different severities. Building on these two variants, we further develop an online TTA approach that freezes the backbone and adapters while updating a scalar gate to control adaptation strength. We further cast trajectory alignment as MAP inference under a Gaussian observation model and derive a KL-based transfer bound characterizing
test-time trajectory misalignment in terms of offline residual error, residual dimensionality, and distribution shift between offline and test corruptions. We evaluate Drift-TTA on CIFAR-10-C, CIFAR-100-C, and ImageNet-C using supervised ViTs and CLIP models. 
On CIFAR-100-C at severity 5, our source-free online approach improves accuracy over no TTA by 13.3 percentage points with ViT-B/16 and 11.6 percentage points with CLIP ViT-B/16, reaching 74.9\% and 47.4\%, respectively.
Under the same setting, prior TTA baselines yield ECEs of 18.00\% for ViT and 14.70\% for CLIP, while our approach reduces them to 9.53\% and 5.77\%, respectively.
These results demonstrate that hidden trajectory alignment provides an effective signal for both offline correction and lightweight online test-time adaptation.
\end{abstract}

\section{Introduction}

Test time adaptation (TTA) adapts a pretrained model to unlabeled test data during deployment, when corruptions such as sensor noise, weather, blur, and compression artifacts shift inputs away from the clean source distribution~\citep{hendrycks2019benchmarking}.
Many recent methods guide adaptation using prediction-derived objectives, learned update mechanisms, or auxiliary self-supervised signals~\citep{liang2025comprehensive,xiao2024beyond}.
Some minimize entropy or use related confidence and energy objectives to update a small set of model parameters, including the affine parameters of batch normalization layers~\citep{wang2021tent,choi2025aea}, while others adapt representations or features directly~\citep{murphy2026neo,cho2025fata}.

These studies demonstrate the potential of TTA to recover much of the accuracy lost on standard corruption benchmarks, often with modest computation and without source labels during test time.
Entropy minimization is the primary approach in TTA, with several methods extending it through related confidence- and energy-based objectives~\citep{deng2025mgg,lei2025ttvd}. However, these objectives encourage the model to produce more confident predictions, but do not directly optimize whether those predictions are correct. As a result, an incorrect prediction can become more confident without necessarily being corrected.

Furthermore, intermediate features remain comparatively underexplored in TTA, both in how they change as corruption severity increases and in whether they can provide a complementary signal for adaptation.
To examine this, we look beyond the final output or a single representation and follow how features evolve through the network.
Chain-of-Embedding~\citep{coe2024} provides a framework for this by representing a forward pass as a \emph{trajectory} of hidden states whose geometry can differ across outcomes.
We extend this view to vision under corruption and examine whether clean and corrupted inputs trace distinguishable trajectories through a ViT or CLIP image tower.
Across all models we test, features derived from the hidden trajectory separate clean and corrupted inputs more clearly as corruption severity increases.
This observation motivates \emph{Drift-TTA}, which adds zero-initialized low-rank adapters~\citep{hu2022lora} to a frozen backbone and trains them so that corrupted inputs produce hidden trajectories closer to those of their clean counterparts.
Figure~\ref{fig:trajectory-drift-correction} summarizes this intuition.
Corruption progressively displaces the hidden trajectory as severity increases, while Drift-TTA learns a correction in the opposite direction and shifts the resulting trajectory features back toward the clean distribution.

\begin{figure}[t]
  \centering
  % \includesvg[
  %     width=\linewidth,
  %     inkscapearea=drawing
  % ]{imgs/Hook_Figure_Final}
  \includegraphics[width=\linewidth]{imgs/Hook_Figure_Final.pdf}
  \caption{
  \textbf{Corruption progressively shifts hidden trajectories, while Drift-TTA moves them back toward clean.}
  \textbf{(a)} Schematic of severity-dependent trajectory drift and the correction learned by Drift-TTA.
  \textbf{(b)} Distributions of the trajectory magnitude and angle features for clean and corrupted inputs at severities~1--5, before and after applying Drift-TTA.
  Each column compares clean inputs with one corruption severity.
  Without adaptation, the corrupted distributions move farther from clean as severity increases.
  With Drift-TTA, they shift back toward the clean distribution.
  }
  \label{fig:trajectory-drift-correction}
  \end{figure}

Our contributions are summarized as follows:
\begin{itemize}

\item \textbf{We introduce hidden trajectories as a vision-domain extension of the chain-of-embeddings framework previously studied in large language models:}
 A hidden trajectory captures the ordered sequence of mean-pooled representations across backbone layers, and we show that trajectory-derived features increasingly separate clean from corrupted inputs as corruption severity grows across all tested vision models.

\item \textbf{We introduce hidden-trajectory alignment as a deployment-efficient mechanism for corruption correction:}
Drift-TTA instantiates this idea by attaching zero-initialized low-rank adapters to a frozen ViT or CLIP image encoder and training them offline to align corrupted hidden trajectories with those of cleaner inputs. The formulation supports both source-dependent adaptation using clean targets and source-free adaptation using corrupted/further-corrupted image pairs, with the latter requiring no clean source data. Once trained, the adapters remain fixed at inference, eliminating test-time gradients and per-sample optimization. The same formulation extends directly to CLIP zero-shot classification.

\item \textbf{We introduce an online variant of Drift-TTA that adapts the strength of this learned trajectory correction at test time:}
Drift-TTA-online keeps the backbone and trained adapters frozen and updates only one scalar per adapter branch to control its correction strength.
Each scalar scales the correction vector produced by its adapter and is optimized using prediction entropy on each unlabeled target batch.
The same procedure applies to both source-dependent and source-free adapters.

\item \textbf{We provide a probabilistic interpretation and theoretical bounds for hidden-trajectory alignment:}
We interpret the trajectory-alignment objective as regularized MAP estimation under a Gaussian observation model. We then derive a KL-based transfer bound and an explicit Gaussian form relating test-time residual error to offline residual error, residual dimensionality, and offline-to-test distribution shift.

\item \textbf{We establish that hidden-trajectory alignment consistently improves corruption robustness and calibration across models:}
On CIFAR-100-C at severity 5, source-dependent Drift-TTA-online improves mean accuracy by $16.6\%$ and $13.0\%$ over the unadapted ViT-B/16 and CLIP ViT-B/16, respectively, while exceeding the strongest compared TTA baselines by $2.5\%$ and $6.2\%$. It also improves calibration, reducing mean ECE relative to these baselines by $9.94\%$ for ViT and $9.69\%$ for CLIP.

\end{itemize}

\section{Background and Related Work}
\label{sec:background}

This section introduces the test time adaptation setting, reviews existing methods, and summarizes Chain-of-Embedding~\citep{coe2024}, which motivates this work.

\subsection{Test Time Adaptation}
\label{sec:related-tta}

Test time adaptation modifies a pretrained model using unlabeled test samples during test time.

The labels of the test samples are never available for adaptation, and many methods
also assume that the data used to train the pretrained model are unavailable at
test time~\citep{wang2021tent}.

Let $f_{\omega}$ denote a model with pretrained parameters $\omega$, trained on source inputs $x_s$ and labels $y_s$.
At test time, the model receives inputs $x_t$ without their labels $y_t$.
Let $\alpha$ denote the variables adapted at test time, and let $\mathcal{L}_t$ be an unlabeled test time objective.

\begin{definition}[Test time adaptation]
\label{def:testtimeadaptation}
Optimization-based test time adaptation estimates
\begin{equation}
    \alpha^{\star}
    =
    \arg\min_{\alpha}
    \mathcal{L}_t\bigl(f_{\omega,\alpha}(x_t)\bigr),
    \qquad
    \hat{y}_t
    =
    f_{\omega,\alpha^{\star}}(x_t),
    \label{testtimeloss}
\end{equation}
without using $y_t$. Adaptation is \emph{online} when each test batch $x_t$ is used as it arrives.
\end{definition}

Existing methods differ mainly in the objective $\mathcal{L}_t$ that guides adaptation, the variables $\alpha$ they adapt, and how they use the test inputs $x_t$.
A common approach is to use the model predictions themselves to guide adaptation.
Tent~\citep{wang2021tent} minimizes prediction entropy while updating the affine parameters of normalization layers.
EATA~\citep{niu2022efficient} builds on entropy minimization by selecting reliable test samples and regularizing important model parameters using Fisher information, while SAR~\citep{niu2023towards} combines entropy minimization with sample filtering and sharpness-aware optimization to improve stability.
For continual test streams, CMF~\citep{lee2024continual} further filters the information used during adaptation to reduce forgetting.
% Other methods use related signals from the model output or learn how the update itself should be performed~\citep{choi2025aea,deng2025mgg}.
Other methods use related signals from the model output, learn how the update itself should be performed, or use auxiliary tasks during adaptation~\citep{choi2025aea,deng2025mgg,alfarra2026sbat}.
Despite these differences, the model predictions remain the main signal used for adaptation.

Other methods use information from the model representations themselves.
FOA~\citep{niu2024test} combines prediction entropy with feature statistics and adapts prompts and activations using derivative-free optimization.
PEA~\citep{ma2026architecture} instead uses feature statistics across intermediate layers to progressively align hidden representations without backpropagation.
Related work also operates on latent features or their geometry~\citep{murphy2026neo,bao2025mint,lei2025ttvd,cho2025fata,chen2022contrastive,wang2023feature}.
These methods show that intermediate representations can also provide useful signals for adaptation.

For vision-language models such as CLIP~\citep{radford2021clip}, TTA has also been extended to prompts, tokens, memories, and class representations~\citep{zhang2025otvp,wang2025tca,bao2025latte}.
TPT~\citep{shu2022test} adapts the text prompt for each test image using prediction entropy.
TDA~\citep{karmanov2024efficient} and DMN-ZS~\citep{zhang2024dmn} instead accumulate information from test samples using feature caches or dynamic memories.
BCA~\citep{zhou2025bayesian} adapts class embeddings together with their priors, while DPE~\citep{zhang2024dpe} evolves visual and textual class prototypes using entropy and cross-modal alignment.
More recently, BAT~\citep{maharana2025batclip} adapts the LayerNorm parameters of both the vision and text encoders using entropy and alignment, while BITTA~\citep{sun2026bilateral} additionally uses high-entropy samples for unlearning.
Other work also considers how adaptation is applied across layers or maintained over longer test streams~\citep{maharana2025palm,vray2025reservoirtta}.

These methods are commonly evaluated on the \mbox{-C} benchmarks~\citep{hendrycks2019benchmarking}, including CIFAR-10-C, CIFAR-100-C, and ImageNet-C, which apply fifteen corruption families at five severity levels.
Table~\ref{tab:tta-comparison} compares representative methods by their adaptation signal, updated model components, and test-time setting.

\begin{table}[t]
    \centering
    \footnotesize
    \setlength{\tabcolsep}{3pt}

    \caption{\textbf{Comparison of representative TTA methods.}
    ``Adaptation signal'' denotes the information used to guide adaptation,
    while ``Modules updated'' denotes what is changed during adaptation.
    We describe the test-time setting along three axes:
    ``Access'' indicates whether adaptation uses test samples as they arrive
    (online) or after they are available together (offline).
    ``Granularity'' indicates whether the method can adapt using a single test
    sample (instance), a group of test samples (batch), or both.
    ``Persistence'' indicates whether test-time changes are reset after adaptation
    (episodic) or kept for later test samples (continual).
    ``Multi-layer'' indicates whether representations from multiple network
    layers are used during adaptation.}

    \label{tab:tta-comparison}

    \resizebox{\linewidth}{!}{%
    \begin{tabular}{@{}l l l c c c c@{}}
        \toprule
        & & &
        \multicolumn{3}{c}{Test-time setting}
        & \\
        \cmidrule(lr){4-6}

        Method
        & Adaptation signal
        & Modules updated
        & Access
        & Granularity
        & Persistence
        & Multi-layer \\
        \midrule

        \textsc{Tent}~\citep{wang2021tent}
        & Entropy
        & Norm.\ affine
        & Online
        & Batch
        & Continual
        & \\

        SAR~\citep{niu2023towards}
        & Entropy + sharpness
        & Norm.\ affine
        & Online
        & Batch
        & Continual
        & \\

        EATA~\citep{niu2022efficient}
        & Entropy + Fisher
        & Norm.\ affine
        & Online
        & Batch
        & Continual
        & \\

        FOA~\citep{niu2024test}
        & Entropy + feature statistics
        & Prompt + activations
        & Online
        & Batch
        & Continual
        & \checkmark \\

        CMF~\citep{lee2024continual}
        & Entropy + filtering
        & Model parameters
        & Online
        & Batch
        & Continual
        & \\

        PEA~\citep{ma2026architecture}
        & Feature statistics
        & Hidden features
        & Online
        & Batch
        & Continual
        & \checkmark \\

        \midrule

        TPT~\citep{shu2022test}
        & Entropy
        & Text prompt
        & Online
        & Instance
        & Episodic
        & \\

        TDA~\citep{karmanov2024efficient}
        & Entropy + pseudo-labels
        & Feature cache
        & Online
        & Instance
        & Continual
        & \\

        BCA~\citep{zhou2025bayesian}
        & Likelihood + prior
        & Class embeddings + priors
        & Online
        & Instance
        & Continual
        & \\

        DMN-ZS~\citep{zhang2024dmn}
        & Entropy + feature memory
        & Dynamic memory
        & Online
        & Instance
        & Continual
        & \\

        DPE~\citep{zhang2024dpe}
        & Entropy + alignment
        & Class prototypes
        & Online
        & Instance
        & Continual
        & \\

        BAT~\citep{maharana2025batclip}
        & Entropy + alignment
        & Vision/text LayerNorm
        & Online
        & Batch
        & Continual
        & \\

        BITTA~\citep{sun2026bilateral}
        & Entropy + unlearning
        & Vision/text LayerNorm
        & Online
        & Batch
        & Continual
        & \\

        \midrule

        % \textbf{Drift-TTA (ours)}
        Drift-TTA (ours)
        & Hidden trajectory alignment
        & LoRA adapters
        & Offline
        & Instance
        & --
        & \checkmark \\

        % \textbf{Drift-TTA-online (ours)}
        Drift-TTA-online (ours)
        & Hidden trajectory alignment
        & Scalar gates
        & Online
        & Instance / Batch
        & Episodic
        & \checkmark \\

        \bottomrule
    \end{tabular}%
    }

\end{table}

Table~\ref{tab:tta-comparison} also shows how methods differ in their use of test samples.
Online, instance-level, episodic adaptation processes each sample as it arrives and resets before the next, making it more constrained than offline, batch-level, or continual settings.
Drift-TTA-online supports this setting and can also operate on batches when multiple samples are available.

Overall, existing TTA methods use a range of adaptation signals, including model outputs, feature statistics, memories, pseudo-labels, cross-modal alignment, and intermediate representations.

\paragraph{Chain-of-Embedding.}
Chain-of-Embedding (CoE)~\citep{coe2024} studies how hidden representations evolve across layers in large language models.
It shows that geometric summaries of this sequence can separate correct from incorrect generations.
We transfer this view from language model outcomes to vision models under corrupted inputs.
Specifically, we compare clean and corrupted versions of the same input across network depth to study how corruption changes the learned representations.

\section{Drift-TTA: Low-Rank Alignment of Hidden Trajectories}
\label{sec:method}
\label{sec:trajectory}

Drift-TTA operates on the sequence of hidden representations produced across network depth.
For an input $\mathbf{x}$, let $\mathbf{z}_l$ denote the spatial representation after block $l$ of a backbone with $L$ blocks.
We define the hidden trajectory as
\begin{equation}
    \mathcal{H}(\mathbf{x})
    =
    \bigl(\mathbf{h}_1,\ldots,\mathbf{h}_L\bigr),
    \qquad
    \mathbf{h}_l
    =
    \operatorname{MeanPool}\bigl(\mathbf{z}_l\bigr)
    \in\mathbb{R}^d,
    \label{eq:trajectory}
\end{equation}
where mean pooling excludes the class or prefix token for transformer backbones.

For analysis, we include the input embedding $\mathbf{h}_0$ and define
\begin{equation}
    \mathrm{Mag}(H)
    =
    \frac{
    \frac{1}{L}\sum_{l=0}^{L-1}\|\mathbf{h}_{l+1}-\mathbf{h}_l\|_2
    }{
    \|\mathbf{h}_L-\mathbf{h}_0\|_2
    },
    \qquad
    \mathrm{Ang}(H)
    =
    \frac{
    \frac{1}{L}\sum_{l=0}^{L-1}\angle(\mathbf{h}_{l+1},\mathbf{h}_l)
    }{
    \angle(\mathbf{h}_L,\mathbf{h}_0)
    },
    \label{eq:trajectory-features}
\end{equation}
where
$\angle(\mathbf{a},\mathbf{b})
=
\arccos\!\left(
\frac{\mathbf{a}^{\top}\mathbf{b}}
{\|\mathbf{a}\|_2\|\mathbf{b}\|_2}
\right)$.
These features are used only to analyze and visualize how corruption changes intermediate representations across network depth; training directly aligns the hidden states $\mathbf{h}_l$.

Drift-TTA adds zero-initialized low-rank adapters within the frozen backbone and trains them to move corrupted hidden trajectories toward those produced by cleaner inputs.
The same trajectory-alignment objective is used for source-dependent and source-free training.
Table~\ref{tab:settings} summarizes these settings, and Figure~\ref{fig:drift-tta-overview} illustrates the method.

\begin{table}[t]
    \centering
    \small
    \caption{\textbf{Comparison of adaptation paradigms.}
    The table compares the data required beyond the pretrained model and when
    adaptation is performed.
    For Drift-TTA, source dependent and source free denote two training settings:
    source dependent uses clean images as trajectory targets for corrupted inputs,
    while source free uses corrupted and further corrupted images without clean images
    or labels.
    After training, either setting can be used directly with the learned adapters fixed
    or with Drift-TTA-online, which additionally updates only scalar gates at test time.}
    \label{tab:settings}

    \scriptsize
    \setlength{\tabcolsep}{8pt}
    \begin{tabular}{@{}l c c c c c@{}}
        \toprule
        & \multicolumn{3}{c}{Data required}
        & \multicolumn{2}{c}{When adaptation occurs} \\
        \cmidrule(lr){2-4}
        \cmidrule(lr){5-6}

        Paradigm
        & Clean
        & Clean
        & Unlabeled
        & Before
        & At test \\
        & images
        & labels
        & corrupted images
        & testing
        & time \\
        \midrule

        Fine tuning
        & \checkmark
        & \checkmark
        &
        & \checkmark
        & \\

        Unsupervised DA
        & \checkmark
        & \checkmark
        & \checkmark
        & \checkmark
        & \\

        Test time training
        & \checkmark
        & \checkmark
        & \checkmark
        &
        & \checkmark \\

        Online TTA (e.g.\ \textsc{Tent})
        &
        &
        & \checkmark
        &
        & \checkmark \\

        \midrule

        Drift-TTA, source dependent
        & \checkmark
        &
        & \checkmark
        & \checkmark
        & \\

        Drift-TTA, source free
        &
        &
        & \checkmark
        & \checkmark
        & \\

        Drift-TTA-online, source dependent
        & \checkmark
        &
        & \checkmark
        & \checkmark
        & \checkmark \\

        Drift-TTA-online, source free
        &
        &
        & \checkmark
        & \checkmark
        & \checkmark \\

        \bottomrule
    \end{tabular}
\end{table}

\begin{figure}[t]
  \centering
  \includegraphics[width=\linewidth]{imgs/Main_Figure_Final.pdf}
  \caption{
\textbf{Architecture of Drift-TTA.}
A frozen pretrained backbone is augmented with zero-initialized low-rank adapters attached to selected linear maps.
The mean-pooled representations $\mathbf{h}_1,\ldots,\mathbf{h}_L$ produced across the backbone form the hidden trajectory.
The learned adapters move corrupted trajectories toward their clean counterparts while keeping the pretrained backbone frozen.
After training, the adapters can be used directly with fixed weights.
For online adaptation, Drift-TTA-online keeps the backbone and adapter weights frozen and updates only the strength of each learned correction.
}
  \label{fig:drift-tta-overview}
  \end{figure}

\subsection{Low-Rank Trajectory Adapters}
\label{ssec:adapter}
\label{ssec:placement}

Each adapter is a trainable residual branch on a linear map of the frozen backbone.
Let $\mathcal{S}\subseteq\{1,\ldots,L\}$ denote the adapted blocks and $\mathcal{L}_l$ the selected linear maps in block $l$, so that $\left(\mathcal{S},\{\mathcal{L}_l\}_{l\in\mathcal{S}}\right)$ defines the adapter placement.
For transformer backbones, $\mathcal{L}_l$ contains token-wise linear maps within the attention and MLP sublayers.
For CLIP, the same procedure is applied only to the image encoder, while the text encoder and class prototypes remain frozen.
The $q$-th map in block $l$ computes $\mathbf{y}_{l,q}=\mathbf{W}_{l,q}\mathbf{x}_{l,q}+\mathbf{b}_{l,q}$, with weights $\mathbf{W}_{l,q}\in\mathbb{R}^{d_{\mathrm{out}}\times d_{\mathrm{in}}}$, input $\mathbf{x}_{l,q}\in\mathbb{R}^{d_{\mathrm{in}}}$, and bias $\mathbf{b}_{l,q}\in\mathbb{R}^{d_{\mathrm{out}}}$.
Following LoRA~\citep{hu2022lora}, a rank-$r$ adapter adds the correction
\begin{equation}
\Delta_{l,q}\!\left(\mathbf{x}_{l,q}\right)
=
\alpha_{l,q}\mathbf{A}_{l,q}\mathbf{B}_{l,q}\mathbf{x}_{l,q},
\qquad
\mathbf{B}_{l,q}\in\mathbb{R}^{r\times d_{\mathrm{in}}},
\qquad
\mathbf{A}_{l,q}\in\mathbb{R}^{d_{\mathrm{out}}\times r},
\label{eq:adapter-branch}
\end{equation}
where $\mathbf{A}_{l,q}$ and $\mathbf{B}_{l,q}$ are trainable low-rank weights and the scalar gate $\alpha_{l,q}$ sets the strength of the correction.
The adapted output is $\widetilde{\mathbf{y}}_{l,q}=\mathbf{y}_{l,q}+\Delta_{l,q}(\mathbf{x}_{l,q})$, where the effective correction $\mathbf{A}_{l,q}\mathbf{B}_{l,q}$ has rank at most $r$ and $\mathbf{W}_{l,q}$ and $\mathbf{b}_{l,q}$ remain fixed.
We initialize $\mathbf{A}_{l,q}=\mathbf{0}$ and $\mathbf{B}_{l,q}$ with small random values, so the adapted model initially produces the same computation as the frozen backbone.
Only the adapter weights receive gradients during offline training, with $\alpha_{l,q}=1$.
Once trained, these weights remain fixed and are used either directly with unit strength or with gates adapted online (Section~\ref{ssec:online-tta}).

\subsection{Trajectory Alignment Objective}
\label{ssec:objective}

Each training example is a triplet of a student input $\mathbf{x}$, which may be augmented or corrupted, a cleaner teacher view $\mathbf{x}^{\mathrm{tgt}}$, and an anchor view $\mathbf{x}^{\mathrm{anc}}$.
The adapted backbone processes $\mathbf{x}$, while the frozen backbone processes $\mathbf{x}^{\mathrm{tgt}}$ to provide the reference trajectory and decision.
Let $\mathbf{h}_l^{\theta}(\mathbf{x})$ and $\mathbf{h}_l^{0}(\mathbf{x}^{\mathrm{tgt}})$ denote the mean-pooled hidden states of the adapted and frozen models after block $l$, where $\theta$ contains all adapter parameters.
Their decisions $\mathbf{o}^{\theta}(\mathbf{x})$ and $\mathbf{o}^{0}(\mathbf{x}^{\mathrm{tgt}})$ are classifier logits for supervised ViTs.
Since CLIP has no task-specific classification head, its decision is instead the vector of zero-shot image--text similarity scores.
We omit the learned temperature from these scores so that their scale does not dominate the trajectory terms, which leaves the predicted class unchanged.
With the aligned blocks $\mathcal{T}$ running from the first adapted block to block $L$, the alignment loss matches both the trajectory and the decision:
\begin{equation}
\mathcal{L}_{\mathrm{align}}(\theta)
=
\sum_{l\in\mathcal{T}}w_l\bigl\|\mathbf{h}_l^{\theta}(\mathbf{x})-\operatorname{sg}\bigl[\mathbf{h}_l^{0}(\mathbf{x}^{\mathrm{tgt}})\bigr]\bigr\|_2^2
+\lambda_{\mathrm{dec}}\bigl\|\mathbf{o}^{\theta}(\mathbf{x})-\operatorname{sg}\bigl[\mathbf{o}^{0}(\mathbf{x}^{\mathrm{tgt}})\bigr]\bigr\|_2^2,
\label{eq:alignment-loss}
\end{equation}
where $\operatorname{sg}[\cdot]$ stops gradients through the frozen teacher.
The effective low-rank corrections are regularized by
\begin{equation}
\mathcal{L}_{\mathrm{ctrl}}(\theta)
=
\sum_{l\in\mathcal{S}}
\sum_{q\in\mathcal{L}_l}
\left\|
\mathbf{A}_{l,q}\mathbf{B}_{l,q}
\right\|_F^2.
\label{eq:control-loss}
\end{equation}
Because the adapters remain active for every input, the quietness loss uses the anchor view to limit unnecessary changes to inputs that do not require further correction:
\begin{equation}
\mathcal{L}_{\mathrm{quiet}}(\theta)
=
\sum_{l\in\mathcal{T}}w_l\bigl\|\mathbf{h}_l^{\theta}(\mathbf{x}^{\mathrm{anc}})-\operatorname{sg}\bigl[\mathbf{h}_l^{0}(\mathbf{x}^{\mathrm{anc}})\bigr]\bigr\|_2^2
+\lambda_{\mathrm{dec}}\bigl\|\mathbf{o}^{\theta}(\mathbf{x}^{\mathrm{anc}})-\operatorname{sg}\bigl[\mathbf{o}^{0}(\mathbf{x}^{\mathrm{anc}})\bigr]\bigr\|_2^2.
\label{eq:quiet-loss}
\end{equation}
The complete training objective is
\begin{equation}
\mathcal{L}(\theta)
=
\mathcal{L}_{\mathrm{align}}(\theta)
+
\lambda_{\mathrm{ctrl}}\mathcal{L}_{\mathrm{ctrl}}(\theta)
+
\lambda_{\mathrm{quiet}}\mathcal{L}_{\mathrm{quiet}}(\theta).
\label{eq:objective}
\end{equation}
This objective is shared by two training settings, which differ only in how the student, teacher, and anchor views are constructed.

% MOVE TO EXPERIMENTAL SETUP:
% We use $w_l=1$, $\lambda_{\mathrm{dec}}=1$, $\lambda_{\mathrm{ctrl}}=10^{-3}$, and $\lambda_{\mathrm{quiet}}=0.05$ by default.

\paragraph{Source-dependent training.}
\label{ssec:targets}
Source-dependent training starts from a clean source image $\mathbf{x}^{\mathrm{clean}}$ and corrupts it with a corruption operator $\mathcal{C}$, giving the student input $\mathbf{x}=\mathcal{C}(\mathbf{x}^{\mathrm{clean}})$.
The clean image serves as both the teacher view and the anchor, $\mathbf{x}^{\mathrm{tgt}}=\mathbf{x}^{\mathrm{anc}}=\mathbf{x}^{\mathrm{clean}}$.
The adapters therefore learn to move the corrupted trajectory back toward the hidden trajectory and decision of the clean forward pass.

\paragraph{Source-free training.}
\label{ssec:duality}
When clean images are unavailable, source-free training starts from a corrupted image $\mathbf{x}^{\mathrm{cur}}$ and corrupts it further, giving the student input $\mathbf{x}=\mathbf{x}^{+}=\operatorname{Worsen}(\mathbf{x}^{\mathrm{cur}})$ with the same corruption family at a higher effective severity.
The current image then serves as both the cleaner teacher view and the anchor, $\mathbf{x}^{\mathrm{tgt}}=\mathbf{x}^{\mathrm{anc}}=\mathbf{x}^{\mathrm{cur}}$, so no clean images or class labels are required.
The adapters thus learn to move the trajectory of $\mathbf{x}^{+}$ back toward that of $\mathbf{x}^{\mathrm{cur}}$, although this target trajectory remains corrupted.
At deployment, the learned correction is applied to the current corrupted input and partially reverses the same direction of displacement.
For stochastic corruptions, the worsening operation preserves the underlying corruption realization whenever possible, since an independent draw can change the displacement direction and weaken the teacher correspondence.

\subsection{Online Test-Time Adaptation}
\label{ssec:online-tta}
\label{ssec:clip}

Adapters trained in either setting apply each learned correction with fixed strength, $\alpha_{l,q}=1$.
Drift-TTA-online instead adapts the gates $\alpha_{l,q}$ in~\eqref{eq:adapter-branch} at test time, while the backbone and adapter weights remain frozen.
Collecting these gates as $\alpha=\{\alpha_{l,q}\}$ gives the test-time variables in Definition~\ref{def:testtimeadaptation}.
Setting $\alpha_{l,q}=0$ disables a correction.
This adds one online parameter per adapter branch: $36$ gates for the supervised ViTs and $24$ for CLIP in our $12$-block models.
For each unlabeled test batch $\mathcal{B}_t$, the gates are initialized to one and updated by minimizing the prediction entropy
\begin{equation}
\mathcal{L}_{\mathrm{online}}\!\left(\alpha;\mathcal{B}_t\right)
=
-\frac{1}{|\mathcal{B}_t|}
\sum_{\mathbf{x}\in\mathcal{B}_t}
\sum_c
p_{\alpha,c}(\mathbf{x})
\log p_{\alpha,c}(\mathbf{x}),
\label{eq:online-entropy}
\end{equation}
where $\mathbf{p}_{\alpha}(\mathbf{x})=\operatorname{softmax}(\mathbf{s}_{\alpha}(\mathbf{x}))$, $c$ indexes the output classes, and the prediction scores $\mathbf{s}_{\alpha}$ are the classifier logits for supervised ViTs and the temperature-scaled image--text similarity scores for CLIP.
After $K$ gradient updates, we predict the same batch with the calibrated gates and reset them before processing the next batch, which also supports per-image adaptation when $|\mathcal{B}_t|=1$.

\subsection{Probabilistic Interpretation and Theoretical Bounds}
\label{ssec:capacity}

\paragraph{Alignment as regularized Gaussian MAP.}
For a student--teacher--anchor triplet $z=(\mathbf x,\mathbf x^{\mathrm{tgt}},\mathbf x^{\mathrm{anc}})$, we can decompose \eqref{eq:objective} into the following:
\begin{equation}
\ell_\theta(z)=\mathcal L_{\mathrm{align}}(\theta;z)
+\lambda_{\mathrm{quiet}}\mathcal L_{\mathrm{quiet}}(\theta;z) + \lambda_{\mathrm{ctrl}}\mathcal L_{\mathrm{ctrl}}(\theta).
\label{eq:sample-loss-decomposition}
\end{equation}
If we stack the frozen teacher and anchor readings into $t$, and the corresponding adapted outputs into $f_\theta(v)$, where $v=(\mathbf x,\mathbf x^{\mathrm{anc}})$, then the data-fit term can be written as
$\ell_{\mathrm{data},\theta}(z)=\|t-f_\theta(v)\|_2^2$. Let $p(t \mid v,\theta)=\mathcal{N}\left(t; f_\theta(v),\frac{1}{2}I\right)$, and place the regularizing prior $\pi(\theta)\propto
\exp[-\lambda_{\mathrm{ctrl}}\mathcal{L}_{\mathrm{ctrl}}(\theta)]$,
assuming the prior is proper on the parameter domain.
Then, by Bayes' rule:
\begin{equation}
\operatorname*{arg\,min}_{\theta}\ell_\theta(z)
=\operatorname*{arg\,max}_{\theta}p(\theta\mid t,v).
\label{eq:alignment-map}
\end{equation}

\paragraph{KL variational bound for the test-time residual.}
For a learned parameter vector $\theta$, let $r_\theta(z)=t-f_\theta(v)\in\mathbb R^{d_t}$ be its weighted residual, and let $P_\theta$ and $Q_\theta$ be the residual distributions induced by offline and test examples, respectively. We define:
\begin{equation}
R_{\mathrm{off}}(\theta)=\mathbb E_{P_\theta}\|r\|^2,
\qquad R_{\mathrm{te}}(\theta)=\mathbb E_{Q_\theta}\|r\|^2,
\qquad \kappa_\theta=\mathrm{KL}(Q_\theta\Vert P_\theta).
\label{eq:population-risks}
\end{equation}

We further set $\eta>0$ with finite $Z_\eta=\mathbb E_{P_\theta}e^{\eta\|r\|^2}$, let the tilted distribution $dP_{\theta,\eta}=e^{\eta\|r\|^2}dP_\theta/Z_\eta$, Using $\kappa_\theta-\eta R_{\mathrm{te}}(\theta)+\log Z_\eta=\mathrm{KL}(Q_\theta\|P_{\theta,\eta})\ge 0$, we obtain the following KL change-of-measure bound:
\begin{equation}
R_{\mathrm{te}}(\theta)
\le
R_{\mathrm{off}}(\theta)
+
\inf_{\eta>0}
\frac{
\kappa_\theta
+
\log
\mathbb{E}_{P_\theta}
e^{\eta(\|r\|_2^2-R_{\mathrm{off}}(\theta))}
}{
\eta
}.
\label{eq:offline-test-bound}
\end{equation}

% where
% \begin{equation}
% \kappa_\theta-\eta R_{\mathrm{te}}(\theta)+\log Z_\eta=\mathrm{KL}(Q_\theta\Vert P_{\theta,\eta})\ge0
% \label{eq:shift-kl-identity}
% \end{equation}

\paragraph{Convert into an explicit Gaussian bound.}
If we further assume the actual offline residual law is Gaussian $P_\theta=\mathcal N(0,\sigma^2 I)$, with $\sigma^2>0$, substituting this into~\eqref{eq:offline-test-bound} and choosing $\eta=\sqrt{\kappa_\theta}/[\sigma^2(\sqrt{d_t}+2\sqrt{\kappa_\theta})]$ gives
\begin{equation}
\boxed{R_{\mathrm{te}}(\theta)\le R_{\mathrm{off}}(\theta)
+2\sigma^2\bigl(\sqrt{d_t\kappa_\theta}+\kappa_\theta\bigr).}
\label{eq:gaussian-transfer-bound}
\end{equation}
An intuitive explanation for \eqref{eq:gaussian-transfer-bound} is that the
test-time residual is bounded by the offline residual,
the residual dimensionality, and the KL divergence between
the offline and test residual distributions. Note that these bounds apply to fixed adapters under additional residual-distribution assumptions; they do not guarantee clean-trajectory recovery, online adaptation gains, or improved accuracy. Detailed proof is given in Appendix~\ref{app:theory}.

\section{Experiments}
\label{sec:experiments}

Drift-TTA is applied to both unimodal classifiers and CLIP-based classification models. In this section, we report comprehensive validation for our approach. The experimental setting is described in Section~\ref{ssec:experimental_setup}. Main results are presented in Section~\ref{ssec:main-results}.

\subsection{Experimental Setup}
\label{ssec:experimental_setup}

\textbf{Datasets and models.}
We evaluate Drift-TTA on CIFAR-10-C, CIFAR-100-C, and ImageNet-C~\citep{hendrycks2019benchmarking}, each containing 15 corruption types at five severity levels. For unimodal classification, ViT-Tiny is used on CIFAR-10-C, while ViT-B/16 is used on CIFAR-100-C and ImageNet-C. Each model is pretrained on the corresponding clean dataset. For CLIP-based classification, we use a pretrained CLIP ViT-B/16 model. Both source-dependent and source-free variants of Drift-TTA are evaluated.

\textbf{Training Protocol.}
Drift-TTA adds a lightweight LoRA adapter (rank 4, on all transformer blocks) to a frozen backbone. For each corruption family we train one adapter on the other 14 families (all five severities) and evaluate on the held-out family; Table \ref{tab:main_results_cifar100} reports this leave-one-family-out accuracy. The adapter weights are trained offline and remain frozen at test time. Drift-TTA-online updates only their scalar gates using prediction entropy on the current unlabeled target batch. Source-dependent is trained for 4000 steps, while source-free adapters are trained for 2400 steps. A held-out validation split is used only to monitor convergence. For supervised ViT runs, the final checkpoint is reported. For CLIP, we select among epochs {1,3,5,7,9} using unsupervised validation loss; no target labels are used for model selection..

\textbf{Evaluation.}
Each adapted model is compared with the corresponding model without adaptation on the same corrupted images. Source-dependent experiments use $2{,}000$ CIFAR and $5{,}000$ ImageNet images, while source-free experiments use $10{,}000$ CIFAR and $50{,}000$ ImageNet images. CLIP experiments use $10{,}000$ CIFAR and $5{,}000$ ImageNet images. We primarily report the change in accuracy relative to no adaptation, denoted by $\Delta$pp. Recovery and changes in clean accuracy are also reported when applicable. All experiments use a single seed, with an expected run-to-run variation of approximately $0.3\%$.

\subsection{Main Results}
\label{ssec:main-results}

\begin{table}[!t]
    \centering
    \definecolor{bestgreen}{RGB}{0,120,55}
    \definecolor{secondblue}{RGB}{30,90,170}
    \newcommand{\best}[1]{\textcolor{bestgreen}{\textbf{#1}}}
    \newcommand{\second}[1]{\textcolor{secondblue}{\underline{#1}}}

    \caption{
    \textbf{Test-time adaptation accuracy (\%) on CIFAR-100-C at severity 5.}
    Results are reported across all fifteen corruption types for supervised ViT-B/16
    and CLIP ViT-B/16. Drift-TTA-online combines a frozen trajectory-repair adapter
    with online entropy-based gate calibration under source-dependent (SD) and
    source-free (SF) training. The best result in each column is shown in
    \textcolor{bestgreen}{\textbf{green}} and the second best in
    \textcolor{secondblue}{\underline{blue}}.}
    \label{tab:main_results_cifar100}

    % \vspace{-0.4em}

    \begingroup
    \fontsize{5.8}{6.2}\selectfont
    \renewcommand{\arraystretch}{0.80}
    \setlength{\tabcolsep}{1.0pt}

    \newcommand{\rot}[1]{\rotatebox{65}{#1}}
    \newcommand{\NA}{--}
    \newcommand{\emptyvals}{%
        \NA & \NA & \NA & \NA & \NA &
        \NA & \NA & \NA & \NA & \NA &
        \NA & \NA & \NA & \NA & \NA & \NA%
    }

    \begin{tabular*}{\textwidth}{
        @{\extracolsep{\fill}}
        ccl*{16}{c}
        @{}
    }

    \toprule
    &
    & \textbf{Method}
    & \rot{Gaussian} & \rot{Shot} & \rot{Impulse} & \rot{Defocus} & \rot{Glass}
    & \rot{Motion} & \rot{Zoom} & \rot{Snow} & \rot{Frost} & \rot{Fog}
    & \rot{Brightness} & \rot{Contrast} & \rot{Elastic} & \rot{Pixelate} & \rot{JPEG}
    & \textbf{Mean} \\
    \midrule

    \multirow{20}{*}{\rotatebox{90}{\textbf{CIFAR-100-C}}}

    % ------------------------- ViT -------------------------
    & \multirow{10}{*}{\rotatebox{90}{\textbf{ViT}}}
    & ViT-B/16
    & 42.0 & 42.4 & 54.1 & 73.1 & 47.5 & 72.8 & 75.0 & 80.5 & 79.7 & 57.1 & 86.0 & 44.0 & 64.5 & 47.7 & 57.5 & 61.6 \\
    & & TENT
    & 42.6 & 43.4 & 54.8 & 73.4 & 47.2 & 73.0 & 75.2 & 80.3 & 79.5 & 54.3 & 86.0 & 39.8 & 64.4 & 46.8 & 57.2 & 61.2 \\
    & & SAR
    & 42.9 & 44.5 & 54.0 & 73.3 & 47.6 & 73.0 & 74.9 & 79.5 & 79.1 & 53.6 & 85.4 & 46.3 & 63.9 & 46.9 & 55.4 & 61.3 \\
    & & EATA
    & 42.5 & 43.8 & 54.5 & 73.2 & 47.9 & 73.0 & 75.0 & 80.1 & 79.7 & 55.6 & 85.7 & 42.0 & 65.1 & 48.9 & 57.2 & 61.6 \\
    & & FOA
    & 43.5 & 49.0 & 55.3 & 74.3 & 55.3 & 75.8 & 78.5 & 81.0 & 81.9 & 76.0 & \second{87.2} & 76.2 & 70.1 & 55.6 & 60.7 & 68.0 \\
    & & CMF
    & 56.5 & 65.3 & 66.0 & 77.0 & 61.1 & 76.4 & 79.5 & 81.0 & 81.7 & 77.8 & 86.1 & \second{82.9} & 69.6 & 68.7 & \second{65.7} & 73.0 \\
    & & PEA
    & 62.1 & 64.8 & \second{71.5} & 81.2 & 65.9 & 80.0 & 82.6 & 82.7 & 83.6 & \best{82.1} & \best{87.4} & \best{86.4} & 70.7 & \second{70.8} & 64.4 & \second{75.7} \\
    % & & NEO
    % & \emptyvals \\
    & & Drift-TTA-online (SF)
    & \second{73.6} & \second{72.2} & 45.3 & \second{83.4} & \second{70.2} & \best{81.6} & \second{84.9} & \second{82.9} & \second{84.0} & \second{77.9} & 87.1 & 81.4 & \second{71.9} & 63.7 & 63.8 & 74.9 \\
    & & Drift-TTA-online (SD)
    & \best{77.3} & \best{78.5} & \best{75.0} & \best{84.5} & \best{71.9} & \second{81.6} & \best{85.1} & \best{83.7} & \best{84.8} & 75.2 & 85.6 & 75.1 & \best{75.6} & \best{71.9} & \best{66.9} & \best{78.2} \\

    \cmidrule(lr){2-19}

    % ------------------------- CLIP -------------------------
    & \multirow{10}{*}{\rotatebox{90}{\textbf{CLIP}}}
    & ViT-B/16
    & 19.6 & 21.4 & 25.3 & 42.5 & 20.1 & 43.2 & 48.0 & 48.4 & 49.7 & 41.7 & 57.0 & 34.4 & 29.2 & 23.9 & 32.5 & 35.8 \\
    & & TPT
    & 17.2 & 19.0 & 25.4 & 42.7 & 20.0 & 42.7 & 48.3 & 49.2 & 50.2 & 42.5 & 57.8 & 38.0 & 30.2 & 25.2 & 32.5 & 36.1 \\
    & & TDA
    & 23.4 & 24.3 & 32.6 & 42.3 & 20.2 & 42.0 & 46.9 & 48.5 & 48.3 & 41.4 & 57.0 & 32.0 & 33.0 & \second{33.4} & 33.4 & 37.2 \\
    & & BCA
    & 12.5 & 13.5 & 25.9 & 39.9 & 19.2 & 40.5 & 45.3 & 47.9 & 46.8 & 40.7 & 55.8 & 27.1 & 31.6 & 31.6 & 31.3 & 34.0 \\
    & & DMN-ZS
    & 19.8 & 22.0 & 29.8 & 42.5 & 18.3 & 43.4 & 48.3 & 49.1 & 51.1 & 43.9 & 56.7 & 35.8 & 31.8 & 26.5 & 31.3 & 36.7 \\
    & & DPE
    & 22.7 & 24.2 & 30.0 & 45.5 & 22.2 & 44.8 & 50.7 & 50.0 & 50.8 & 43.6 & 59.6 & 36.6 & 31.5 & 27.9 & 34.2 & 38.3 \\
    & & BAT
    & 21.4 & 25.2 & 29.9 & 50.4 & 25.7 & 48.2 & 54.9 & 52.1 & 50.7 & 48.2 & \second{63.5} & 43.3 & 34.8 & 31.8 & 37.2 & 41.2 \\
    & & BITTA
    & 26.7 & 28.7 & 35.2 & 50.7 & 26.1 & 49.0 & 55.2 & 52.6 & 51.4 & 48.7 & \best{63.7} & 46.6 & 35.3 & 30.8 & 37.8 & 42.6 \\
    & & Drift-TTA-online (SF)
    & \second{38.7} & \second{38.3} & \second{36.9} & \second{53.5} & \best{39.3} & \second{53.2} & \best{59.1} & \second{55.5} & \second{58.2} & \best{50.2} & 62.2 & \best{52.3} & \second{41.6} & \best{33.7} & \second{38.5} & \second{47.4} \\
    & & Drift-TTA-online (SD)
    & \best{45.3} & \best{47.8} & \best{42.9} & \best{54.3} & \second{39.1} & \best{53.2} & \second{58.0} & \best{57.3} & \best{59.2} & \second{49.0} & 61.7 & \second{48.2} & \best{43.7} & 30.4 & \best{41.6} & \best{48.8} \\

    \bottomrule
    \end{tabular*}
    \endgroup
\end{table}

\paragraph{Calibration.}
Accuracy alone does not show whether a model's confidence remains calibrated after adaptation.
Table~\ref{tab:ece_calibration_mean_summary} reports mean expected calibration error (ECE) for the unadapted models, PEA and BITTA, and the fixed and online variants of Drift-TTA.
For both source dependent and source free training, the better Drift-TTA variant has lower mean ECE than the corresponding prior TTA baseline across all model--dataset settings.
Appendix~\ref{app:ece} provides the metric definition, evaluation protocol, and complete per-corruption results.

\begin{table}[!t]
    \centering
    \caption{\textbf{Mean expected calibration error (ECE, \%, $\downarrow$) on the corruption benchmarks at severity 5.}
    PEA and BITTA are the ViT and CLIP baselines, respectively.
    ``Adapter only'' uses the frozen Drift-TTA adapter; ``+ Online'' adds entropy-based gate calibration.
    For each setting, $\Delta_{\mathrm{SD}}$ and $\Delta_{\mathrm{SF}}$ select the lower ECE of these two variants and report its reduction from the corresponding baseline in percentage.
    Green $\downarrow$ values indicate lower ECE.
    Teal, orange, purple, and gray denote no adaptation, source dependent (SD), source free (SF), and the prior TTA baselines.}
    \label{tab:ece_calibration_mean_summary}
    \begingroup
    \scriptsize
    \setlength{\tabcolsep}{4pt}
    \definecolor{ECEBase}{RGB}{223,240,238}
    \definecolor{ECEPrior}{RGB}{238,238,238}
    \definecolor{ECESD}{RGB}{255,241,224}
    \definecolor{ECESF}{RGB}{239,230,247}
    \definecolor{ECESDDark}{RGB}{210,105,0}
    \definecolor{ECESFDark}{RGB}{123,44,191}
    \definecolor{ECEGain}{RGB}{0,110,60}
    \newcommand{\gain}[1]{\textcolor{ECEGain}{\textbf{$\downarrow$#1}}}
    \begin{tabular}{@{}lcccccc@{}}
    \toprule
    & \multicolumn{3}{c}{\textbf{ViT}}
      & \multicolumn{3}{c}{\textbf{CLIP}} \\
    \cmidrule(lr){2-4}
    \cmidrule(lr){5-7}
    \textbf{Method / deployment}
      & \textbf{CIFAR-10-C} & \textbf{CIFAR-100-C} & \textbf{ImageNet-C}
      & \textbf{CIFAR-10-C} & \textbf{CIFAR-100-C} & \textbf{ImageNet-C} \\
    \midrule
    \cellcolor{ECEBase}No-TTA
      & 13.51 & 10.10 & 11.54 & 4.87 & 6.30 & 4.12 \\
    \cellcolor{ECEPrior}PEA (ViT) / BITTA (CLIP)
      & 11.44 & 18.00 & 33.44 & 13.58 & 14.70 & 18.04 \\
    \midrule
    \cellcolor{ECESD}Adapter only (SD)
      & 11.72 & 8.47 & 17.82 & 3.33 & 10.33 & 4.51 \\
    \cellcolor{ECESD}$+$ Online gate (SD)
      & 8.21 & 8.06 & 13.03 & 3.53 & 5.01 & 12.89 \\
    \cellcolor{ECESD}\textcolor{ECESDDark}{\textbf{$\Delta_{\mathrm{SD}}$}}
      & \gain{3.23} & \gain{9.94} & \gain{20.41}
      & \gain{10.25} & \gain{9.69} & \gain{13.53} \\
    \midrule
    \cellcolor{ECESF}Adapter only (SF)
      & 14.50 & 9.96 & 17.43 & 3.86 & 9.96 & 3.14 \\
    \cellcolor{ECESF}$+$ Online gate (SF)
      & 10.71 & 9.53 & 14.72 & 2.28 & 5.77 & 8.01 \\
    \cellcolor{ECESF}\textcolor{ECESFDark}{\textbf{$\Delta_{\mathrm{SF}}$}}
      & \gain{0.73} & \gain{8.47} & \gain{18.72}
      & \gain{11.30} & \gain{8.93} & \gain{14.90} \\
    \bottomrule
    \end{tabular}
    \endgroup
\end{table}

\paragraph{Accuracy.} 
Table 3 compares Drift-TTA with representative TTA baselines on CIFAR-100-C at severity 5. For ViT-B/16, source-dependent Drift-TTA-online reaches $78.2\%$ mean accuracy, compared with $61.6\%$ without adaptation and $75.7\%$ for the strongest compared baseline, PEA. For CLIP ViT-B/16, it reaches $48.8\%$, compared with $35.8\%$ without adaptation and $42.6\%$ for BITTA. Source-free Drift-TTA-online reaches $74.9\%$ and $47.4\%$, respectively. Full CIFAR-10-C and ImageNet-C results are provided in Appendix ~\ref{app:additional}.

\paragraph{Representation structure.}
Figure~\ref{fig:umap-clip-frost} provides a qualitative view of how Drift-TTA changes the CLIP image representation under Frost corruption.
Without adaptation, samples from several classes overlap in the projected space.
The fixed source dependent and source free adapters produce more distinct class groups, and online gate calibration further improves this separation.
The clearest structure is obtained with the source dependent adapter and online TTA.

\begin{figure}[!t]
    \centering
    % \includesvg[width=\linewidth,inkscapearea=drawing]{imgs/clip_frost}
    \includegraphics[width=\linewidth]{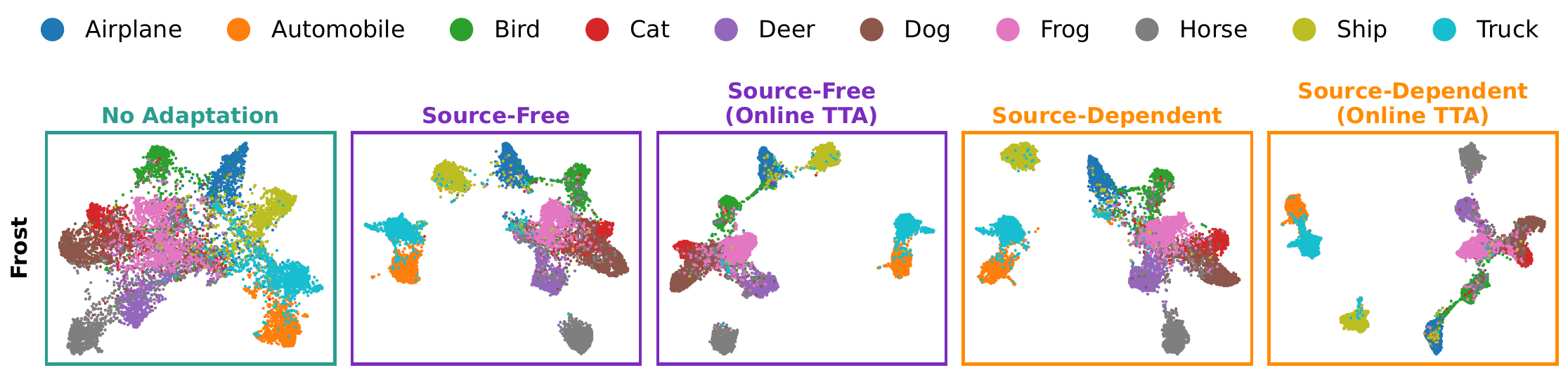}
    \caption{\textbf{UMAP of CLIP ViT-B/16 image embeddings on Frost-corrupted CIFAR-10-C at severity 5.}
    Columns show no adaptation, source-free Drift-TTA without and with online TTA, and source-dependent Drift-TTA without and with online TTA.
    Points are colored by their ground-truth class.
    Each Drift-TTA adapter is trained without Frost under the leave-one-family-out protocol, and each panel is projected independently using UMAP with Euclidean distance.}
    \label{fig:umap-clip-frost}
\end{figure}

\section{Conclusion}

In this work, we studied hidden trajectories of neural networks for test-time adaptation under distribution shift. We observed that clean and corrupted trajectories become more separable as corruption severity increases, motivating \emph{Drift-TTA}. Our approach uses low-rank adapters to correct trajectory drift while keeping the pretrained backbone frozen. Drift-TTA supports both source-dependent and source-free training, and can be used either as a fixed correction or with lightweight online adaptation using unlabeled test data. We also provided a probabilistic interpretation of trajectory alignment and theoretical bounds for offline generalization and source-free correction. Experiments on CIFAR-10-C, CIFAR-100-C, and ImageNet-C with supervised ViTs and CLIP demonstrate improved robustness to corruption, while reducing expected calibration error (ECE) relative to prior TTA baselines. Overall, these results highlight hidden trajectory alignment as an effective signal for test-time adaptation.

\section*{Acknowledgments}
We sincerely thank our colleagues Hesam Salehipour and Mehran Ebrahimi for their thoughtful feedback, valuable discussions, and insightful suggestions, which helped improve this work.

\section*{AI Use Statement}
We used generative AI tools to improve the clarity and readability
of the manuscript, assist in writing mathematical proofs, and
summarize existing literature. All AI-assisted content was reviewed
and verified by the authors. The authors take full responsibility
for the final content, including the mathematical arguments,
claims, and references.

\bibliography{ref}
\bibliographystyle{plainnat}

\appendix

\section{Method Details}
\label{app:implementation}

\subsection{Adapter Implementation}
\label{app:adapter-impl}
Each adapter is realized as a forward hook on a target linear layer, so no model code is modified. The hook computes the low-rank correction $\mA_l\mB_l\vx$ and adds it to the layer's output, which already contains $\mW_l\vx+\vb_l$. A single flag enables or disables all hooks at once, which makes the paired ``adapters-on versus adapters-off'' evaluation trivial. The up-projection $\mA_l$ is zero-initialized and the down-projection $\mB_l$ is drawn from $\mathcal{N}(0,10^{-3})$, so the branch is an exact no-op at initialization; only $\mA_l,\mB_l$ receive gradients. Each trained set of adapter factors is stored as a single checkpoint per variant. We reiterate that $\mA_l\mB_l$ is a \emph{learned} side branch and not a factorization of $\mW_l$: no singular-value decomposition or low-rank approximation of the pretrained weights is ever computed. In the supervised ViT the adapted layers are the two MLP projections and the attention output projection of each block (\texttt{mlp.fc1}, \texttt{mlp.fc2}, \texttt{attn.proj}), excluding the fused \texttt{qkv} projection; in the CLIP image tower they are the two MLP projections of each residual attention block (\texttt{mlp.c\_fc}, \texttt{mlp.c\_proj}).

\subsection{Training Details}
\label{app:training}
Adapters are trained with Adam at learning rate $10^{-3}$ and batch size $64$ ($32$ on ImageNet). The source budget is $2{,}000$ images for CIFAR and $3{,}000$ for ImageNet (about three per class). The objective uses uniform trajectory weights $w_l{=}1$ over the aligned blocks, a decision weight $\lambda_{\mathrm{dec}}{=}1.0$, a control weight $\lambda_{\mathrm{ctrl}}{=}10^{-3}$, and a quietness weight $\lambda_{\mathrm{quiet}}{=}0.05$. Supervised runs use a fixed number of epochs (five for CIFAR, two for ImageNet); the CLIP runs train to $E_{\max}=9$ and select the checkpoint on the grid $\{1,3,5,7,9\}$ by label-free validation loss.

% ---------------------------------------------------------------------
\section{Probabilistic View and Generalization Account}
\label{app:theory}

\subsection{Weighted Gaussian Likelihood and MAP}
\label{app:gaussian-map}
Let $\mathcal T$ be the aligned blocks. Write $\mathbf h_l^0$ and $\mathbf o^0$ for the frozen hidden states and decisions, and $\mathbf h_l^\theta$ and $\mathbf o^\theta$ for their adapted counterparts. The nonnegative weights $w_l$, $\lambda_{\mathrm{dec}}$, $\lambda_{\mathrm{quiet}}$, and $\lambda_{\mathrm{ctrl}}$ are fixed. Define $\operatorname{col}$ to concatenate vector blocks and set
\begin{equation}
\begin{aligned}
t=\operatorname{col}\bigl(&
\{\sqrt{w_l}\mathbf h_l^0(\mathbf x^{\mathrm{tgt}})\}_{l\in\mathcal T},
\sqrt{\lambda_{\mathrm{dec}}}\mathbf o^0(\mathbf x^{\mathrm{tgt}}),\\
&\{\sqrt{\lambda_{\mathrm{quiet}}w_l}\mathbf h_l^0(\mathbf x^{\mathrm{anc}})\}_{l\in\mathcal T},
\sqrt{\lambda_{\mathrm{quiet}}\lambda_{\mathrm{dec}}}\mathbf o^0(\mathbf x^{\mathrm{anc}})\bigr),\\
f_\theta(v)=\operatorname{col}\bigl(&
\{\sqrt{w_l}\mathbf h_l^\theta(\mathbf x)\}_{l\in\mathcal T},
\sqrt{\lambda_{\mathrm{dec}}}\mathbf o^\theta(\mathbf x),\\
&\{\sqrt{\lambda_{\mathrm{quiet}}w_l}\mathbf h_l^\theta(\mathbf x^{\mathrm{anc}})\}_{l\in\mathcal T},
\sqrt{\lambda_{\mathrm{quiet}}\lambda_{\mathrm{dec}}}\mathbf o^\theta(\mathbf x^{\mathrm{anc}})\bigr).
\end{aligned}
\label{eq:app-weighted-stacking}
\end{equation}
Omit zero-weight blocks and let $d_t\ge1$ be the resulting dimension. Teacher and anchor readings are fixed observations during optimization. Let $\Omega(\theta)=\sum_{l\in\mathcal S}\sum_{q\in\mathcal L_l}\|\mathbf A_{l,q}\mathbf B_{l,q}\|_F^2$ be the actual control penalty in~\eqref{eq:control-loss}.

\begin{lemma}[Gaussian likelihood and regularized MAP]
\label{lem:gaussian-map}
The construction~\eqref{eq:app-weighted-stacking} satisfies $\|t-f_\theta(v)\|_2^2=\ell_{\mathrm{data},\theta}(z)$. Suppose $\Theta\subset\mathbb R^{d_\theta}$ is compact with positive Lebesgue volume and all outputs and $\Omega$ are continuous on $\Theta$, where $d_\theta$ is the number of parameter coordinates. For fixed sample size $N\ge1$, the prior density
\begin{equation}
\pi_N(\theta)=\frac{\mathbf1_\Theta(\theta)
e^{-N\lambda_{\mathrm{ctrl}}\Omega(\theta)}}{A_N},
\qquad
A_N=\int_\Theta e^{-N\lambda_{\mathrm{ctrl}}\Omega(u)}\,du
\label{eq:app-control-prior}
\end{equation}
is proper. Under the product observation likelihood for $D_N=(z_1,\ldots,z_N)$, its MAP solutions minimize
$\widehat{J}_N(\theta)=N^{-1}\sum_i \ell_{\mathrm{data},\theta}(z_i)+\lambda_{\mathrm{ctrl}}\Omega(\theta)$.
\end{lemma}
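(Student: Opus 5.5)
The proof has three parts, each essentially a direct verification: the stacking identity, properness of the prior, and the MAP characterization.

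\emph{Stacking identity.} The squared Euclidean norm of a concatenation is the sum of the squared norms of its blocks. Writing $t-f_\theta(v)$ blockwise using~\eqref{eq:app-weighted-stacking}, every block has the form $\sqrt{c}\,(\mathbf a^0-\mathbf a^\theta)$ with a nonnegative weight $c\in\{w_l,\lambda_{\mathrm{dec}},\lambda_{\mathrm{quiet}}w_l,\lambda_{\mathrm{quiet}}\lambda_{\mathrm{dec}}\}$. Each block therefore contributes $c\|\mathbf a^0-\mathbf a^\theta\|_2^2$. Summing the blocks gives exactly $\mathcal L_{\mathrm{align}}(\theta;z)+\lambda_{\mathrm{quiet}}\mathcal L_{\mathrm{quiet}}(\theta;z)$ from~\eqref{eq:alignment-loss} and~\eqref{eq:quiet-loss}, which is $\ell_{\mathrm{data},\theta}(z)$. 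Blocks with zero weight contribute nothing, so omitting them changes only $d_t$, not the value. The $\operatorname{sg}[\cdot]$ operator is irrelevant to the function values, since teacher and anchor readings are fixed observations.

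\emph{Properness.} $\Omega$ is continuous on the compact set $\Theta$, so it attains a finite maximum $M$, and it is nonnegative as a sum of squared Frobenius norms. Hence $e^{-N\lambda_{\mathrm{ctrl}}M}\le e^{-N\lambda_{\mathrm{ctrl}}\Omega}\le 1$ on $\Theta$. This gives $0<e^{-N\lambda_{\mathrm{ctrl}}M}\,\mathrm{vol}(\Theta)\le A_N\le \mathrm{vol}(\Theta)<\infty$, where compactness gives finite volume and positive volume gives the strict lower bound. The integrand is continuous, hence measurable, so $\pi_N$ is a well-defined probability density.

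\emph{MAP characterization.} Take the likelihood of $D_N$ to be $\prod_i \mathcal N(t_i;f_\theta(v_i),\tfrac12 I_{d_t})$, so each factor is $\pi^{-d_t/2}\exp(-\|t_i-f_\theta(v_i)\|_2^2)$. By Bayes' rule the posterior on $\Theta$ is proportional to
\[
\exp\Bigl(-\textstyle\sum_i \ell_{\mathrm{data},\theta}(z_i)-N\lambda_{\mathrm{ctrl}}\Omega(\theta)\Bigr)=\exp\bigl(-N\widehat J_N(\theta)\bigr).
\]
The proportionality constants are the evidence, $A_N$, and $\pi^{-Nd_t/2}$, none of which depends on $\theta$. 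The posterior vanishes off $\Theta$.

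The main point requiring care is making this identification rigorous. The evidence must be positive and finite so that the posterior exists. This holds because the integrand is continuous, strictly positive and bounded on a compact set of positive volume. The map $x\mapsto e^{-Nx}$ is strictly decreasing, so the maximizers of the posterior density over $\Theta$ are exactly the minimizers of $\widehat J_N$ over $\Theta$. Both sets are nonempty by continuity and compactness, via Weierstrass. I would also note that the scaling $N\lambda_{\mathrm{ctrl}}$ in the prior exponent is what yields the per-sample normalization in $\widehat J_N$. The case $N=1$ recovers~\eqref{eq:alignment-map}.
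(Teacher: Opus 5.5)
Your proposal is correct and follows the paper's proof essentially step for step. Both arguments use the blockwise expansion of the stacked residual, the bounds $0<\operatorname{vol}(\Theta)e^{-N\lambda_{\mathrm{ctrl}}M}\le A_N\le\operatorname{vol}(\Theta)<\infty$, Bayes' rule with $\theta$-independent constants (the paper writes the negative log-posterior as $N\widehat J_N(\theta)+C_N$, where you argue via proportionality and monotonicity of $x\mapsto e^{-Nx}$), compactness for existence of a minimizer, and $N=1$ for~\eqref{eq:alignment-map}.
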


\begin{proof}
Squaring the concatenated residual and summing the block norms gives exactly the terms of $\mathcal L_{\mathrm{align}}+\lambda_{\mathrm{quiet}}\mathcal L_{\mathrm{quiet}}$. The chosen Gaussian density is
\[
p(t\mid v,\theta)=\pi^{-d_t/2}\exp[-\|t-f_\theta(v)\|^2].
\]
Continuity and compactness imply $0\le\Omega(\theta)\le M_\Omega<\infty$. Consequently
$0<\operatorname{vol}(\Theta)e^{-N\lambda_{\mathrm{ctrl}}M_\Omega}\le A_N\le\operatorname{vol}(\Theta)<\infty$.
Let $T_N=(t_1,\ldots,t_N)$ and $V_N=(v_1,\ldots,v_N)$. Bayes' rule gives, on $\Theta$,
\begin{equation}
\begin{aligned}
-\log p(\theta\mid T_N,V_N)
&=\sum_{i=1}^N[-\log p(t_i\mid v_i,\theta)]-\log\pi_N(\theta)
+\log p(T_N\mid V_N)\\
&=N\widehat J_N(\theta)+C_N,
\end{aligned}
\label{eq:app-map-proof}
\end{equation}
where $C_N=Nd_t\log\pi/2+\log A_N+\log p(T_N\mid V_N)$ is independent of $\theta$. The evidence is finite and positive because the likelihood is bounded and strictly positive and the prior is proper. Minimizing this expression proves the assertion; continuity on $\Theta$ ensures a minimizer exists. Taking $N=1$ proves~\eqref{eq:alignment-map}.
\end{proof}

The product likelihood is an analytical observation model; this algebra does not establish that the actual readings were independently sampled from its Gaussians. With a uniform prior, MAP minimizes data fit alone. With~\eqref{eq:app-control-prior}, the prior depends on $\theta$ and cannot be dropped from optimization. Its exponent uses the actual product penalty $\|\mathbf A\mathbf B\|_F^2$, not $\|\theta\|^2$. A Gaussian likelihood in $t$ also does not imply a Gaussian posterior in the nonlinear adapter parameters.

\subsection{Variational Change of Distribution}
\label{app:shift-variational}
For probability measures $Q\ll P$, define $\mathrm{KL}(Q\Vert P)=\int\log(dQ/dP)\,dQ$. The following lemma is stated for a general loss so that it applies both to alignment and to entropy.

\begin{lemma}[KL variational transfer bound]
\label{lem:variational-transfer}
Let $L\ge0$ be measurable, $\mathbb E_PL$ and $\mathbb E_QL$ finite, and $\kappa=\mathrm{KL}(Q\Vert P)<\infty$. Suppose $Z_\eta=\mathbb E_Pe^{\eta L}<\infty$ for at least one $\eta>0$. For each such $\eta$, define $dP_\eta=e^{\eta L}dP/Z_\eta$. Then
\begin{equation}
\begin{aligned}
\mathrm{KL}(Q\Vert P_\eta)&=\kappa-\eta\mathbb E_QL+\log Z_\eta,\\
\mathbb E_QL&\le\mathbb E_PL+
\frac{\kappa+\log\mathbb E_Pe^{\eta(L-\mathbb E_PL)}}{\eta}.
\end{aligned}
\label{eq:app-variational-transfer}
\end{equation}
The inequality becomes an equality precisely when $Q=P_\eta$ for this $\eta$.
\end{lemma}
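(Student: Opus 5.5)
The plan is a direct density computation (the Donsker--Varadhan argument). First, for any $\eta>0$ with $Z_\eta<\infty$, $P_\eta$ is a probability measure equivalent to $P$, with density
\[
\frac{dP_\eta}{dP}=\frac{e^{\eta L}}{Z_\eta},
\]
which is strictly positive and finite $P$-a.s. Hence $Q\ll P$ implies $Q\ll P_\eta$, and by the chain rule for Radon--Nikodym derivatives
\[
\log\frac{dQ}{dP_\eta}=\log\frac{dQ}{dP}-\eta L+\log Z_\eta \qquad Q\text{-a.s.}
\]

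Integrating against $Q$ gives the identity $\mathrm{KL}(Q\Vert P_\eta)=\kappa-\eta\mathbb E_QL+\log Z_\eta$. The one point needing care is integrability, so that the split of the integral is legitimate. By assumption $\kappa<\infty$ and $\mathbb E_QL<\infty$, and $\log Z_\eta$ is a finite constant. One subtlety: $\log(dQ/dP)$ need not be $Q$-integrable in absolute value merely because its integral is finite. However, its negative part is always $Q$-integrable, since $\int(\log(dQ/dP))^-\,dQ\le\int(dQ/dP)^{-1}\,dQ\le 1$. Together with finiteness of $\kappa$, this makes it fully integrable. So the right-hand side is a sum of integrable terms, and linearity of the integral gives the identity.

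For the inequality, apply Gibbs' inequality $\mathrm{KL}(Q\Vert P_\eta)\ge0$. This can be justified via Jensen on the convex function $u\log u$ applied to $dQ/dP_\eta$ under $P_\eta$. Rearranging yields
\[
\eta\,\mathbb E_QL\le\kappa+\log Z_\eta .
\]
Now write
\[
\log Z_\eta=\eta\,\mathbb E_PL+\log\mathbb E_Pe^{\eta(L-\mathbb E_PL)},
\]
which is valid because $\mathbb E_PL$ is finite. Dividing by $\eta>0$ gives the stated bound.

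For the equality case, equality holds iff $\mathrm{KL}(Q\Vert P_\eta)=0$. By strict convexity in Jensen, this happens iff $dQ/dP_\eta=1$ $P_\eta$-a.s., i.e.\ $Q=P_\eta$. Conversely, if $Q=P_\eta$ the identity gives equality directly.

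I expect the main obstacle to be only the measure-theoretic integrability bookkeeping, namely that $\log(dQ/dP)$ is $Q$-integrable, rather than any real analytic difficulty. The rest is algebra.
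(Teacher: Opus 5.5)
Your proposal is correct and follows the paper's proof essentially step for step. Both use the Radon--Nikodym chain rule for $dQ/dP_\eta$, integrate it against $Q$ to get the identity, apply nonnegativity of $\mathrm{KL}(Q\Vert P_\eta)$ after centering $\log Z_\eta$ at $\eta\,\mathbb E_P L$ to get the bound, and use the zero-KL characterization for the equality case. Your bound on the negative part of $\log(dQ/dP)$, which makes it $Q$-integrable, fills in the paper's one-line assertion that ``all terms are integrable.''
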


\begin{proof}
Since $1\le Z_\eta<\infty$, $P_\eta$ is a probability measure equivalent to $P$. The Radon--Nikodym chain rule gives, $Q$-almost surely,
\[
\log\frac{dQ}{dP_\eta}
=\log\frac{dQ}{dP}-\eta L+\log Z_\eta.
\]
All terms are integrable under the assumptions, so integration proves the identity. Nonnegativity of KL implies $\eta\mathbb E_QL\le\kappa+\log Z_\eta$. Subtract $\eta\mathbb E_PL$ from the right-hand exponential-moment expression and divide by $\eta>0$ to obtain the bound. KL is zero exactly when the two measures agree, giving the equality condition.
\end{proof}

Apply the lemma with $L(r)=\|r\|^2$, $P=P_\theta$, and $Q=Q_\theta$ will derive the KL divergence formula $\kappa_\theta-\eta R_{\mathrm{te}}(\theta)+\log Z_\eta=\mathrm{KL}(Q_\theta\Vert P_{\theta,\eta})\ge0$. Taking the infimum over admissible $\eta$ proves~\eqref{eq:offline-test-bound}. These are distributions of the same residual function under two triplet laws, so both sides use the same teacher/anchor construction and fixed $\theta$. A test triplet distribution must specify its reference views even when those references are unavailable to the deployed algorithm.

The expression $\eta\mathbb E_QL-\mathrm{KL}(Q\Vert P)\le\log Z_\eta$ has the same variational structure as an ELBO. Here $Z_\eta$ normalizes an exponential loss moment, rather than a predictive likelihood. The transfer result follows from this identity, independently of the MAP argument. It requires distributional information through $\kappa$; fitting the offline objective alone does not determine that quantity.

\subsection{Gaussian Residual Bound}
\label{app:gaussian-transfer}
The actual offline residual law is an additional modeling assumption. The covariance $\sigma^2I$ below describes this law and need not equal the observation covariance $I/2$ used in Lemma~\ref{lem:gaussian-map}.

\begin{lemma}[Gaussian quadratic exponential moment]
\label{lem:gaussian-moment}
Let $r\sim\mathcal N(\mu,\Sigma)$ in $\mathbb R^{d_t}$, where $\Sigma$ is symmetric positive semidefinite. Define
\[
m=\|\mu\|^2+\operatorname{tr}\Sigma,
\qquad a=\operatorname{tr}(\Sigma^2)+2\mu^\top\Sigma\mu,
\qquad b=\|\Sigma\|_{\mathrm{op}}.
\]
Then $m=\mathbb E\|r\|^2$. If $b>0$ and $0<\eta<(2b)^{-1}$,
\begin{equation}
\log\mathbb E e^{\eta(\|r\|^2-m)}
\le\frac{a\eta^2}{1-2b\eta}.
\label{eq:app-gaussian-moment}
\end{equation}
In particular, if $\mu=0$ and $\Sigma=\sigma^2I$,
\begin{equation}
\log\mathbb E e^{\eta(\|r\|^2-d_t\sigma^2)}
=-\frac{d_t}{2}\log(1-2\sigma^2\eta)-d_t\sigma^2\eta
\le\frac{d_t\sigma^4\eta^2}{1-2\sigma^2\eta}.
\label{eq:app-isotropic-moment}
\end{equation}
\end{lemma}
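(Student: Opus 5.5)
The plan is to diagonalize $\Sigma$ and reduce the moment generating function of $\|r\|^2$ to a product of one-dimensional noncentral $\chi^2$ moment generating functions, each of which has a closed form.

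\textbf{Identity for $m$ and diagonalization.} Expanding $\|r\|^2$ gives
\[
\mathbb E\|r\|^2=\|\mu\|^2+2\mu^\top\mathbb E(r-\mu)+\mathbb E\|r-\mu\|^2=\|\mu\|^2+\operatorname{tr}\Sigma,
\]
which is the claim $m=\mathbb E\|r\|^2$. Next write $\Sigma=U\Lambda U^\top$ with eigenvalues $\lambda_j\in[0,b]$, and set $\nu=U^\top\mu$. Then $U^\top r$ has independent coordinates $\nu_j+\sqrt{\lambda_j}g_j$ with $g_j\sim\mathcal N(0,1)$. Since $\|r\|^2=\|U^\top r\|^2$, the expectation factorizes into one-dimensional terms.

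\textbf{The one-dimensional moment.} For $\eta<(2\lambda_j)^{-1}$, completing the square in the Gaussian integral gives
\[
\log\mathbb E e^{\eta(\nu+\sqrt{\lambda}g)^2}=-\tfrac12\log(1-2\lambda\eta)+\frac{\eta\nu^2}{1-2\lambda\eta}.
\]
If $\lambda_j=0$, this reads simply as $\eta\nu_j^2$. Subtracting $\eta(\nu_j^2+\lambda_j)$ gives the centered term
\[
\phi_j=-\tfrac12\log(1-2\lambda_j\eta)-\lambda_j\eta+\frac{2\lambda_j\eta^2\nu_j^2}{1-2\lambda_j\eta}.
\]

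\textbf{Bounding each term.} Put $x=2\lambda_j\eta\in[0,1)$. Using $-\log(1-x)=\sum_k x^k/k$, one gets
\[
-\log(1-x)-x=\sum_{k\ge2}\frac{x^k}{k}\le\frac{x^2}{2}\sum_{k\ge0}x^k=\frac{x^2}{2(1-x)}.
\]
Hence the log terms contribute at most $\lambda_j^2\eta^2/(1-2\lambda_j\eta)$. Because $\lambda_j\le b$, the denominators satisfy $1-2\lambda_j\eta\ge1-2b\eta>0$, so every term can be bounded with the common denominator $1-2b\eta$. Summing over $j$ and using $\sum_j\lambda_j^2=\operatorname{tr}(\Sigma^2)$ and $\sum_j\lambda_j\nu_j^2=\mu^\top\Sigma\mu$ yields
\[
\log\mathbb E e^{\eta(\|r\|^2-m)}\le\frac{\eta^2\bigl(\operatorname{tr}\Sigma^2+2\mu^\top\Sigma\mu\bigr)}{1-2b\eta}=\frac{a\eta^2}{1-2b\eta}.
\]

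\textbf{Isotropic case.} With $\mu=0$ and $\Sigma=\sigma^2I$, every $\lambda_j=\sigma^2$ and every $\nu_j=0$. The exact sum of the $\phi_j$ is then
\[
-\tfrac{d_t}{2}\log(1-2\sigma^2\eta)-d_t\sigma^2\eta,
\]
and the general bound with $a=d_t\sigma^4$ and $b=\sigma^2$ gives the stated inequality.

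\textbf{Main obstacle.} The only delicate point is the Gaussian integral that yields the noncentral moment formula. It requires justifying finiteness for $\eta<(2b)^{-1}$, which follows from the integrand being $\exp$ of a negative-definite quadratic. The degenerate eigenvalues $\lambda_j=0$, where $1-2\lambda_j\eta=1$, must also be handled separately.
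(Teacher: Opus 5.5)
Your proof is correct and follows essentially the same route as the paper: diagonalize $\Sigma$, use the closed-form one-dimensional noncentral moment from completing the square, bound $-\log(1-u)-u\le u^2/[2(1-u)]$ with $u=2\eta\lambda_j$, and replace each denominator by $1-2b\eta$ before summing. The only cosmetic difference is that you compute $\mathbb E\|r\|^2$ by expanding around $\mu$, whereas the paper reads it off the diagonalized representation.
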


\begin{proof}
Diagonalize $\Sigma=U\operatorname{diag}(\lambda_1,\ldots,\lambda_{d_t})U^\top$, where $U$ is orthogonal and $\lambda_j\ge0$. Writing $\nu=U^\top\mu$, the squared norm has the distribution $\sum_j(\nu_j+\sqrt{\lambda_j}g_j)^2$ for independent standard normal variables $g_j$. Its expectation is $m$. Completing the square in the one-dimensional Gaussian integral gives
\[
\mathbb E e^{\eta(\nu_j+\sqrt{\lambda_j}g_j)^2}
=(1-2\eta\lambda_j)^{-1/2}
\exp\!\left(\frac{\eta\nu_j^2}{1-2\eta\lambda_j}\right).
\]
Indeed, combining the exponent with $-g_j^2/2$ gives a quadratic with coefficient $-(1-2\eta\lambda_j)/2$; integrating its completed square gives the prefactor and displayed exponent. The formula also holds when $\lambda_j=0$. Independence and centering therefore give
\begin{equation}
\log\mathbb E e^{\eta(\|r\|^2-m)}
=\sum_j\left[
-\frac12\log(1-2\eta\lambda_j)-\eta\lambda_j
+\frac{2\eta^2\nu_j^2\lambda_j}{1-2\eta\lambda_j}\right].
\label{eq:app-gaussian-mgf-exact}
\end{equation}
For $0\le u<1$, the convergent power series satisfies
$-\log(1-u)-u=\sum_{j=2}^\infty u^j/j\le u^2/[2(1-u)]$.
Applying this with $u=2\eta\lambda_j$ bounds the first two terms by $\eta^2\lambda_j^2/(1-2\eta\lambda_j)$. Since $\lambda_j\le b$, summation bounds the whole expression by
$\eta^2\sum_j(\lambda_j^2+2\nu_j^2\lambda_j)/(1-2b\eta)=a\eta^2/(1-2b\eta)$.
Substitution of $\lambda_j=\sigma^2$ and $\nu_j=0$ proves~\eqref{eq:app-isotropic-moment}.
\end{proof}

\begin{corollary}[Gaussian transfer]
\label{cor:gaussian-transfer}
Suppose $P_\theta=\mathcal N(\mu,\Sigma)$ and $Q_\theta$ has finite second moment and finite $\kappa_\theta=\mathrm{KL}(Q_\theta\Vert P_\theta)$. With $a,b$ from Lemma~\ref{lem:gaussian-moment},
\begin{equation}
R_{\mathrm{te}}(\theta)\le R_{\mathrm{off}}(\theta)
+2\sqrt{a\kappa_\theta}+2b\kappa_\theta.
\label{eq:app-general-gaussian-transfer}
\end{equation}
The isotropic centered case is exactly~\eqref{eq:gaussian-transfer-bound}.
\end{corollary}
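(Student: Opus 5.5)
We combine Lemma~\ref{lem:variational-transfer} with the exponential-moment estimate of Lemma~\ref{lem:gaussian-moment}, and then optimize over $\eta$.

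\textbf{Step 1: the general transfer inequality.} Take $L(r)=\|r\|^2$, $P=P_\theta=\mathcal N(\mu,\Sigma)$ and $Q=Q_\theta$. The hypotheses of Lemma~\ref{lem:variational-transfer} hold. $\mathbb E_PL=m$ is finite, and $\mathbb E_QL=R_{\mathrm{te}}(\theta)$ is finite because $Q_\theta$ has finite second moment. Also $Z_\eta<\infty$ for every $0<\eta<(2b)^{-1}$ by the closed form in the proof of Lemma~\ref{lem:gaussian-moment}. Since $R_{\mathrm{off}}(\theta)=m$, Lemma~\ref{lem:variational-transfer} and~\eqref{eq:app-gaussian-moment} give, for every such $\eta$,
\[
R_{\mathrm{te}}(\theta)\le R_{\mathrm{off}}(\theta)+\frac{\kappa_\theta}{\eta}+\frac{a\eta}{1-2b\eta}.
\]

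\textbf{Step 2: choice of $\eta$.} Reparametrize by $s=1-2b\eta\in(0,1)$, or more directly guess the minimizer by analogy with the isotropic formula. Set $\eta=\sqrt{\kappa_\theta}/(\sqrt a+2b\sqrt{\kappa_\theta})$, which satisfies $2b\eta<1$. Then $1-2b\eta=\sqrt a/(\sqrt a+2b\sqrt{\kappa_\theta})$. The two terms become
\[
\frac{\kappa_\theta}{\eta}=\sqrt{\kappa_\theta}\,(\sqrt a+2b\sqrt{\kappa_\theta})=\sqrt{a\kappa_\theta}+2b\kappa_\theta,
\]
\[
\frac{a\eta}{1-2b\eta}=\frac{a\sqrt{\kappa_\theta}}{\sqrt a}=\sqrt{a\kappa_\theta}.
\]
Their sum is $2\sqrt{a\kappa_\theta}+2b\kappa_\theta$, which is slightly better than the right-hand side of~\eqref{eq:app-general-gaussian-transfer}. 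That inequality therefore follows.

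\textbf{Step 3: degenerate cases.} This bookkeeping is the main obstacle. If $\kappa_\theta=0$, then $Q_\theta=P_\theta$ and $R_{\mathrm{te}}=R_{\mathrm{off}}$ trivially. If $b=0$, then $\Sigma=0$ and $a=0$, so $P_\theta$ is a point mass at $\mu$. Finite KL then forces $Q_\theta=P_\theta$, and equality holds. If $a=0$ but $b>0$, this is impossible, since $a\ge\operatorname{tr}\Sigma^2\ge b^2>0$. Hence in the remaining case $a,b,\kappa_\theta>0$, and the chosen $\eta$ is admissible.

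\textbf{Step 4: isotropic specialization.} For $\mu=0$ and $\Sigma=\sigma^2I$ we have $a=d_t\sigma^4$ and $b=\sigma^2$. Substituting gives $2\sigma^2\sqrt{d_t\kappa_\theta}+2\sigma^2\kappa_\theta$, which is~\eqref{eq:gaussian-transfer-bound}. The choice of $\eta$ reduces to $\sqrt{\kappa_\theta}/[\sigma^2(\sqrt{d_t}+2\sqrt{\kappa_\theta})]$, as stated in the main text.
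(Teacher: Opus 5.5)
Your proposal is correct and follows the paper's proof essentially step for step. It combines Lemma~\ref{lem:variational-transfer} with the moment bound of Lemma~\ref{lem:gaussian-moment}, uses the same choice $\eta=\sqrt{\kappa_\theta}/(\sqrt a+2b\sqrt{\kappa_\theta})$, and handles the cases $\kappa_\theta=0$ and $b=0$ in the same way; your inequality $a\ge\operatorname{tr}\Sigma^2\ge b^2$ supplies the justification for ``$b>0$ implies $a>0$'' that the paper leaves implicit. One small slip: the sum $2\sqrt{a\kappa_\theta}+2b\kappa_\theta$ is exactly the right-hand side of \eqref{eq:app-general-gaussian-transfer}, not ``slightly better'' than it.
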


\begin{proof}
For $b>0$, one has $a>0$. Lemmas~\ref{lem:variational-transfer} and~\ref{lem:gaussian-moment} imply
\[
R_{\mathrm{te}}-R_{\mathrm{off}}
\le\frac{\kappa_\theta}{\eta}+\frac{a\eta}{1-2b\eta}.
\]
If $\kappa_\theta>0$, choose
$\eta=\sqrt{\kappa_\theta}/(\sqrt a+2b\sqrt{\kappa_\theta})$, which lies in $(0,(2b)^{-1})$.
Then $\kappa_\theta/\eta=\sqrt{a\kappa_\theta}+2b\kappa_\theta$ and $a\eta/(1-2b\eta)=\sqrt{a\kappa_\theta}$, proving the bound. If $\kappa_\theta=0$, then $Q_\theta=P_\theta$ and the risks agree; equivalently take $\eta\downarrow0$. If $b=0$, $P_\theta$ is a point mass, and finite KL again forces $Q_\theta=P_\theta$. Finally, for $\mu=0$, $\Sigma=\sigma^2I$, substitute $a=d_t\sigma^4$, $b=\sigma^2$, and $R_{\mathrm{off}}=d_t\sigma^2$.
\end{proof}

Writing $J_s(\theta)=R_s(\theta)+\lambda_{\mathrm{ctrl}}\Omega(\theta)$ for $s\in\{\mathrm{off},\mathrm{te}\}$ gives $J_{\mathrm{te}}-J_{\mathrm{off}}=R_{\mathrm{te}}-R_{\mathrm{off}}$. Thus the same transfer bound holds for the regularized objective. No estimate of the control penalty in terms of adapter rank is needed.

\section{Additional Results}
\label{app:additional}

The main text presents the primary CIFAR-100-C comparison in Table~\ref{tab:main_results_cifar100}. Here, Tables~\ref{tab:main_results_cifar10} and~\ref{tab:main_results_imagenet} provide the full per-corruption results on CIFAR-10-C and ImageNet-C, respectively, for supervised ViT and CLIP-based classification. These tables compare the source-free and source-dependent variants of Drift-TTA with the unadapted models and prior TTA baselines.

\begin{table}[!t]
    \centering
    \definecolor{bestgreen}{RGB}{0,120,55}
    \definecolor{secondblue}{RGB}{30,90,170}
    \newcommand{\best}[1]{\textcolor{bestgreen}{\textbf{#1}}}
    \newcommand{\second}[1]{\textcolor{secondblue}{\underline{#1}}}

    \caption{
    \textbf{Per-corruption test-time adaptation accuracy (\%) on CIFAR-10-C}
    for unimodal ViT and CLIP-based classification across the fifteen corruption
    types. The best result in each column is shown in
    \textcolor{bestgreen}{\textbf{green}} and the second best in
    \textcolor{secondblue}{\underline{blue}}.
    }
    \label{tab:main_results_cifar10}

    % \vspace{-0.4em}

    \begingroup
    \fontsize{5.8}{6.2}\selectfont
    \renewcommand{\arraystretch}{0.80}
    \setlength{\tabcolsep}{1.0pt}

    \newcommand{\rot}[1]{\rotatebox{65}{#1}}
    \newcommand{\NA}{--}
    \newcommand{\emptyvals}{%
        \NA & \NA & \NA & \NA & \NA &
        \NA & \NA & \NA & \NA & \NA &
        \NA & \NA & \NA & \NA & \NA & \NA%
    }

    \begin{tabular*}{\textwidth}{
        @{\extracolsep{\fill}}
        ccl*{16}{c}
        @{}
    }

    \toprule
    &
    & \textbf{Method}
    & \rot{Gaussian} & \rot{Shot} & \rot{Impulse} & \rot{Defocus} & \rot{Glass}
    & \rot{Motion} & \rot{Zoom} & \rot{Snow} & \rot{Frost} & \rot{Fog}
    & \rot{Brightness} & \rot{Contrast} & \rot{Elastic} & \rot{Pixelate} & \rot{JPEG}
    & \textbf{Mean} \\
    \midrule

    \multirow{19}{*}{\rotatebox{90}{\textbf{CIFAR-10-C}}}

    % ------------------------- ViT -------------------------
    & \multirow{9}{*}{\rotatebox{90}{\textbf{ViT}}}
    & ViT-Tiny
    & 43.8 & 49.8 & 69.3 & 82.1 & 71.0 & 82.3 & 84.6 & 91.8 & 89.5 & 87.5
    & \best{95.5} & 83.4 & 83.2 & 51.8 & 81.4 & 76.5 \\
    & & TENT
    & 43.8 & 50.1 & 69.6 & 81.6 & 71.2 & 82.6 & 84.6 & 91.7 & 89.7 & 88.0
    & 95.2 & 80.5 & 83.5 & 49.0 & 81.6 & 76.2 \\
    & & SAR
    & 45.4 & 54.7 & 70.0 & 81.9 & 69.8 & 81.7 & 83.9 & 89.0 & 87.5 & 83.6
    & 93.3 & 83.2 & 81.7 & 51.5 & 79.5 & 75.8 \\
    & & EATA
    & 43.8 & 49.8 & 69.3 & 82.1 & 71.0 & 82.3 & 84.6 & 91.8 & 89.5 & 87.5
    & \best{95.5} & 83.4 & 83.2 & 51.7 & 81.4 & 76.5 \\
    & & FOA
    & 62.7 & 71.6 & \second{77.5} & 88.2 & 76.4 & 87.7 & 91.1 & 91.6 & 90.9 & \second{89.1}
    & 95.2 & 85.3 & 85.2 & 76.5 & 81.3 & 83.3 \\
    & & CMF
    & 66.2 & 71.6 & 71.1 & 87.8 & 75.6 & 87.5 & 90.5 & 89.6 & 90.5 & 88.1
    & 94.8 & 86.3 & 83.3 & \best{83.3} & 83.2 & 83.3 \\
    & & PEA
    & 61.9 & 66.0 & \best{77.8} & 88.5 & 77.4 & 88.3 & 90.7 & 91.9 & 91.5 & \best{89.9}
    & \best{95.5} & 86.9 & 85.6 & \second{81.7} & 82.0 & 83.7 \\
    & & Drift-TTA (SF)
    & \second{82.0} & \second{80.8} & 73.7 & \second{92.8} & \best{85.9} & \best{90.3}
    & \second{93.9} & \second{92.7} & \second{92.9} & 88.8
    & \second{95.4} & \best{90.1} & \second{88.0} & 63.9 & \second{83.6} & \second{86.3} \\
    & & Drift-TTA (SD)
    & \best{83.8} & \best{85.2} & 76.2 & \best{93.4} & \second{82.7} & \second{90.0}
    & \best{94.3} & \best{92.7} & \best{93.0} & \second{89.1}
    & 95.3 & \second{87.6} & \best{88.6} & 72.2 & \best{84.1} & \best{87.2} \\

    \cmidrule(lr){2-19}

    % ------------------------- CLIP -------------------------
    & \multirow{10}{*}{\rotatebox{90}{\textbf{CLIP}}}
    & ViT-B/16
    & 38.0 & 41.7 & 54.4 & 71.7 & 40.9 & 67.9 & 73.6 & 73.9 & 77.4 & 70.3 & 84.5 & 62.3 & 53.8 & 47.6 & 59.4 & 61.2 \\
    & & TPT
    & 39.9 & 44.9 & 58.8 & 72.3 & 43.5 & 70.0 & 74.8 & 75.5 & 78.5 & 72.7 & 85.1 & 70.8 & 57.2 & 52.3 & 61.3 & 63.8 \\
    & & TDA
    & 42.9 & 46.0 & 63.3 & 72.3 & 46.9 & 69.8 & 74.7 & 77.5 & 79.4 & 71.6 & 85.9 & 62.3 & 60.5 & 63.7 & 62.9 & 65.3 \\
    & & BCA
    & 31.5 & 34.5 & 54.2 & 66.5 & 35.6 & 64.9 & 70.6 & 73.9 & 76.3 & 68.5 & 84.0 & 55.4 & 51.5 & 55.8 & 56.0 & 58.6 \\
    & & DMN-ZS
    & 45.0 & 49.5 & 59.8 & 73.6 & 48.8 & 70.9 & 74.8 & 77.1 & 77.5 & 70.1 & 86.2 & 67.6 & 62.8 & 55.3 & 62.1 & 65.4 \\
    & & DPE
    & 34.8 & 38.7 & 57.8 & 73.1 & 44.3 & 68.6 & 75.9 & 74.3 & 78.9 & 71.2 & 86.9 & 64.4 & 57.5 & 52.4 & 58.5 & 62.5 \\
    & & BAT
    & 60.5 & 65.5 & 63.4 & 80.1 & 52.3 & 80.4 & 82.0 & 83.1 & 83.6 & 81.3 & 89.6 & 82.4 & 67.5 & 60.5 & 68.1 & 73.3 \\
    & & BITTA
    & 62.6 & 67.4 & 65.0 & 80.6 & 56.3 & 80.7 & 82.3 & 83.5 & 84.3 & 81.9 & 89.9 & 83.3 & 69.1 & \second{65.5} & 69.3 & 74.8 \\
    & & Drift-TTA (SF)
    & \second{69.2} & \second{69.3} & \second{67.8} & \best{81.9} & \second{71.8} & \second{80.9} & \second{84.1} & \second{83.7} & \second{86.4} & \best{82.4} & \second{90.0} & \best{84.4} & \second{74.2} & 65.1 & \second{70.2} & \second{77.4} \\
    & & Drift-TTA (SD)
    & \best{73.8} & \best{78.3} & \best{75.4} & \second{81.7} & \best{74.6} & \best{81.1} & \best{84.7} & \best{84.4} & \best{87.0} & \second{82.3} & \best{90.5} & \second{84.4} & \best{75.4} & \best{68.6} & \best{71.1} & \best{79.5} \\

    \bottomrule
    \end{tabular*}
    \endgroup
\end{table}

\begin{table}[!t]
    \centering
    \definecolor{bestgreen}{RGB}{0,120,55}
    \definecolor{secondblue}{RGB}{30,90,170}
    \newcommand{\best}[1]{\textcolor{bestgreen}{\textbf{#1}}}
    \newcommand{\second}[1]{\textcolor{secondblue}{\underline{#1}}}

    \caption{
    \textbf{Per-corruption test-time adaptation accuracy (\%) on ImageNet-C}
    for unimodal ViT and CLIP-based classification across the fifteen corruption
    types. The best result in each column is shown in
    \textcolor{bestgreen}{\textbf{green}} and the second best in
    \textcolor{secondblue}{\underline{blue}}.
    }
    \label{tab:main_results_imagenet}

    % \vspace{-0.4em}

    \begingroup
    \fontsize{5.8}{6.2}\selectfont
    \renewcommand{\arraystretch}{0.80}
    \setlength{\tabcolsep}{1.0pt}

    \newcommand{\rot}[1]{\rotatebox{65}{#1}}
    \newcommand{\NA}{--}
    \newcommand{\emptyvals}{%
        \NA & \NA & \NA & \NA & \NA &
        \NA & \NA & \NA & \NA & \NA &
        \NA & \NA & \NA & \NA & \NA & \NA%
    }

    \begin{tabular*}{\textwidth}{
        @{\extracolsep{\fill}}
        ccl*{16}{c}
        @{}
    }

    \toprule
    &
    & \textbf{Method}
    & \rot{Gaussian} & \rot{Shot} & \rot{Impulse} & \rot{Defocus} & \rot{Glass}
    & \rot{Motion} & \rot{Zoom} & \rot{Snow} & \rot{Frost} & \rot{Fog}
    & \rot{Brightness} & \rot{Contrast} & \rot{Elastic} & \rot{Pixelate} & \rot{JPEG}
    & \textbf{Mean} \\
    \midrule

    \multirow{19}{*}{\rotatebox{90}{\textbf{ImageNet-C}}}

    % ------------------------- ViT -------------------------
    & \multirow{9}{*}{\rotatebox{90}{\textbf{ViT}}}
    & ViT-B/16
    & 56.7 & 56.8 & 57.5 & 46.9 & 35.6 & 53.1 & 44.8 & 62.2 & 62.5 & 65.7
    & 77.6 & 32.6 & 46.0 & 66.9 & 67.6 & 55.5 \\
    & & TENT
    & 57.1 & 58.1 & 59.2 & 44.7 & 43.2 & 56.6 & 50.6 & 62.8 & 60.5 & 65.2
    & 78.0 & 59.7 & 49.8 & 68.2 & 68.6 & 58.8 \\
    & & SAR
    & 59.9 & 62.2 & 62.8 & 54.1 & 54.1 & 59.1 & 54.5 & 63.5 & 65.6 & 65.3
    & 78.2 & 64.4 & 58.3 & 69.2 & 69.7 & 62.7 \\
    & & EATA
    & 57.3 & 59.1 & 59.9 & 53.6 & 49.4 & 58.2 & 51.8 & 63.0 & 62.9 & 65.7
    & 77.8 & 62.0 & 55.7 & 65.8 & 68.7 & 60.7 \\
    & & FOA
    & 61.5 & \best{63.5} & \best{64.3} & \second{56.9} & \second{55.1} & 61.0
    & \best{60.9} & 68.4 & \best{70.9} & \best{73.6}
    & \best{80.9} & \best{66.0} & 61.8 & \second{73.5} & \best{73.6} & \best{66.1} \\
    & & CMF
    & 60.0 & 61.2 & 60.9 & 56.6 & \best{56.8} & \second{62.4}
    & \second{60.8} & \best{69.3} & 67.9 & 72.7
    & 78.8 & \second{65.2} & \second{69.4} & \best{73.8} & \second{72.0} & \second{65.9} \\
    & & PEA
    & 57.7 & 58.4 & 58.9 & 53.2 & 50.4 & 59.8
    & 60.4 & \second{69.1} & \second{68.7} & \second{72.9}
    & \second{80.2} & 63.5 & \best{69.9} & 72.0 & \second{72.0} & 64.5 \\
    & & Drift-TTA (SF)
    & \best{62.0} & 61.4 & 61.9 & 56.5 & 40.5 & 58.7
    & 50.0 & 64.4 & 63.2 & 64.5
    & 77.1 & 55.7 & 49.4 & 68.7 & 69.0 & 60.2 \\
    & & Drift-TTA (SD)
    & \second{61.9} & \second{62.8} & \second{62.9} & \best{58.3} & 42.6 & \best{64.7}
    & 50.7 & 64.3 & 61.9 & 63.6
    & 77.4 & 53.0 & 47.9 & 70.0 & 68.1 & 60.7 \\

    \cmidrule(lr){2-19}

    % ------------------------- CLIP -------------------------
    & \multirow{10}{*}{\rotatebox{90}{\textbf{CLIP}}}
    & ViT-B/16
    & 13.0 & 14.4 & 13.7 & 24.1 & 15.8 & 25.0 & 26.6 & 32.6 & 30.7 & 37.0 & 55.0 & 17.6 & 13.6 & 33.0 & 33.5 & 25.7 \\
    & & TPT
    & 8.8 & 9.4 & 9.6 & 24.5 & 16.2 & 25.1 & 24.0 & 33.8 & 32.3 & 37.8 & 55.6 & 19.1 & 14.3 & 35.9 & 34.8 & 25.4 \\
    & & TDA
    & 9.6 & 11.5 & 11.2 & 24.8 & 14.6 & 23.9 & 25.0 & 35.1 & 32.8 & 38.7 & \second{56.5} & 16.3 & 14.2 & 38.8 & 33.8 & 25.8 \\
    & & BCA
    & 6.7 & 7.7 & 7.2 & 26.0 & 15.0 & 24.9 & 25.9 & 33.9 & \second{32.9} & 37.9 & 56.1 & 17.3 & 14.3 & 39.6 & 35.8 & 25.4 \\
    & & DMN-ZS
    & 11.8 & 12.2 & 12.2 & 23.2 & 15.0 & 24.0 & 23.1 & 31.5 & 29.7 & 36.5 & 53.1 & 16.9 & 12.1 & 31.6 & 33.1 & 24.4 \\
    & & DPE
    & 10.3 & 13.5 & 17.3 & 26.4 & 18.0 & 27.5 & 24.9 & 35.2 & \best{33.2} & 39.6 & \best{56.5} & 20.2 & 16.0 & 36.8 & 35.4 & 27.4 \\
    & & BAT
    & 20.2 & 23.5 & 19.9 & \second{27.2} & 21.0 & 31.2 & 28.7 & 36.3 & 31.5 & 41.0 & 56.4 & \second{26.2} & \best{23.9} & \second{39.9} & \second{39.2} & 31.1 \\
    & & BITTA
    & 22.5 & 24.1 & 22.4 & \best{28.0} & \best{23.9} & 31.9 & \best{29.3} & \second{36.9} & 32.2 & \best{41.9} & 55.9 & \best{26.4} & \second{22.7} & \best{40.4} & \best{39.7} & \second{31.9} \\
    & & Drift-TTA (SF)
    & \best{27.2} & \best{28.4} & \best{26.7} & 26.6 & \second{23.2} & \second{32.9} & \second{29.0} & 36.4 & 32.5 & 39.7 & 54.4 & 25.8 & 22.5 & 38.5 & 35.2 & \best{31.9} \\
    & & Drift-TTA (SD)
    & \second{26.1} & \second{27.2} & \second{26.4} & 26.4 & 20.2 & \best{33.1} & 28.0 & \best{37.6} & 32.9 & \second{41.1} & 53.8 & 25.8 & 20.5 & 39.0 & 34.3 & 31.5 \\

    \bottomrule
    \end{tabular*}
    \endgroup
\end{table}

\section{Single-Model Generalization to Unseen Corruption Families}
\label{app:unseen}

The per-corruption tables above follow the leave-one-family-out (LOFO) protocol, in which a separate Drift-TTA adapter is trained for each held-out corruption; a full fifteen-family table therefore aggregates fifteen independently trained adapters. Prior test-time adaptation methods instead deploy a \emph{single} model that must cope with any corruption, so this asymmetry could be raised against a direct comparison. To remove it, we train one Drift-TTA adapter on the fifteen standard corruption families and evaluate it, with no further training, on the four validation corruptions distributed with the -C benchmarks that are disjoint from the training families: \emph{speckle noise}, \emph{Gaussian blur}, \emph{spatter}, and \emph{saturate}. This is a strict, single-model, apples-to-apples setting: exactly as for the baselines, one model adapts to all four unseen corruptions, in both its offline form (the frozen adapter) and its online form (the adapter with entropy-based gate calibration), under source-dependent (SD) and source-free (SF) training. Tables~\ref{tab:unseen_imagenet}--\ref{tab:unseen_cifar100} report accuracy at severity~5 on ImageNet-C, CIFAR-10-C, and CIFAR-100-C, against the strongest single-model baseline for each backbone (PEA for ViT, BITTA for CLIP). Across all three benchmarks and both backbones, Drift-TTA improves over the unadapted model and over the baseline, showing that the trajectory-repair prior transfers to corruption families never seen during training.

\begin{table}[!t]
    \centering
    \definecolor{bestgreen}{RGB}{0,120,55}
    \definecolor{secondblue}{RGB}{30,90,170}
    \newcommand{\best}[1]{\textcolor{bestgreen}{\textbf{#1}}}
    \newcommand{\second}[1]{\textcolor{secondblue}{\underline{#1}}}

    \caption{
    \textbf{Test-time adaptation accuracy (\%) on ImageNet-C for the four held-out (unseen) corruption families at severity 5.}
    A single Drift-TTA adapter is trained on the fifteen standard corruption families and evaluated, without further training, on four disjoint validation corruptions. ``Drift-TTA'' is the frozen adapter (offline); ``Drift-TTA-online'' adds online entropy-based gate calibration; SD and SF denote source-dependent and source-free training. The best result in each column (within each backbone) is shown in \textcolor{bestgreen}{\textbf{green}} and the second best in \textcolor{secondblue}{\underline{blue}}.}
    \label{tab:unseen_imagenet}

    % \vspace{-0.2em}
    \begingroup
    \scriptsize
    \setlength{\tabcolsep}{4pt}

    \begin{tabular}{@{}ccl ccccc@{}}
    \toprule
    & & \textbf{Method}
    & \textbf{Speckle} & \textbf{Gaussian Blur} & \textbf{Spatter} & \textbf{Saturate} & \textbf{Mean} \\
    \midrule

    \multirow{12}{*}{\rotatebox{90}{\textbf{ImageNet-C}}}
    % ------------------------- ViT -------------------------
    & \multirow{6}{*}{\rotatebox{90}{\textbf{ViT}}}
    & ViT-B/16               & 64.3 & 44.9 & 69.1 & 73.4 & 62.9 \\
    & & PEA                  & 68.4 & 50.4 & \best{73.9} & \best{77.9} & \second{67.6} \\
    & & Drift-TTA (SF)       & 69.4 & 53.0 & 69.7 & 74.3 & 66.6 \\
    & & Drift-TTA (SD)       & \second{71.1} & \second{54.9} & 69.9 & 73.8 & 67.4 \\
    & & Drift-TTA-online (SF)& 69.8 & 54.5 & 69.9 & \second{74.5} & 67.2 \\
    & & Drift-TTA-online (SD)& \best{71.9} & \best{56.9} & \second{70.5} & 74.2 & \best{68.4} \\
    \cmidrule(lr){2-8}
    % ------------------------- CLIP -------------------------
    & \multirow{6}{*}{\rotatebox{90}{\textbf{CLIP}}}
    & ViT-B/16               & 25.8 & 21.3 & 38.3 & 49.5 & 33.7 \\
    & & BITTA                & 36.4 & 25.8 & 38.6 & \second{51.1} & 38.0 \\
    & & Drift-TTA (SF)       & 37.5 & 25.4 & 41.2 & \best{51.2} & 38.8 \\
    & & Drift-TTA (SD)       & \second{38.8} & \second{27.4} & \best{45.7} & 49.7 & \best{40.4} \\
    & & Drift-TTA-online (SF)& \best{39.1} & \best{27.9} & 41.5 & 50.7 & 39.8 \\
    & & Drift-TTA-online (SD)& 38.1 & 27.0 & \second{45.0} & 50.1 & \second{40.1} \\
    \bottomrule
    \end{tabular}
    \endgroup
\end{table}

\begin{table}[!t]
    \centering
    \definecolor{bestgreen}{RGB}{0,120,55}
    \definecolor{secondblue}{RGB}{30,90,170}
    \newcommand{\best}[1]{\textcolor{bestgreen}{\textbf{#1}}}
    \newcommand{\second}[1]{\textcolor{secondblue}{\underline{#1}}}

    \caption{
    \textbf{Test-time adaptation accuracy (\%) on CIFAR-10-C for the four held-out (unseen) corruption families at severity 5.}
    A single Drift-TTA adapter is trained on the fifteen standard corruption families and evaluated, without further training, on four disjoint validation corruptions. ``Drift-TTA'' is the frozen adapter (offline); ``Drift-TTA-online'' adds online entropy-based gate calibration; SD and SF denote source-dependent and source-free training. The best result in each column (within each backbone) is shown in \textcolor{bestgreen}{\textbf{green}} and the second best in \textcolor{secondblue}{\underline{blue}}.}
    \label{tab:unseen_cifar10}

    % \vspace{-0.2em}
    \begingroup
    \scriptsize
    \setlength{\tabcolsep}{4pt}

    \begin{tabular}{@{}ccl ccccc@{}}
    \toprule
    & & \textbf{Method}
    & \textbf{Speckle} & \textbf{Gaussian Blur} & \textbf{Spatter} & \textbf{Saturate} & \textbf{Mean} \\
    \midrule

    \multirow{12}{*}{\rotatebox{90}{\textbf{CIFAR-10-C}}}
    % ------------------------- ViT -------------------------
    & \multirow{6}{*}{\rotatebox{90}{\textbf{ViT}}}
    & ViT-Tiny               & 54.8 & 77.5 & 94.7 & 94.0 & 80.3 \\
    & & PEA                  & 70.3 & 87.0 & \second{95.1} & \best{94.7} & 86.8 \\
    & & Drift-TTA (SF)       & 76.6 & 92.8 & 95.0 & \second{94.6} & 89.8 \\
    & & Drift-TTA (SD)       & \best{88.4} & \best{94.1} & \best{95.3} & 94.1 & \best{93.0} \\
    & & Drift-TTA-online (SF)& 81.6 & 93.5 & 94.6 & 94.2 & 91.0 \\
    & & Drift-TTA-online (SD)& \second{88.3} & \second{93.8} & 95.1 & 94.1 & \second{92.8} \\
    \cmidrule(lr){2-8}
    % ------------------------- CLIP -------------------------
    & \multirow{6}{*}{\rotatebox{90}{\textbf{CLIP}}}
    & ViT-B/16               & 45.3 & 70.4 & 82.8 & 83.4 & 70.5 \\
    & & BITTA                & 68.1 & 81.7 & 87.7 & 89.2 & 81.7 \\
    & & Drift-TTA (SF)       & 66.2 & 80.6 & 86.2 & 87.8 & 80.2 \\
    & & Drift-TTA (SD)       & \second{79.7} & 81.5 & 87.1 & 87.7 & \second{84.0} \\
    & & Drift-TTA-online (SF)& 74.4 & \second{83.5} & \second{88.2} & \second{89.6} & 83.9 \\
    & & Drift-TTA-online (SD)& \best{80.4} & \best{84.1} & \best{89.4} & \best{90.2} & \best{86.0} \\
    \bottomrule
    \end{tabular}
    \endgroup
\end{table}

\begin{table}[!t]
    \centering
    \definecolor{bestgreen}{RGB}{0,120,55}
    \definecolor{secondblue}{RGB}{30,90,170}
    \newcommand{\best}[1]{\textcolor{bestgreen}{\textbf{#1}}}
    \newcommand{\second}[1]{\textcolor{secondblue}{\underline{#1}}}

    \caption{
    \textbf{Test-time adaptation accuracy (\%) on CIFAR-100-C for the four held-out (unseen) corruption families at severity 5.}
    A single Drift-TTA adapter is trained on the fifteen standard corruption families and evaluated, without further training, on four disjoint validation corruptions. ``Drift-TTA'' is the frozen adapter (offline); ``Drift-TTA-online'' adds online entropy-based gate calibration; SD and SF denote source-dependent and source-free training. The best result in each column (within each backbone) is shown in \textcolor{bestgreen}{\textbf{green}} and the second best in \textcolor{secondblue}{\underline{blue}}.}
    \label{tab:unseen_cifar100}

    % \vspace{-0.2em}
    \begingroup
    \scriptsize
    \setlength{\tabcolsep}{4pt}

    \begin{tabular}{@{}ccl ccccc@{}}
    \toprule
    & & \textbf{Method}
    & \textbf{Speckle} & \textbf{Gaussian Blur} & \textbf{Spatter} & \textbf{Saturate} & \textbf{Mean} \\
    \midrule

    \multirow{12}{*}{\rotatebox{90}{\textbf{CIFAR-100-C}}}
    % ------------------------- ViT -------------------------
    & \multirow{6}{*}{\rotatebox{90}{\textbf{ViT}}}
    & ViT-B/16               & 38.8 & 70.6 & 85.3 & 80.7 & 68.9 \\
    & & PEA                  & 64.3 & 80.9 & \best{87.3} & \best{85.5} & 79.5 \\
    & & Drift-TTA (SF)       & 71.1 & 83.9 & \second{87.1} & 84.2 & 81.6 \\
    & & Drift-TTA (SD)       & \best{80.7} & \second{84.4} & 86.4 & 81.8 & \second{83.4} \\
    & & Drift-TTA-online (SF)& 74.6 & 84.1 & 87.1 & \second{84.5} & 82.6 \\
    & & Drift-TTA-online (SD)& \second{80.3} & \best{84.6} & 86.9 & 81.9 & \best{83.4} \\
    \cmidrule(lr){2-8}
    % ------------------------- CLIP -------------------------
    & \multirow{6}{*}{\rotatebox{90}{\textbf{CLIP}}}
    & ViT-B/16               & 21.9 & 41.2 & 58.2 & 54.4 & 43.9 \\
    & & BITTA                & 30.2 & 50.9 & 60.1 & 59.3 & 50.1 \\
    & & Drift-TTA (SF)       & 35.6 & 52.1 & 62.6 & 59.4 & 52.4 \\
    & & Drift-TTA (SD)       & \best{50.4} & \second{55.3} & \second{63.2} & 59.9 & \second{57.2} \\
    & & Drift-TTA-online (SF)& 39.9 & 54.8 & 62.9 & \best{60.4} & 54.5 \\
    & & Drift-TTA-online (SD)& \second{50.0} & \best{57.0} & \best{64.1} & \second{60.1} & \best{57.8} \\
    \bottomrule
    \end{tabular}
    \endgroup
\end{table}

\section{Calibration (Expected Calibration Error)}
\label{app:ece}

Beyond accuracy, we report expected calibration error (ECE, \%, $\downarrow$) at severity~5 for
all three benchmarks and both backbones. For each setting we compare the unadapted model
(No-TTA) and the strongest prior test-time-adaptation baseline (PEA for the unimodal ViT, BITTA
for CLIP) against \emph{Drift-TTA} (the frozen repair adapter) and \emph{Drift-TTA-online} (the
same adapter with its per-branch gates tuned online by entropy minimization), under both
source-dependent (SD) and source-free (SF) training. ECE uses 15 equal-width bins over the
top-label confidence, on the same corrupted evaluation sets and configurations used for the
accuracy tables. The best SD and SF variants reduce mean ECE relative to PEA and BITTA across
all model--dataset settings. Table~\ref{tab:ece_calibration_mean_summary} summarizes these
results in the main text. Tables~\ref{tab:ece_cifar10_expanded}--\ref{tab:ece_imagenet_expanded}
give the full per-corruption breakdown.

\paragraph{ECE computation.} We use the standard binned estimator of \citet{guo2017calibration}.
For each test image the top-label confidence is $\hat{p}=\max_k \operatorname{softmax}(z)_k$ and the
prediction is correct when $\hat{y}=y$. Confidences are partitioned into $M{=}15$ equal-width bins
$\{B_m\}_{m=1}^{M}$ over $[0,1]$, and
$\mathrm{ECE}=\sum_{m=1}^{M}\frac{|B_m|}{N}\,\bigl|\operatorname{acc}(B_m)-\operatorname{conf}(B_m)\bigr|$,
reported as a percentage. For the unimodal ViT rows the logits $z$ are the classifier outputs; for
the CLIP rows $z$ are the temperature-scaled image--text cosine scores
($\mathrm{logit\_scale}\cdot\langle f_I,f_T\rangle$), i.e.\ the model's own predictive distribution.
Confidences are collected from each method's \emph{final} prediction (post-adaptation for the
adapted rows) on exactly the corrupted test sets used for the accuracy tables, all at severity~5, and binned with a single
shared implementation so every row uses an identical metric.

\paragraph{Baselines.} The PEA (ViT) and BITTA (CLIP) rows are obtained by running the authors'
official implementations under their reported settings and computing ECE with the same estimator;
no calibration-specific tuning is applied. PEA uses source-statistic alignment with a continual
stream at batch size~$64$; BITTA uses its bilateral information-aware objective with one gradient
step per batch at batch sizes $200/200/64$ (CIFAR-10-C/CIFAR-100-C/ImageNet-C). As a correctness
check, the recomputed top-1 accuracy of every baseline reproduces the corresponding entry in the
accuracy tables to the first decimal, confirming that the calibration numbers are measured on the
same models and predictions.

\begin{table}[!t]
    \centering
    \caption{\textbf{Expected calibration error (ECE, \%, $\downarrow$) on CIFAR-10-C at severity 5.}
    All 15 corruption families are reported for both backbones. PEA is the ViT prior-TTA baseline
    and BITTA is the CLIP prior-TTA baseline. Drift-TTA is shown with its adapter alone and with
    additional online gate tuning, under source-dependent (SD) and source-free (SF) training. In each
    $\Delta$ row, the lower ECE of the two variants is selected independently per column and subtracted
    from PEA/BITTA. Green $\downarrow$ values denote how many percentage points the selected ECE is
    below the baseline; red $\uparrow$ values denote how many points it is above. Method-cell tints follow the visualization palette:
    teal for no adaptation, orange for SD, purple for SF, and gray for PEA/BITTA.}
    \label{tab:ece_cifar10_expanded}
    \begingroup
    \fontsize{5.8}{6.2}\selectfont
    \renewcommand{\arraystretch}{0.80}
    \setlength{\tabcolsep}{1.0pt}
    \definecolor{ECEBase}{RGB}{223,240,238}
    \definecolor{ECEPrior}{RGB}{238,238,238}
    \definecolor{ECESD}{RGB}{255,241,224}
    \definecolor{ECESF}{RGB}{239,230,247}
    \definecolor{ECESDDark}{RGB}{210,105,0}
    \definecolor{ECESFDark}{RGB}{123,44,191}
    \definecolor{ECEGain}{RGB}{0,110,60}
    \definecolor{ECELoss}{RGB}{160,55,45}
    \newcommand{\rot}[1]{\rotatebox{65}{#1}}
    \newcommand{\gain}[1]{\textcolor{ECEGain}{\textbf{$\downarrow$#1}}}
    \newcommand{\loss}[1]{\textcolor{ECELoss}{$\uparrow$#1}}
    \begin{tabular*}{\textwidth}{@{\extracolsep{\fill}}cl*{16}{c}@{}}
    \toprule
    & \textbf{Method / deployment}
    & \rot{Gaussian} & \rot{Shot} & \rot{Impulse} & \rot{Defocus} & \rot{Glass}
    & \rot{Motion} & \rot{Zoom} & \rot{Snow} & \rot{Frost} & \rot{Fog}
    & \rot{Brightness} & \rot{Contrast} & \rot{Elastic} & \rot{Pixelate}
    & \rot{JPEG} & \textbf{Mean} \\
    \midrule
    \multirow{8}{*}{\rotatebox{90}{\textbf{ViT}}}
    & \cellcolor{ECEBase}No-TTA
    & 11.33 & 8.24 & 3.45 & 12.72 & 9.17 & 12.84 & 14.46 & 15.96 & 17.11 & 22.07 & 14.33 & 15.92 & 17.84 & 13.33 & 13.86 & 13.51 \\
    & \cellcolor{ECEPrior}PEA
    & 3.59 & 3.75 & 5.54 & 14.32 & 7.73 & 14.02 & 15.82 & 13.42 & 14.81 & 15.89 & 13.71 & 14.36 & 13.69 & 10.94 & 10.08 & 11.44 \\
    \cmidrule(lr){2-18}
    & \cellcolor{ECESD}Drift-TTA: adapter only (SD)
    & 10.11 & 9.97 & 4.92 & 13.85 & 9.55 & 12.50 & 14.64 & 12.68 & 14.71 & 16.82 & 12.35 & 13.61 & 13.69 & 6.17 & 10.30 & 11.72 \\
    & \cellcolor{ECESD}Drift-TTA: adapter $+$ online gate (SD)
    & 6.26 & 6.60 & 6.16 & 8.74 & 6.00 & 7.92 & 9.66 & 8.72 & 10.68 & 11.34 & 8.78 & 8.89 & 8.73 & 8.13 & 6.51 & 8.21 \\
    & \cellcolor{ECESD}\textcolor{ECESDDark}{\textbf{$\Delta$ best SD vs.\ PEA}}
    & \loss{2.67} & \loss{2.85} & \gain{0.62} & \gain{5.58} & \gain{1.73} & \gain{6.10} & \gain{6.16} & \gain{4.70} & \gain{4.13} & \gain{4.55} & \gain{4.93} & \gain{5.47} & \gain{4.96} & \gain{4.77} & \gain{3.57} & \gain{3.23} \\
    \cmidrule(lr){2-18}
    & \cellcolor{ECESF}Drift-TTA: adapter only (SF)
    & 12.02 & 9.88 & 8.88 & 16.73 & 16.31 & 14.94 & 16.49 & 14.85 & 18.06 & 19.98 & 13.53 & 17.49 & 16.73 & 7.79 & 13.74 & 14.50 \\
    & \cellcolor{ECESF}Drift-TTA: adapter $+$ online gate (SF)
    & 9.13 & 7.70 & 4.28 & 12.53 & 11.37 & 10.53 & 12.39 & 10.45 & 13.34 & 14.90 & 10.11 & 13.01 & 12.41 & 10.00 & 8.54 & 10.71 \\
    & \cellcolor{ECESF}\textcolor{ECESFDark}{\textbf{$\Delta$ best SF vs.\ PEA}}
    & \loss{5.54} & \loss{3.95} & \gain{1.26} & \gain{1.79} & \loss{3.64} & \gain{3.49} & \gain{3.43} & \gain{2.97} & \gain{1.47} & \gain{0.99} & \gain{3.60} & \gain{1.35} & \gain{1.28} & \gain{3.15} & \gain{1.54} & \gain{0.73} \\
    \midrule
    \multirow{8}{*}{\rotatebox{90}{\textbf{CLIP}}}
    & \cellcolor{ECEBase}No-TTA
    & 12.58 & 9.72 & 4.37 & 3.19 & 3.75 & 1.93 & 2.39 & 5.01 & 6.00 & 4.43 & 5.52 & 4.59 & 2.07 & 4.74 & 2.74 & 4.87 \\
    & \cellcolor{ECEPrior}BITTA
    & 23.00 & 18.35 & 21.61 & 10.32 & 25.52 & 8.99 & 9.41 & 7.70 & 7.27 & 8.81 & 4.33 & 6.91 & 16.62 & 17.47 & 17.42 & 13.58 \\
    \cmidrule(lr){2-18}
    & \cellcolor{ECESD}Drift-TTA: adapter only (SD)
    & 2.35 & 2.50 & 3.12 & 2.80 & 1.69 & 2.09 & 3.28 & 4.16 & 5.65 & 4.89 & 5.29 & 5.24 & 2.65 & 1.69 & 2.60 & 3.33 \\
    & \cellcolor{ECESD}Drift-TTA: adapter $+$ online gate (SD)
    & 10.12 & 5.33 & 8.25 & 3.08 & 3.61 & 2.78 & 0.90 & 1.02 & 1.31 & 0.74 & 1.88 & 1.58 & 2.34 & 6.39 & 3.61 & 3.53 \\
    & \cellcolor{ECESD}\textcolor{ECESDDark}{\textbf{$\Delta$ best SD vs.\ BITTA}}
    & \gain{20.65} & \gain{15.85} & \gain{18.49} & \gain{7.52} & \gain{23.83} & \gain{6.90} & \gain{8.51} & \gain{6.68} & \gain{5.96} & \gain{8.07} & \gain{2.45} & \gain{5.33} & \gain{14.28} & \gain{15.78} & \gain{14.82} & \gain{10.25} \\
    \cmidrule(lr){2-18}
    & \cellcolor{ECESF}Drift-TTA: adapter only (SF)
    & 3.08 & 3.25 & 1.87 & 3.19 & 4.19 & 2.23 & 4.30 & 5.28 & 5.57 & 5.83 & 5.13 & 5.35 & 1.90 & 3.81 & 2.85 & 3.86 \\
    & \cellcolor{ECESF}Drift-TTA: adapter $+$ online gate (SF)
    & 4.22 & 1.73 & 5.28 & 0.84 & 1.57 & 0.98 & 1.17 & 2.29 & 2.14 & 2.40 & 2.31 & 2.12 & 1.76 & 3.43 & 1.90 & 2.28 \\
    & \cellcolor{ECESF}\textcolor{ECESFDark}{\textbf{$\Delta$ best SF vs.\ BITTA}}
    & \gain{19.92} & \gain{16.62} & \gain{19.74} & \gain{9.48} & \gain{23.95} & \gain{8.01} & \gain{8.24} & \gain{5.41} & \gain{5.13} & \gain{6.41} & \gain{2.02} & \gain{4.79} & \gain{14.86} & \gain{14.04} & \gain{15.52} & \gain{11.30} \\
    \bottomrule
    \end{tabular*}
    \endgroup
\end{table}

\begin{table}[!t]
    \centering
    \caption{\textbf{Expected calibration error (ECE, \%, $\downarrow$) on CIFAR-100-C at severity 5.}
    All 15 corruption families are reported for both backbones. PEA is the ViT prior-TTA baseline
    and BITTA is the CLIP prior-TTA baseline. Drift-TTA is shown with its adapter alone and with
    additional online gate tuning, under source-dependent (SD) and source-free (SF) training. In each
    $\Delta$ row, the lower ECE of the two variants is selected independently per column and subtracted
    from PEA/BITTA. Green $\downarrow$ values denote how many percentage points the selected ECE is
    below the baseline; red $\uparrow$ values denote how many points it is above. Method-cell tints follow the visualization palette:
    teal for no adaptation, orange for SD, purple for SF, and gray for PEA/BITTA.}
    \label{tab:ece_cifar100_expanded}
    \begingroup
    \fontsize{5.8}{6.2}\selectfont
    \renewcommand{\arraystretch}{0.80}
    \setlength{\tabcolsep}{1.0pt}
    \definecolor{ECEBase}{RGB}{223,240,238}
    \definecolor{ECEPrior}{RGB}{238,238,238}
    \definecolor{ECESD}{RGB}{255,241,224}
    \definecolor{ECESF}{RGB}{239,230,247}
    \definecolor{ECESDDark}{RGB}{210,105,0}
    \definecolor{ECESFDark}{RGB}{123,44,191}
    \definecolor{ECEGain}{RGB}{0,110,60}
    \definecolor{ECELoss}{RGB}{160,55,45}
    \newcommand{\rot}[1]{\rotatebox{65}{#1}}
    \newcommand{\gain}[1]{\textcolor{ECEGain}{\textbf{$\downarrow$#1}}}
    \newcommand{\loss}[1]{\textcolor{ECELoss}{$\uparrow$#1}}
    \begin{tabular*}{\textwidth}{@{\extracolsep{\fill}}cl*{16}{c}@{}}
    \toprule
    & \textbf{Method / deployment}
    & \rot{Gaussian} & \rot{Shot} & \rot{Impulse} & \rot{Defocus} & \rot{Glass}
    & \rot{Motion} & \rot{Zoom} & \rot{Snow} & \rot{Frost} & \rot{Fog}
    & \rot{Brightness} & \rot{Contrast} & \rot{Elastic} & \rot{Pixelate}
    & \rot{JPEG} & \textbf{Mean} \\
    \midrule
    \multirow{8}{*}{\rotatebox{90}{\textbf{ViT}}}
    & \cellcolor{ECEBase}No-TTA
    & 16.84 & 18.22 & 9.46 & 7.62 & 19.09 & 6.03 & 7.31 & 5.53 & 7.23 & 5.37 & 6.30 & 10.77 & 4.40 & 16.24 & 11.11 & 10.10 \\
    & \cellcolor{ECEPrior}PEA
    & 17.69 & 17.49 & 22.72 & 14.10 & 18.13 & 16.91 & 13.93 & 16.51 & 16.74 & 23.06 & 13.98 & 20.14 & 22.56 & 19.43 & 16.64 & 18.00 \\
    \cmidrule(lr){2-18}
    & \cellcolor{ECESD}Drift-TTA: adapter only (SD)
    & 6.54 & 7.49 & 6.61 & 8.24 & 5.23 & 7.75 & 8.69 & 9.86 & 10.32 & 12.94 & 7.99 & 16.79 & 7.91 & 5.93 & 4.75 & 8.47 \\
    & \cellcolor{ECESD}Drift-TTA: adapter $+$ online gate (SD)
    & 9.08 & 8.41 & 8.10 & 7.84 & 9.38 & 7.26 & 7.81 & 7.19 & 7.90 & 6.37 & 7.09 & 5.28 & 7.50 & 10.41 & 11.25 & 8.06 \\
    & \cellcolor{ECESD}\textcolor{ECESDDark}{\textbf{$\Delta$ best SD vs.\ PEA}}
    & \gain{11.15} & \gain{10.00} & \gain{16.11} & \gain{6.26} & \gain{12.90} & \gain{9.65} & \gain{6.12} & \gain{9.32} & \gain{8.84} & \gain{16.69} & \gain{6.89} & \gain{14.86} & \gain{15.06} & \gain{13.50} & \gain{11.89} & \gain{9.94} \\
    \cmidrule(lr){2-18}
    & \cellcolor{ECESF}Drift-TTA: adapter only (SF)
    & 6.91 & 4.98 & 12.48 & 8.40 & 5.92 & 8.47 & 9.40 & 9.55 & 13.70 & 18.48 & 9.56 & 22.20 & 7.57 & 6.70 & 5.08 & 9.96 \\
    & \cellcolor{ECESF}Drift-TTA: adapter $+$ online gate (SF)
    & 7.22 & 6.46 & 24.90 & 7.48 & 7.28 & 6.48 & 7.15 & 7.22 & 8.56 & 12.23 & 7.08 & 14.90 & 6.99 & 9.99 & 9.04 & 9.53 \\
    & \cellcolor{ECESF}\textcolor{ECESFDark}{\textbf{$\Delta$ best SF vs.\ PEA}}
    & \gain{10.78} & \gain{12.51} & \gain{10.24} & \gain{6.62} & \gain{12.21} & \gain{10.43} & \gain{6.78} & \gain{9.29} & \gain{8.18} & \gain{10.83} & \gain{6.90} & \gain{5.24} & \gain{15.57} & \gain{12.73} & \gain{11.56} & \gain{8.47} \\
    \midrule
    \multirow{8}{*}{\rotatebox{90}{\textbf{CLIP}}}
    & \cellcolor{ECEBase}No-TTA
    & 3.51 & 3.53 & 4.24 & 7.49 & 1.00 & 9.04 & 8.50 & 11.97 & 9.83 & 5.68 & 11.34 & 6.74 & 4.23 & 2.66 & 4.79 & 6.30 \\
    & \cellcolor{ECEPrior}BITTA
    & 21.45 & 21.64 & 15.67 & 11.77 & 19.94 & 11.95 & 10.75 & 10.81 & 12.00 & 12.91 & 6.79 & 11.55 & 17.92 & 19.52 & 15.88 & 14.70 \\
    \cmidrule(lr){2-18}
    & \cellcolor{ECESD}Drift-TTA: adapter only (SD)
    & 10.52 & 13.07 & 7.89 & 12.06 & 9.46 & 12.41 & 12.56 & 13.20 & 12.55 & 9.87 & 11.80 & 9.36 & 10.43 & 2.55 & 7.27 & 10.33 \\
    & \cellcolor{ECESD}Drift-TTA: adapter $+$ online gate (SD)
    & 3.93 & 4.92 & 2.79 & 5.51 & 4.54 & 3.92 & 5.91 & 6.03 & 6.44 & 3.73 & 5.54 & 4.48 & 4.11 & 9.80 & 3.48 & 5.01 \\
    & \cellcolor{ECESD}\textcolor{ECESDDark}{\textbf{$\Delta$ best SD vs.\ BITTA}}
    & \gain{17.52} & \gain{16.72} & \gain{12.88} & \gain{6.26} & \gain{15.40} & \gain{8.03} & \gain{4.84} & \gain{4.78} & \gain{5.56} & \gain{9.18} & \gain{1.25} & \gain{7.07} & \gain{13.81} & \gain{16.97} & \gain{12.40} & \gain{9.69} \\
    \cmidrule(lr){2-18}
    & \cellcolor{ECESF}Drift-TTA: adapter only (SF)
    & 8.54 & 9.05 & 6.71 & 11.41 & 9.60 & 12.33 & 12.96 & 13.67 & 13.44 & 9.10 & 12.93 & 8.53 & 10.39 & 4.03 & 6.71 & 9.96 \\
    & \cellcolor{ECESF}Drift-TTA: adapter $+$ online gate (SF)
    & 4.54 & 5.60 & 3.65 & 6.36 & 7.91 & 7.11 & 6.93 & 8.03 & 7.46 & 5.17 & 7.40 & 4.95 & 6.06 & 2.50 & 2.89 & 5.77 \\
    & \cellcolor{ECESF}\textcolor{ECESFDark}{\textbf{$\Delta$ best SF vs.\ BITTA}}
    & \gain{16.91} & \gain{16.04} & \gain{12.02} & \gain{5.41} & \gain{12.03} & \gain{4.84} & \gain{3.82} & \gain{2.78} & \gain{4.54} & \gain{7.74} & \loss{0.61} & \gain{6.60} & \gain{11.86} & \gain{17.02} & \gain{12.99} & \gain{8.93} \\
    \bottomrule
    \end{tabular*}
    \endgroup
\end{table}

\begin{table}[!t]
    \centering
    \caption{\textbf{Expected calibration error (ECE, \%, $\downarrow$) on ImageNet-C at severity 5.}
    All 15 corruption families are reported for both backbones. PEA is the ViT prior-TTA baseline
    and BITTA is the CLIP prior-TTA baseline. Drift-TTA is shown with its adapter alone and with
    additional online gate tuning, under source-dependent (SD) and source-free (SF) training. In each
    $\Delta$ row, the lower ECE of the two variants is selected independently per column and subtracted
    from PEA/BITTA. Green $\downarrow$ values denote how many percentage points the selected ECE is
    below the baseline; red $\uparrow$ values denote how many points it is above. Method-cell tints follow the visualization palette:
    teal for no adaptation, orange for SD, purple for SF, and gray for PEA/BITTA.}
    \label{tab:ece_imagenet_expanded}
    \begingroup
    \fontsize{5.8}{6.2}\selectfont
    \renewcommand{\arraystretch}{0.80}
    \setlength{\tabcolsep}{1.0pt}
    \definecolor{ECEBase}{RGB}{223,240,238}
    \definecolor{ECEPrior}{RGB}{238,238,238}
    \definecolor{ECESD}{RGB}{255,241,224}
    \definecolor{ECESF}{RGB}{239,230,247}
    \definecolor{ECESDDark}{RGB}{210,105,0}
    \definecolor{ECESFDark}{RGB}{123,44,191}
    \definecolor{ECEGain}{RGB}{0,110,60}
    \definecolor{ECELoss}{RGB}{160,55,45}
    \newcommand{\rot}[1]{\rotatebox{65}{#1}}
    \newcommand{\gain}[1]{\textcolor{ECEGain}{\textbf{$\downarrow$#1}}}
    \newcommand{\loss}[1]{\textcolor{ECELoss}{$\uparrow$#1}}
    \begin{tabular*}{\textwidth}{@{\extracolsep{\fill}}cl*{16}{c}@{}}
    \toprule
    & \textbf{Method / deployment}
    & \rot{Gaussian} & \rot{Shot} & \rot{Impulse} & \rot{Defocus} & \rot{Glass}
    & \rot{Motion} & \rot{Zoom} & \rot{Snow} & \rot{Frost} & \rot{Fog}
    & \rot{Brightness} & \rot{Contrast} & \rot{Elastic} & \rot{Pixelate}
    & \rot{JPEG} & \textbf{Mean} \\
    \midrule
    \multirow{8}{*}{\rotatebox{90}{\textbf{ViT}}}
    & \cellcolor{ECEBase}No-TTA
    & 12.85 & 7.46 & 11.68 & 9.99 & 4.99 & 2.68 & 4.68 & 4.40 & 20.98 & 48.11 & 8.98 & 20.51 & 5.97 & 4.73 & 5.15 & 11.54 \\
    & \cellcolor{ECEPrior}PEA
    & 36.07 & 34.25 & 34.59 & 31.67 & 36.02 & 33.91 & 39.18 & 33.59 & 38.87 & 40.01 & 18.93 & 37.44 & 37.93 & 23.61 & 25.48 & 33.44 \\
    \cmidrule(lr){2-18}
    & \cellcolor{ECESD}Drift-TTA: adapter only (SD)
    & 21.28 & 20.22 & 21.28 & 18.71 & 9.40 & 19.38 & 11.17 & 15.51 & 18.80 & 33.80 & 14.82 & 25.63 & 7.65 & 16.92 & 12.75 & 17.82 \\
    & \cellcolor{ECESD}Drift-TTA: adapter $+$ online gate (SD)
    & 16.22 & 14.85 & 16.31 & 13.41 & 4.16 & 14.25 & 5.60 & 10.99 & 14.06 & 29.33 & 11.86 & 19.08 & 2.57 & 12.98 & 9.74 & 13.03 \\
    & \cellcolor{ECESD}\textcolor{ECESDDark}{\textbf{$\Delta$ best SD vs.\ PEA}}
    & \gain{19.85} & \gain{19.40} & \gain{18.28} & \gain{18.26} & \gain{31.86} & \gain{19.66} & \gain{33.58} & \gain{22.60} & \gain{24.81} & \gain{10.68} & \gain{7.07} & \gain{18.36} & \gain{35.36} & \gain{10.63} & \gain{15.74} & \gain{20.41} \\
    \cmidrule(lr){2-18}
    & \cellcolor{ECESF}Drift-TTA: adapter only (SF)
    & 19.31 & 15.02 & 16.28 & 15.23 & 6.41 & 13.72 & 8.28 & 15.60 & 24.63 & 43.98 & 15.29 & 34.32 & 6.41 & 13.53 & 13.40 & 17.43 \\
    & \cellcolor{ECESF}Drift-TTA: adapter $+$ online gate (SF)
    & 16.17 & 12.20 & 13.28 & 12.84 & 3.66 & 11.31 & 5.50 & 12.90 & 21.68 & 40.40 & 12.97 & 32.20 & 3.54 & 11.15 & 10.99 & 14.72 \\
    & \cellcolor{ECESF}\textcolor{ECESFDark}{\textbf{$\Delta$ best SF vs.\ PEA}}
    & \gain{19.90} & \gain{22.05} & \gain{21.31} & \gain{18.83} & \gain{32.36} & \gain{22.60} & \gain{33.68} & \gain{20.69} & \gain{17.19} & \loss{0.39} & \gain{5.96} & \gain{5.24} & \gain{34.39} & \gain{12.46} & \gain{14.49} & \gain{18.72} \\
    \midrule
    \multirow{8}{*}{\rotatebox{90}{\textbf{CLIP}}}
    & \cellcolor{ECEBase}No-TTA
    & 7.77 & 7.30 & 7.83 & 0.96 & 4.71 & 1.32 & 2.21 & 1.76 & 4.68 & 1.74 & 3.15 & 2.77 & 8.81 & 2.56 & 4.22 & 4.12 \\
    & \cellcolor{ECEPrior}BITTA
    & 18.78 & 20.19 & 20.47 & 19.62 & 21.08 & 16.61 & 19.12 & 16.07 & 20.13 & 15.59 & 10.25 & 16.13 & 23.39 & 15.56 & 17.56 & 18.04 \\
    \cmidrule(lr){2-18}
    & \cellcolor{ECESD}Drift-TTA: adapter only (SD)
    & 5.59 & 4.62 & 5.03 & 2.90 & 9.43 & 2.04 & 6.23 & 1.89 & 5.87 & 2.08 & 2.32 & 1.76 & 10.95 & 2.15 & 4.87 & 4.51 \\
    & \cellcolor{ECESD}Drift-TTA: adapter $+$ online gate (SD)
    & 15.64 & 14.28 & 14.08 & 12.44 & 18.78 & 11.51 & 14.68 & 9.58 & 14.38 & 8.89 & 6.44 & 9.74 & 19.86 & 11.44 & 11.56 & 12.89 \\
    & \cellcolor{ECESD}\textcolor{ECESDDark}{\textbf{$\Delta$ best SD vs.\ BITTA}}
    & \gain{13.19} & \gain{15.57} & \gain{15.44} & \gain{16.72} & \gain{11.65} & \gain{14.57} & \gain{12.89} & \gain{14.18} & \gain{14.26} & \gain{13.51} & \gain{7.93} & \gain{14.37} & \gain{12.44} & \gain{13.41} & \gain{12.69} & \gain{13.53} \\
    \cmidrule(lr){2-18}
    & \cellcolor{ECESF}Drift-TTA: adapter only (SF)
    & 2.05 & 1.90 & 3.43 & 1.31 & 4.53 & 3.13 & 1.91 & 1.82 & 5.27 & 1.64 & 2.37 & 3.26 & 8.52 & 3.47 & 2.55 & 3.14 \\
    & \cellcolor{ECESF}Drift-TTA: adapter $+$ online gate (SF)
    & 6.93 & 7.49 & 8.65 & 6.69 & 12.00 & 5.72 & 9.64 & 6.62 & 12.79 & 7.10 & 5.32 & 4.01 & 14.44 & 4.68 & 8.13 & 8.01 \\
    & \cellcolor{ECESF}\textcolor{ECESFDark}{\textbf{$\Delta$ best SF vs.\ BITTA}}
    & \gain{16.73} & \gain{18.29} & \gain{17.04} & \gain{18.31} & \gain{16.55} & \gain{13.48} & \gain{17.21} & \gain{14.25} & \gain{14.86} & \gain{13.95} & \gain{7.88} & \gain{12.87} & \gain{14.87} & \gain{12.09} & \gain{15.01} & \gain{14.90} \\
    \bottomrule
    \end{tabular*}
    \endgroup
\end{table}

\clearpage
\section{Hidden-Trajectory Drift under Corruption}
\label{app:trajectory-drift}

This section explains how each panel of Figure~\ref{fig:trajectory-drift} is constructed and how it should be interpreted. We first define the trajectory quantities shown in the figure, then examine whether the association between trajectory drift and prediction error remains after controlling for corruption severity. Figure~\ref{fig:trajectory-drift-imagenet} provides the corresponding analysis on ImageNet-C.

\paragraph{Experimental setup.}
We evaluate a frozen ViT-B/16 without test-time adaptation. CIFAR-100-C contains $N{=}10{,}000$ images per corruption condition, while ImageNet-C contains $N{=}50{,}000$. Each corrupted image is paired with its clean counterpart, and both are processed by the same frozen model. Consequently, trajectory drift and prediction error are measured on the same images. Panels~\textbf{(a)} and~\textbf{(e)} use one representative corruption family---Gaussian noise for CIFAR-100-C and fog for ImageNet-C---while panels~\textbf{(b)}--\textbf{(d)} summarize all fifteen corruption families.

\begin{figure}[t]
  \centering
  % \includesvg[
  %     width=\linewidth,
  %     inkscapearea=drawing
  % ]{imgs/fig1_hook_c100_gaussian}
  \includegraphics[width=\linewidth]{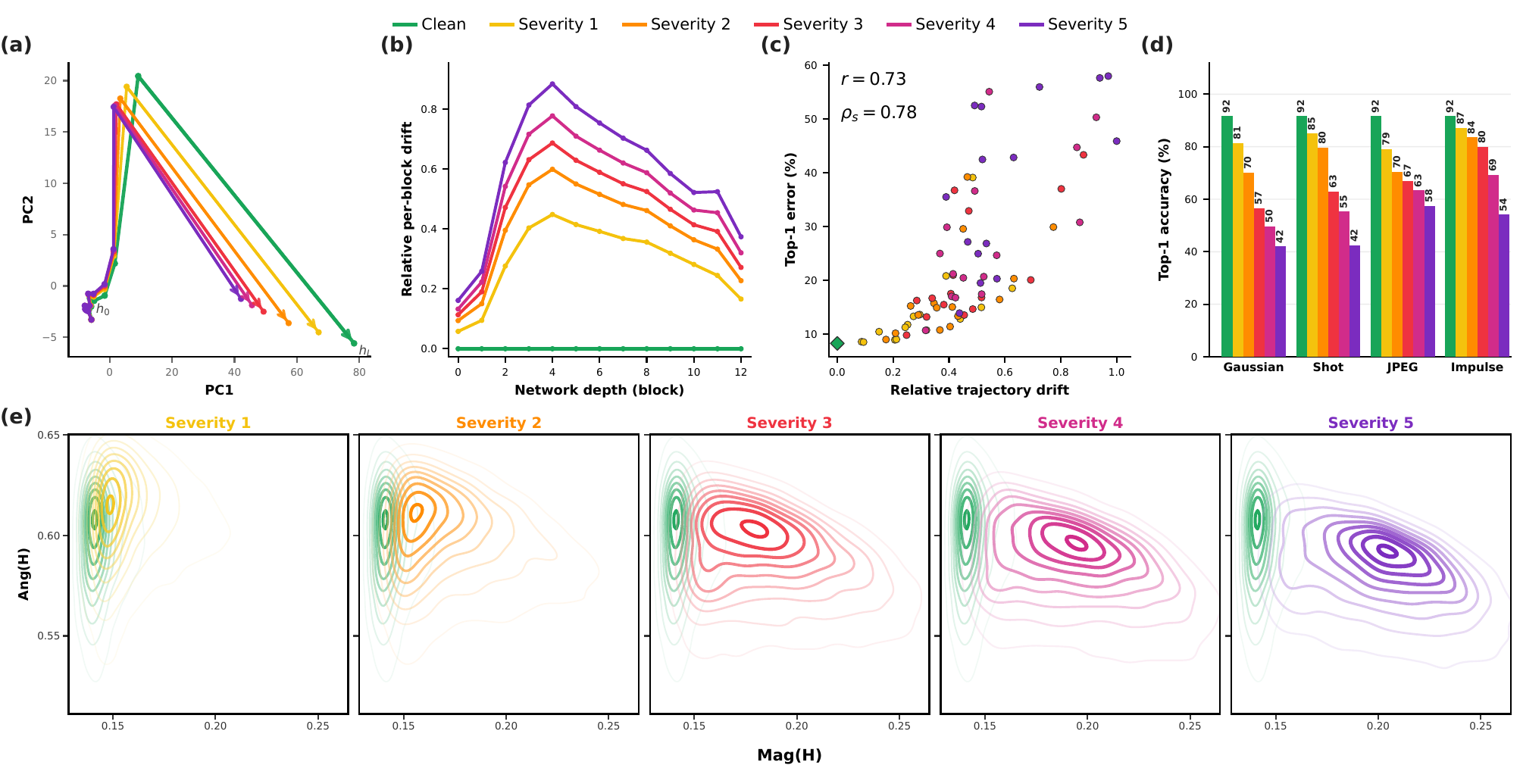}
  \caption{
  \textbf{Hidden-trajectory drift on CIFAR-100-C.}
  A frozen ViT-B/16 is evaluated on CIFAR-100-C without adaptation.
  \textbf{(a)} Mean hidden trajectories for clean inputs and Gaussian noise at severities~1--5, projected onto the first two principal components.
  \textbf{(b)} Relative clean-to-corrupted drift across network depth, averaged over all 15 corruption families.
  \textbf{(c)} Relative trajectory drift versus top-1 error across the 75 corruption-family and severity conditions (Pearson $r{=}0.73$; Spearman $\rho_s{=}0.78$).
  \textbf{(d)} Top-1 accuracy across severity for four corruption families.
  \textbf{(e)} Magnitude and angle feature distributions for clean inputs and Gaussian noise at severities~1--5.
  }
  \label{fig:trajectory-drift}
  \end{figure}

\paragraph{Panel (a): Mean hidden trajectories.}
For corruption family $c$ and severity $s$, let $\mathbf{x}_i^{c,s}$ denote the corrupted version of clean image $\mathbf{x}_i$. At trajectory position $l\in\{0,\ldots,L\}$, we average the hidden representation over all images:
\begin{equation}
\overline{\mathbf{h}}_{l}^{\,c,s}
=
\frac{1}{N}
\sum_{i=1}^{N}
\mathbf{h}_{l}\!\left(\mathbf{x}_{i}^{c,s}\right).
\label{eq:condition-mean-trajectory}
\end{equation}
This gives one mean trajectory for clean inputs and one for each corruption severity. The six trajectories are projected jointly onto the first two principal components. Each line therefore shows how the mean representation changes from the input embedding $\mathbf{h}_0$ to the final block representation $\mathbf{h}_L$. The projection is used only for visualization; all drift measurements are computed in the original representation space.

\paragraph{Panel (b): Relative drift across network depth.}
We first compute the mean norm of the clean representation at each trajectory position:
\begin{equation}
\nu_l
=
\frac{1}{N}
\sum_{i=1}^{N}
\left\lVert
\mathbf{h}_l(\mathbf{x}_i)
\right\rVert_2.
\label{eq:clean-trajectory-scale}
\end{equation}
The relative drift for corruption family $c$ and severity $s$ is then
\begin{equation}
\delta_l^{\,c,s}
=
\frac{
    \frac{1}{N}
    \sum_{i=1}^{N}
    \left\lVert
    \mathbf{h}_l(\mathbf{x}_i^{c,s})
    -
    \mathbf{h}_l(\mathbf{x}_i)
    \right\rVert_2
}{
    \nu_l
}.
\label{eq:relative-block-drift}
\end{equation}
The numerator measures how far the corrupted representation moves from its clean counterpart. Dividing by $\nu_l$ accounts for differences in representation scale across network depth. Panel~\textbf{(b)} averages this quantity over all fifteen corruption families and plots one curve per severity. The clean curve is zero because every clean representation is compared with itself.

\paragraph{Panel (c): Trajectory drift and prediction error.}
For each corruption-family and severity condition, we average the relative drift across the complete trajectory:
\begin{equation}
D^{c,s}
=
\frac{1}{L+1}
\sum_{l=0}^{L}
\delta_l^{\,c,s}.
\label{eq:relative-trajectory-drift}
\end{equation}
Panel~\textbf{(c)} contains one point for each of the $15\times5=75$ conditions. Its horizontal coordinate is the trajectory drift $D^{c,s}$, divided by the largest value among the $75$ conditions so that the displayed values lie between zero and one:
\begin{equation}
\widetilde{D}^{c,s}
=
\frac{
D^{c,s}
}{
\max_{c',s'}D^{c',s'}
}.
\label{eq:display-trajectory-drift}
\end{equation}
This final scaling changes only the displayed axis and does not affect either correlation. The vertical coordinate is the top-1 error over the same images. Pearson $r$ measures the linear association between drift and error, while Spearman $\rho_s$ measures whether conditions with larger drift generally have larger error, independent of a linear relationship. The clean point is shown at zero drift as a reference and is excluded from both calculations.

\paragraph{Panel (d): Accuracy across severity.}
Panel~\textbf{(d)} provides a direct view of the corresponding accuracy loss. We first retain corruption families whose accuracy decreases monotonically from clean through severity~5. Among these families, the panel shows the four with the largest mean accuracy loss across severities~1--5. Each group contains the clean accuracy followed by the accuracy at the five corruption severities.

\paragraph{Panel (e): Magnitude and angle distributions.}
Panel~\textbf{(e)} provides an image-level view of trajectory geometry. Each image is represented by the magnitude and angle features defined in Section~\ref{sec:trajectory}. The magnitude feature measures the average distance traveled between adjacent hidden states, normalized by the distance between $\mathbf{h}_0$ and $\mathbf{h}_L$. The angle feature similarly measures the average change in orientation between adjacent states, normalized by the angle between the trajectory endpoints. Each column compares the clean distribution with one corruption severity. Increasing separation between the contours indicates that individual corrupted images follow different hidden trajectories from their clean counterparts.

\paragraph{Controlling for corruption severity.}
Across the $75$ CIFAR-100-C conditions, relative trajectory drift and top-1 error have Pearson correlation $r{=}0.73$ and Spearman correlation $\rho_s{=}0.78$. Both quantities generally increase with corruption severity, so we additionally test whether the association remains when severity is held fixed.

Within each severity level, Pearson correlation remains positive and ranges from $0.59$ to $0.71$ across the fifteen corruption families. After subtracting the corresponding severity-level mean from both drift and error, their residual Pearson correlation is $r{=}0.63$. The corresponding partial Spearman correlation is $\rho_s{=}0.67$.

The five severity levels of a corruption family are related and should not be treated as independent observations. We therefore resample complete corruption-family profiles, retaining all five severities whenever a family is selected. This family-cluster bootstrap gives a $95\%$ confidence interval of $[0.39,0.83]$ for the severity-controlled Pearson correlation. A complementary permutation test that reassigns complete five-severity drift profiles across corruption families gives $p{<}0.01$. Thus, the observed association is not explained only by the shared increase of drift and error with corruption severity.

\paragraph{ImageNet-C.}
Figure~\ref{fig:trajectory-drift-imagenet} repeats the analysis on ImageNet-C, using fog as the representative corruption family in panels~\textbf{(a)} and~\textbf{(e)}. Across the $75$ conditions, relative trajectory drift has Pearson correlation $r{=}0.78$ and Spearman correlation $\rho_s{=}0.87$ with top-1 error. Within-severity Pearson correlations range from $0.52$ to $0.69$. After removing severity-level means, the residual Pearson correlation is $r{=}0.53$, while the partial Spearman correlation is $\rho_s{=}0.69$. The family-cluster confidence interval for the severity-controlled Pearson correlation is $[0.19,0.77]$, and the corresponding family-profile permutation test gives $p{<}0.01$.

\begin{figure}[p]
\centering
\includegraphics[width=\linewidth]{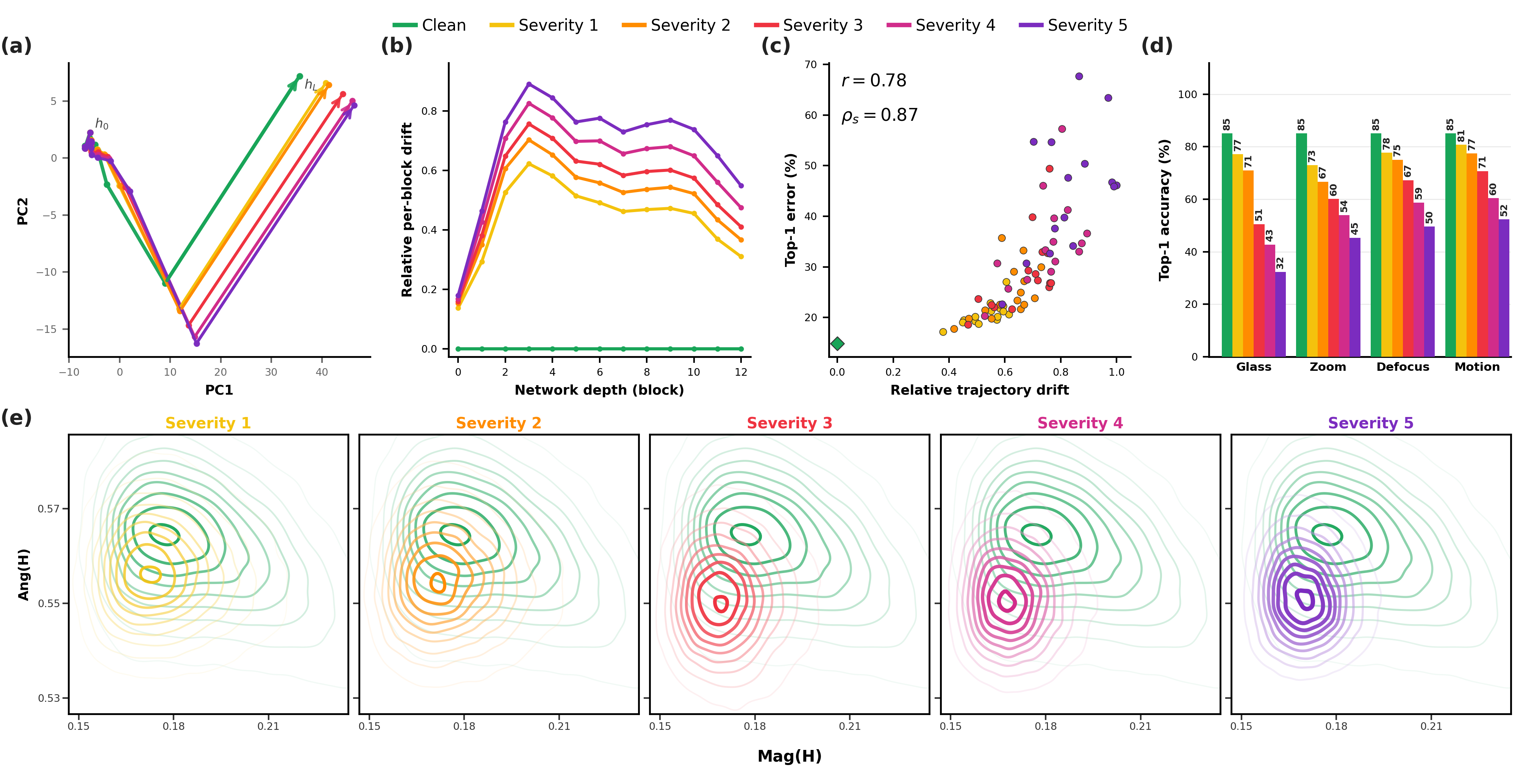}
\caption{
\textbf{Hidden-trajectory drift on ImageNet-C.}
A frozen ViT-B/16 is evaluated on the full ImageNet-C validation set without adaptation.
Panels~\textbf{(a)} and~\textbf{(e)} use fog as a representative corruption family, while panels~\textbf{(b)}--\textbf{(d)} summarize all fifteen corruption families following Figure~\ref{fig:trajectory-drift}.
Across the $75$ corruption-family and severity conditions, relative trajectory drift is associated with top-1 error (Pearson $r{=}0.78$; Spearman $\rho_s{=}0.87$).
The clean point in \textbf{(c)} is shown as a reference and is excluded from both correlations.
}
\label{fig:trajectory-drift-imagenet}
\end{figure}

% =====================================================================
\clearpage
\section{Class-Structure Visualizations under Adaptation}
\label{app:umap}
To visualize how adaptation changes the class structure of the representations used for classification, we project these features to two dimensions using UMAP and t-SNE and color each point by its true class. For each backbone, we show five model states: No Adaptation, source-free Drift-TTA without and with online gate adaptation, and source-dependent Drift-TTA without and with online gate adaptation. All panels use the complete CIFAR-10-C test set of $10{,}000$ images at severity~5 under the leave-one-family-out protocol. We show six corruption families spanning noise, blur, weather, and digital corruptions.

Each panel is projected independently, so absolute positions and axis directions should not be compared across panels. The intended comparison is the compactness and separation of the class distributions within each panel. UMAP and t-SNE provide complementary views and show the same qualitative progression: relative to No Adaptation, the fixed Drift-TTA adapters generally produce more compact and better-separated class distributions, while online gate adaptation further improves this structure. Figures~\ref{fig:umap-clip} and~\ref{fig:tsne-clip} show the CLIP image tower, while Figures~\ref{fig:umap-vit} and~\ref{fig:tsne-vit} show the ViT.

\begin{figure}[p]
\centering
\includegraphics[width=0.97\linewidth]{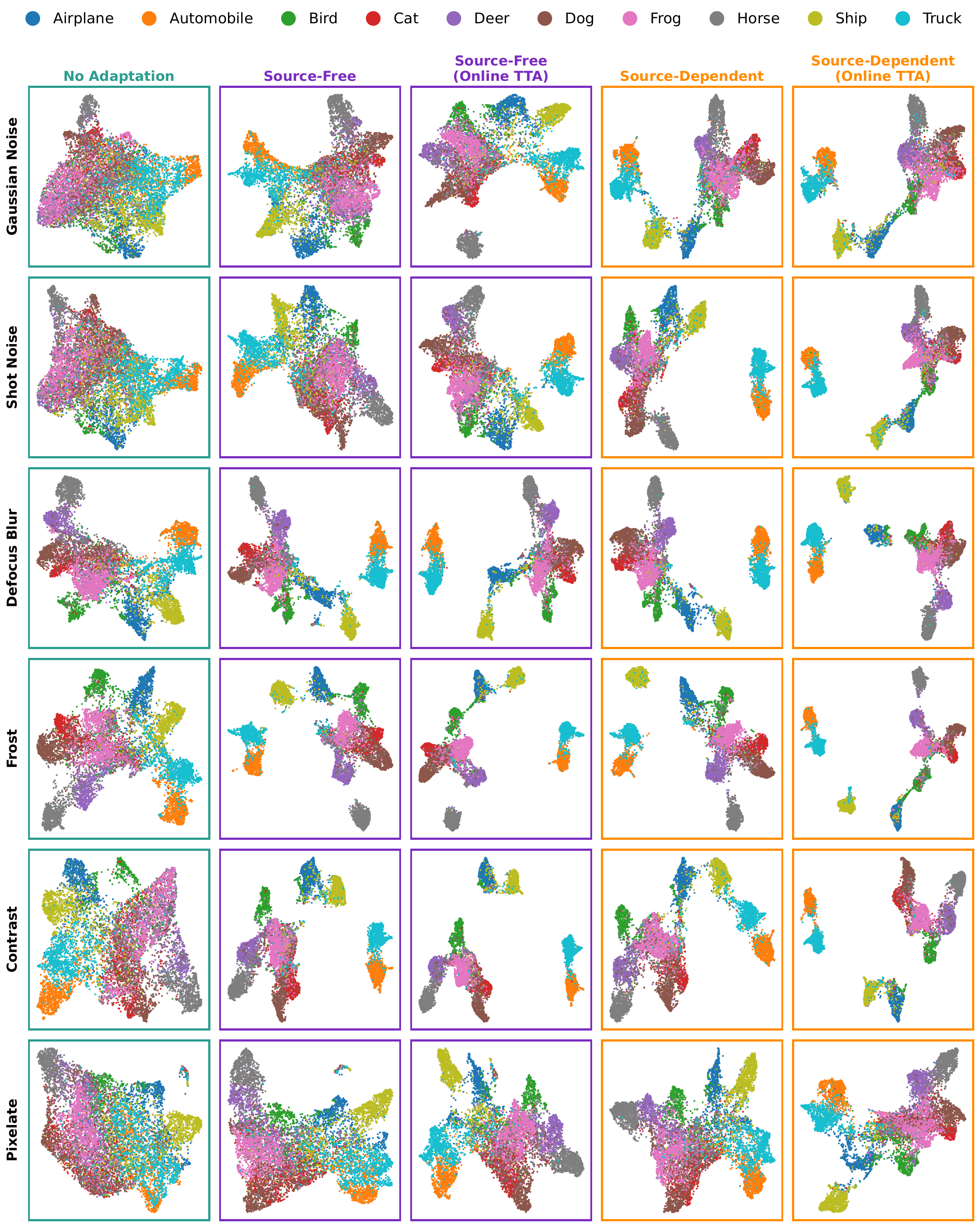}
\caption{\textbf{UMAP of CLIP ViT-B/16 image embeddings on CIFAR-10-C (severity~5), colored by class.} Rows are six corruption families; columns are, left to right, No Adaptation, source-free Drift-TTA without and with the online gate, and source-dependent Drift-TTA without and with the online gate. Column groups are color-coded (teal: no adaptation; purple: source-free; orange: source-dependent). Points are the $L_2$-normalized image embedding used for zero-shot classification, projected with UMAP (Euclidean metric); each panel is embedded independently. Leave-one-family-out: each column uses only the adapter trained without the family shown in that row.}
\label{fig:umap-clip}
\end{figure}

\begin{figure}[p]
\centering
% \includesvg[width=0.97\linewidth,inkscapearea=drawing]{imgs/tsne_grid_clip}
\includegraphics[width=0.97\linewidth]{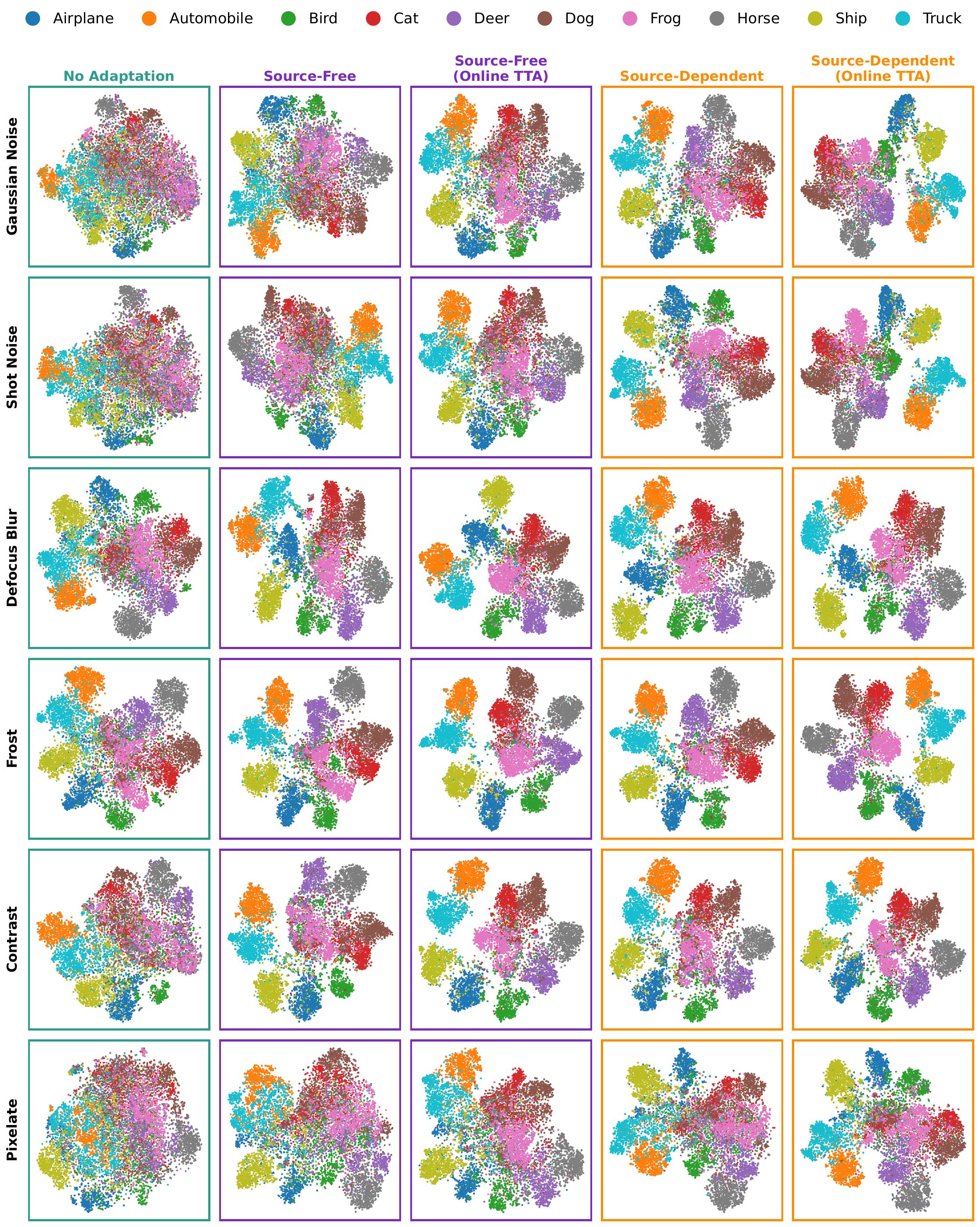}
\caption{\textbf{t-SNE of CLIP ViT-B/16 image embeddings on CIFAR-10-C (severity~5), colored by class.} Rows are six corruption families; columns are, from left to right, No Adaptation, source-free Drift-TTA without and with online gate adaptation, and source-dependent Drift-TTA without and with online gate adaptation. Points are the $L_2$-normalized image embeddings used for zero-shot classification, projected independently with t-SNE using Euclidean distance and perplexity~30. Layout, color coding, and the leave-one-family-out protocol match Figure~\ref{fig:umap-clip}.}
\label{fig:tsne-clip}
\end{figure}

\begin{figure}[p]
\centering
% \includesvg[width=0.97\linewidth,inkscapearea=drawing]{imgs/umap_grid_vit}
\includegraphics[width=0.97\linewidth]{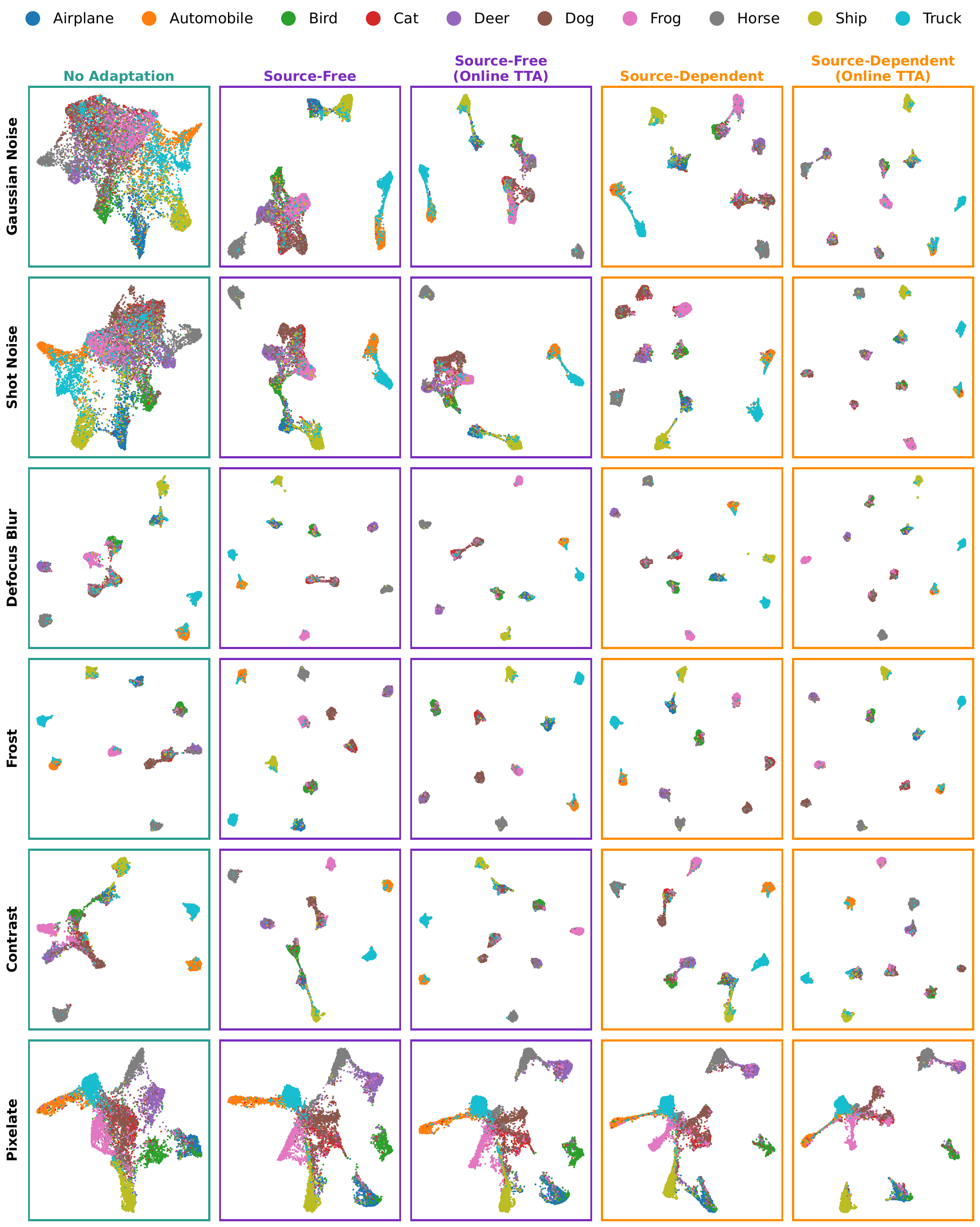}
\caption{\textbf{UMAP of ViT image features on CIFAR-10-C (severity~5), colored by class.} Layout, color coding, and protocol match Figure~\ref{fig:umap-clip}; here the points are the pre-classifier pooled embedding of the ViT. As with CLIP, both Drift-TTA priors tighten the class structure and the online gate sharpens it further.}
\label{fig:umap-vit}
\end{figure}

\begin{figure}[p]
\centering
% \includesvg[width=0.97\linewidth,inkscapearea=drawing]{imgs/tsne_grid_vit}
\includegraphics[width=0.97\linewidth]{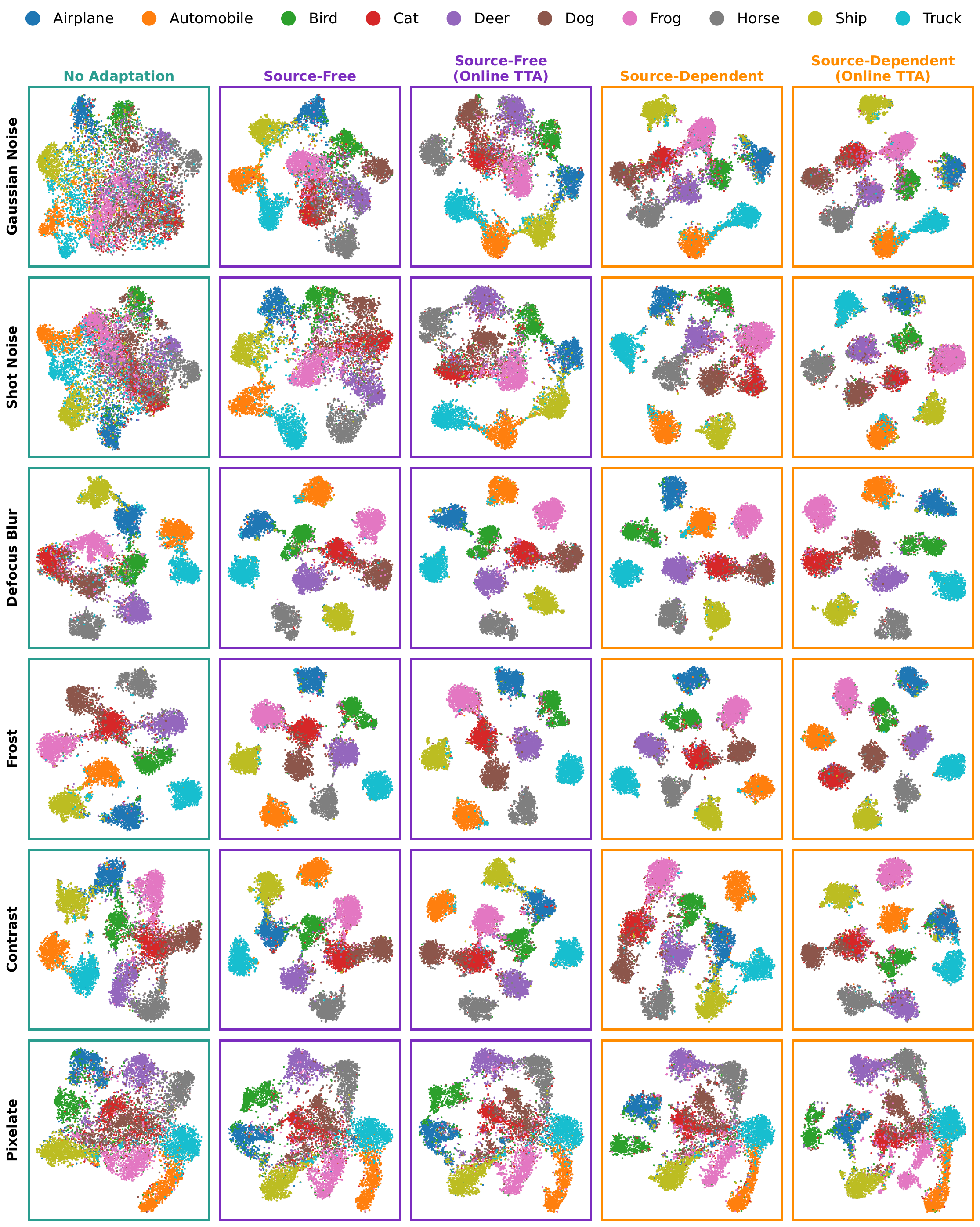}
\caption{\textbf{t-SNE of ViT image features on CIFAR-10-C (severity~5), colored by class.} Layout, color coding, projection settings, and protocol match Figure~\ref{fig:tsne-clip}; here, the points are the $L_2$-normalized pre-classifier pooled embeddings of the ViT. The class structure follows the same qualitative progression as the corresponding UMAP visualization in Figure~\ref{fig:umap-vit}.}
\label{fig:tsne-vit}
\end{figure}

% =====================================================================
\clearpage
\section{Per-Family Trajectory-Feature Distributions}
\label{app:traj-dist}
% This section extends the trajectory-feature view of Figure~\ref{fig:drift-tta-overview}(c) from a single case to individual corruption families. 

This section extends the trajectory-feature view of Figure~\ref{fig:trajectory-drift-correction}(b) from a single case to individual corruption families. Each image reduces every input to two scalar summaries of its hidden trajectory, the magnitude feature $\mathrm{Mag}(H)$ and the angle feature $\mathrm{Ang}(H)$, and draws kernel-density contours of the clean reference (green) against corruption severities~1--5 (columns). The three rows are, top to bottom, the unadapted model (No Adaptation), the frozen offline Drift-TTA adapter, and the same adapter with the online entropy gate enabled. All densities use CIFAR-10-C, CIFAR-100-C, or ImageNet-C at the stated severities under the leave-one-family-out protocol, so each panel reports its family using only the adapter that never saw that family.

The intended reading is spatial. In the examples shown, the corrupted density moves away from the clean contours as severity increases, while the offline adapter shifts it back toward clean and the online gate further strengthens this correction. We show one illustrative corruption family for each of the twelve backbone$\times$prior$\times$dataset settings, covering supervised ViT and CLIP, source-free and source-dependent training, and all three datasets.

\begin{figure}[p]
\centering
% \includesvg[width=0.97\linewidth,inkscapearea=drawing]{imgs/fig2_vit_sf_c10_glass_blur_3row_panel}
\includegraphics[width=0.97\linewidth]{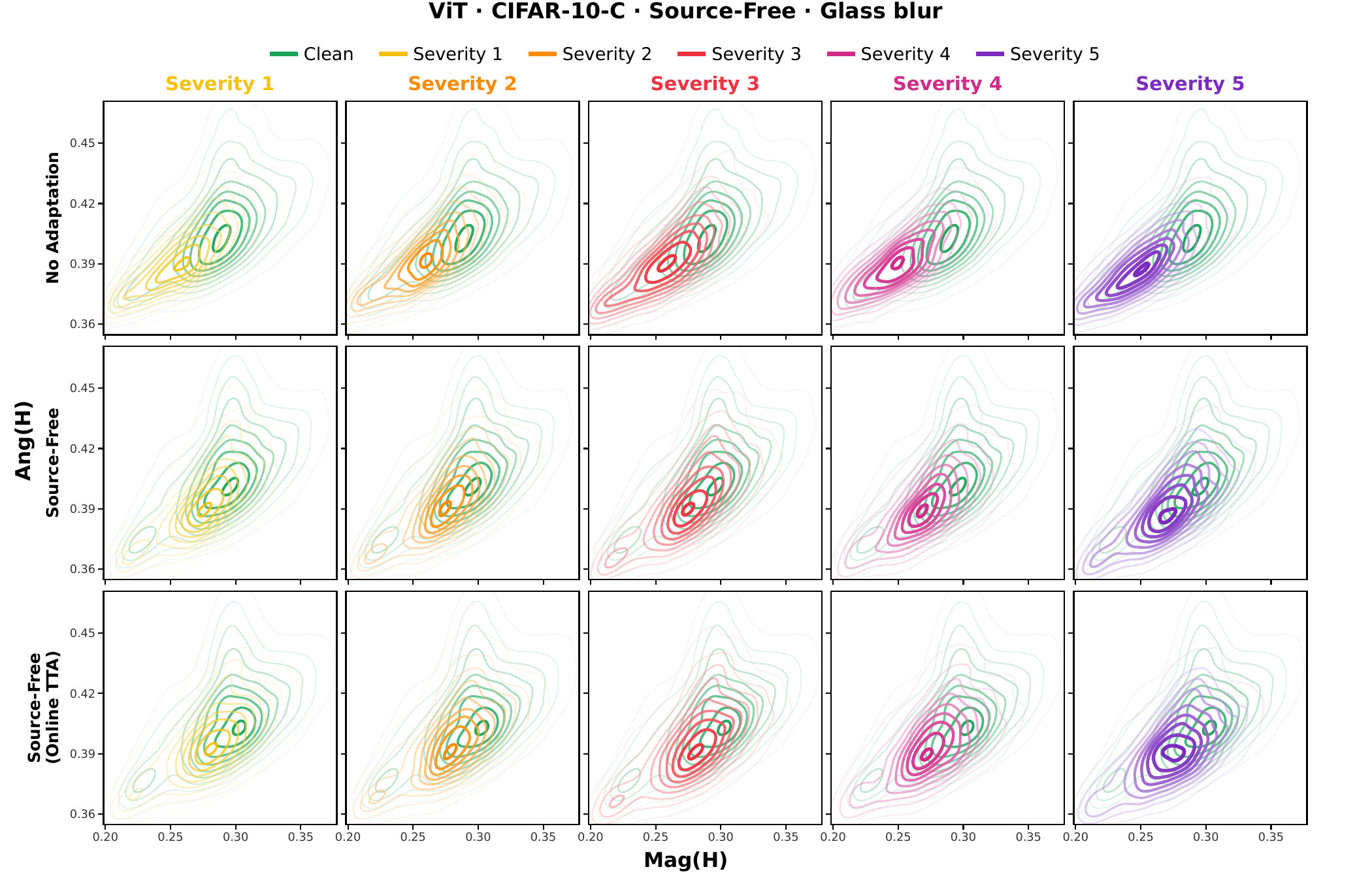}\\[0.45em]
% \includesvg[width=0.97\linewidth,inkscapearea=drawing]{imgs/fig2_vit_sf_c100_shot_noise_3row_panel}
\includegraphics[width=0.97\linewidth]{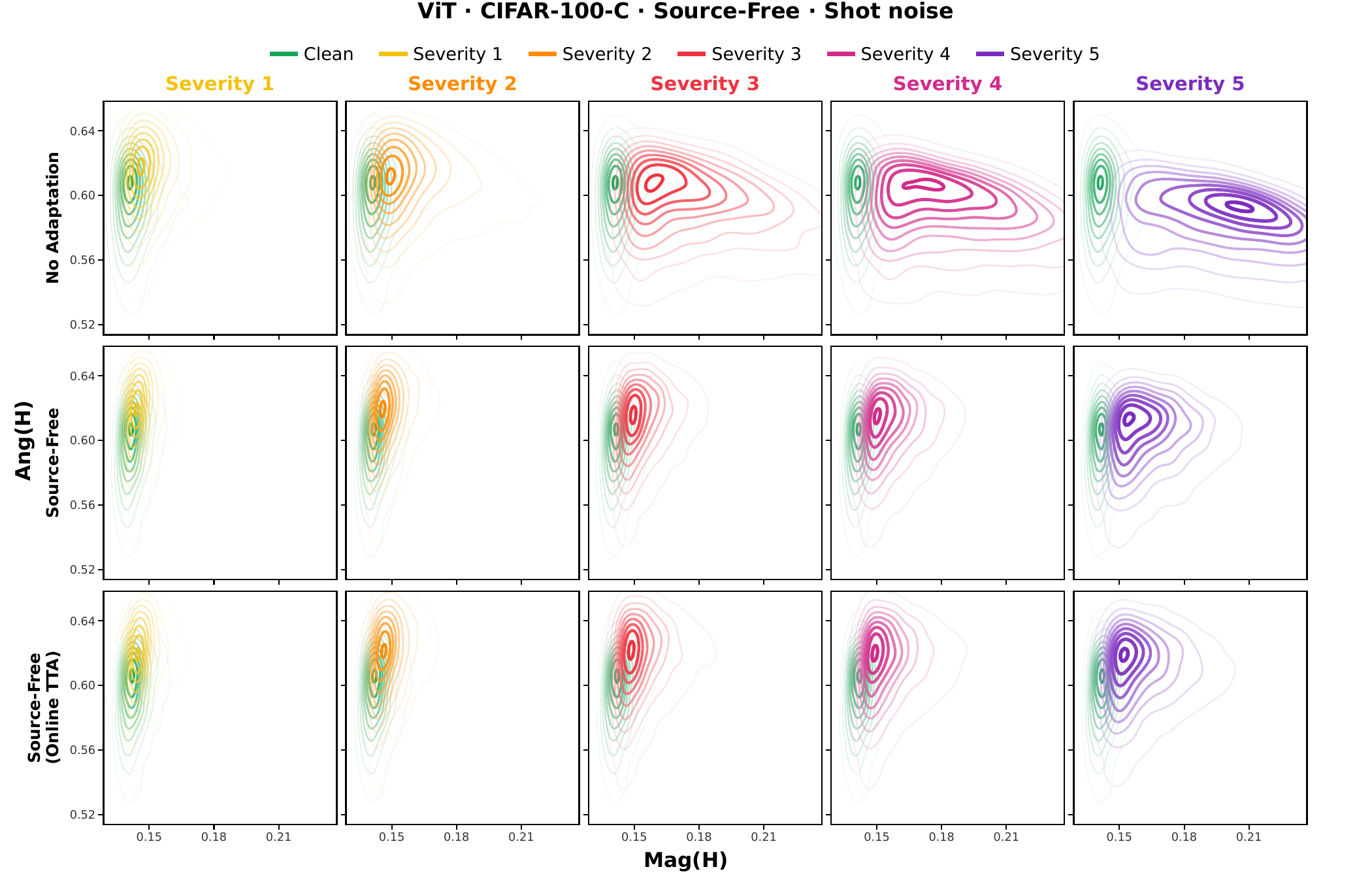}
\caption{\textbf{Trajectory-feature distributions, supervised ViT (source-free).} Each panel plots $\mathrm{Mag}(H)$ (horizontal) against $\mathrm{Ang}(H)$ (vertical); green is the clean reference and colored contours are severities~1--5 (columns), while the three rows are No Adaptation, offline Drift-TTA, and online Drift-TTA. Top: glass blur on CIFAR-10-C; bottom: shot noise on CIFAR-100-C. Leave-one-family-out.}
\label{fig:traj-dist-1}
\end{figure}

\begin{figure}[p]
\centering
% \includesvg[width=0.97\linewidth,inkscapearea=drawing]{imgs/fig2_vit_sf_in_snow_3row_panel}
\includegraphics[width=0.97\linewidth]{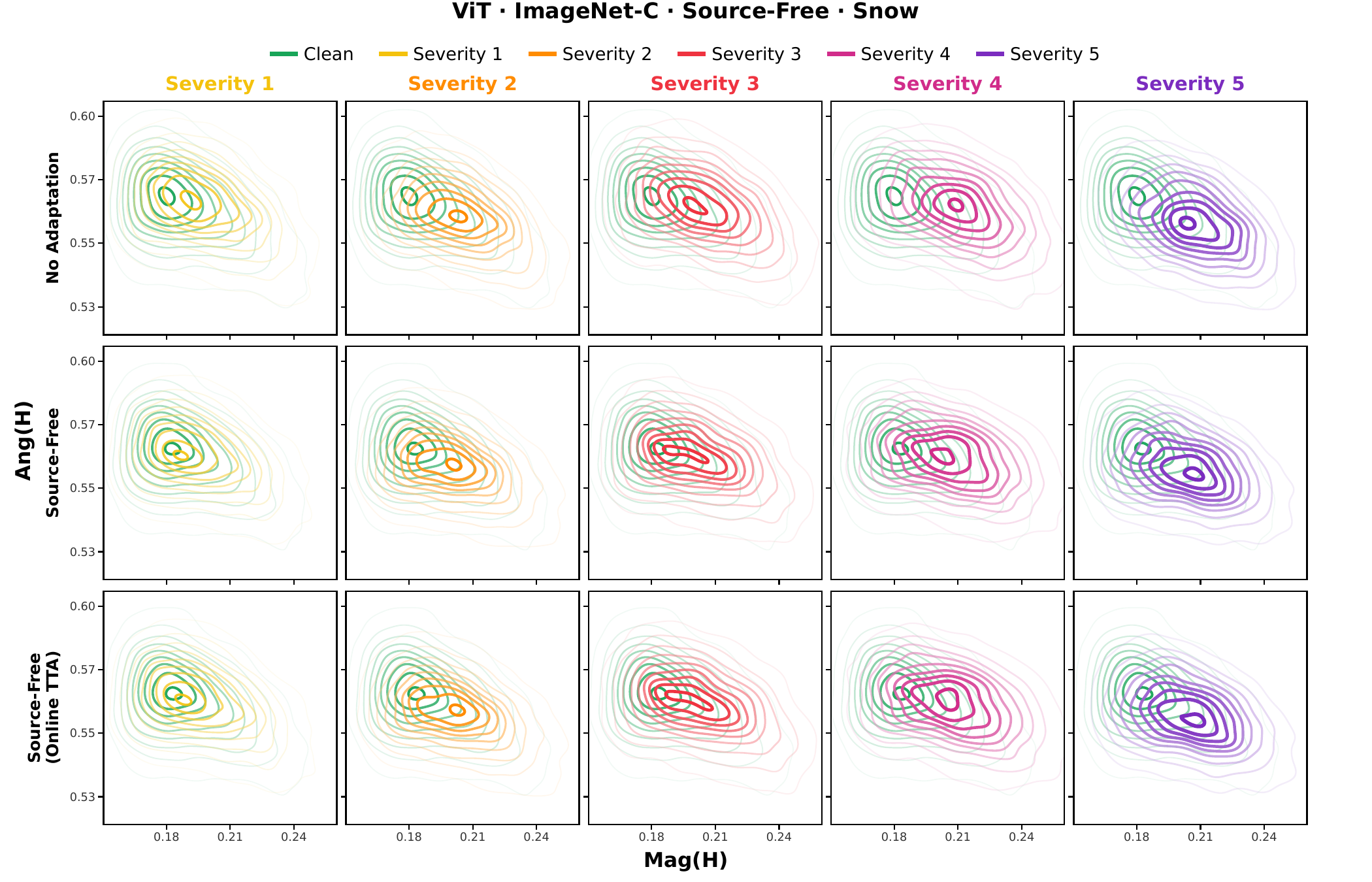}\\[0.45em]
% \includesvg[width=0.97\linewidth,inkscapearea=drawing]{imgs/fig2_vit_sd_c10_pixelate_3row_panel}
\includegraphics[width=0.97\linewidth]{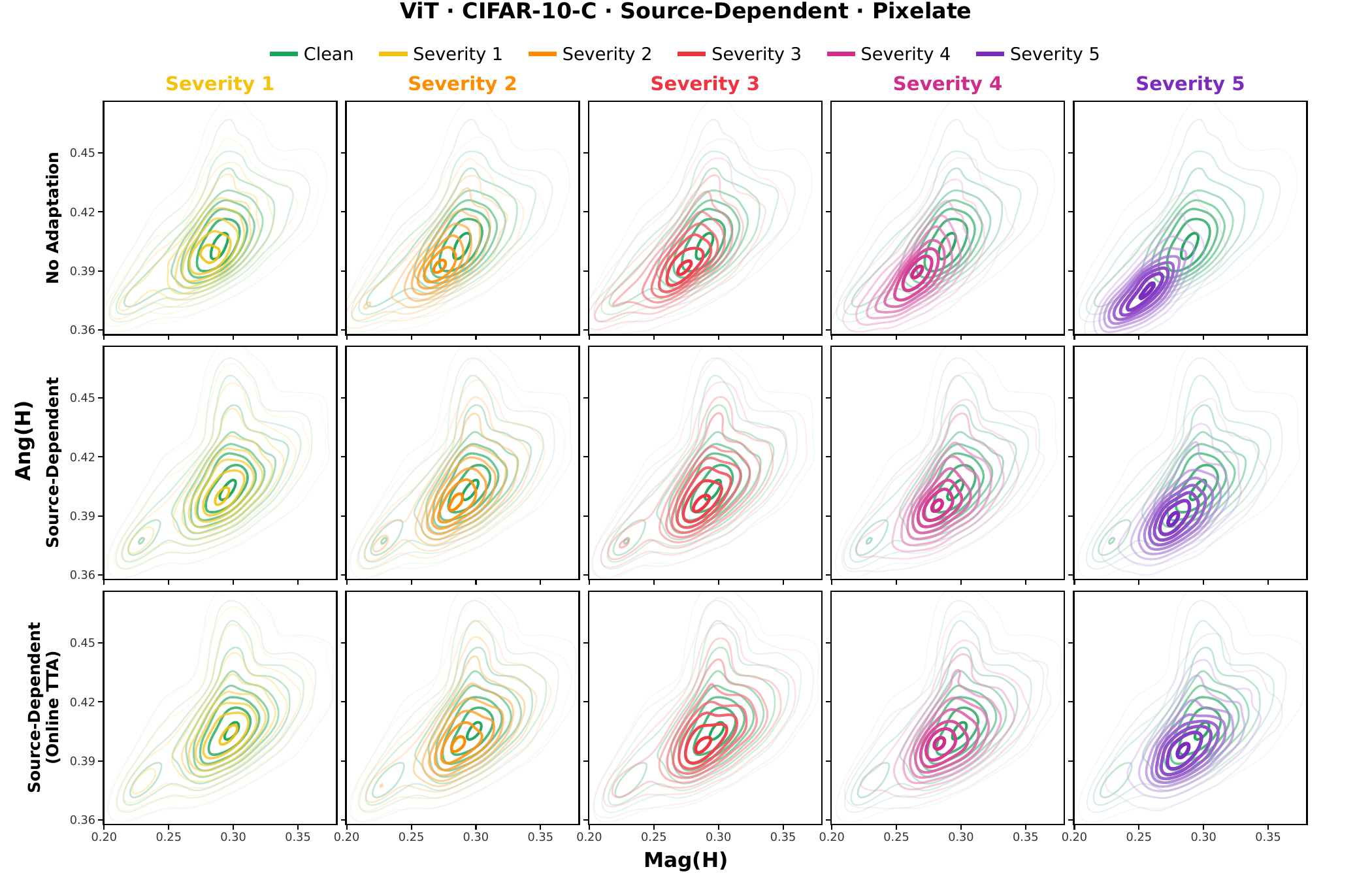}
\caption{\textbf{Trajectory-feature distributions, supervised ViT.} Green is the clean reference and colored contours are severities~1--5 (columns); the three rows are No Adaptation, offline Drift-TTA, and online Drift-TTA. Top: snow on ImageNet-C (source-free); bottom: pixelate on CIFAR-10-C (source-dependent). Leave-one-family-out.}
\label{fig:traj-dist-2}
\end{figure}

\begin{figure}[p]
\centering
% \includesvg[width=0.97\linewidth,inkscapearea=drawing]{imgs/fig2_vit_sd_c100_impulse_noise_3row_panel}
\includegraphics[width=0.97\linewidth]{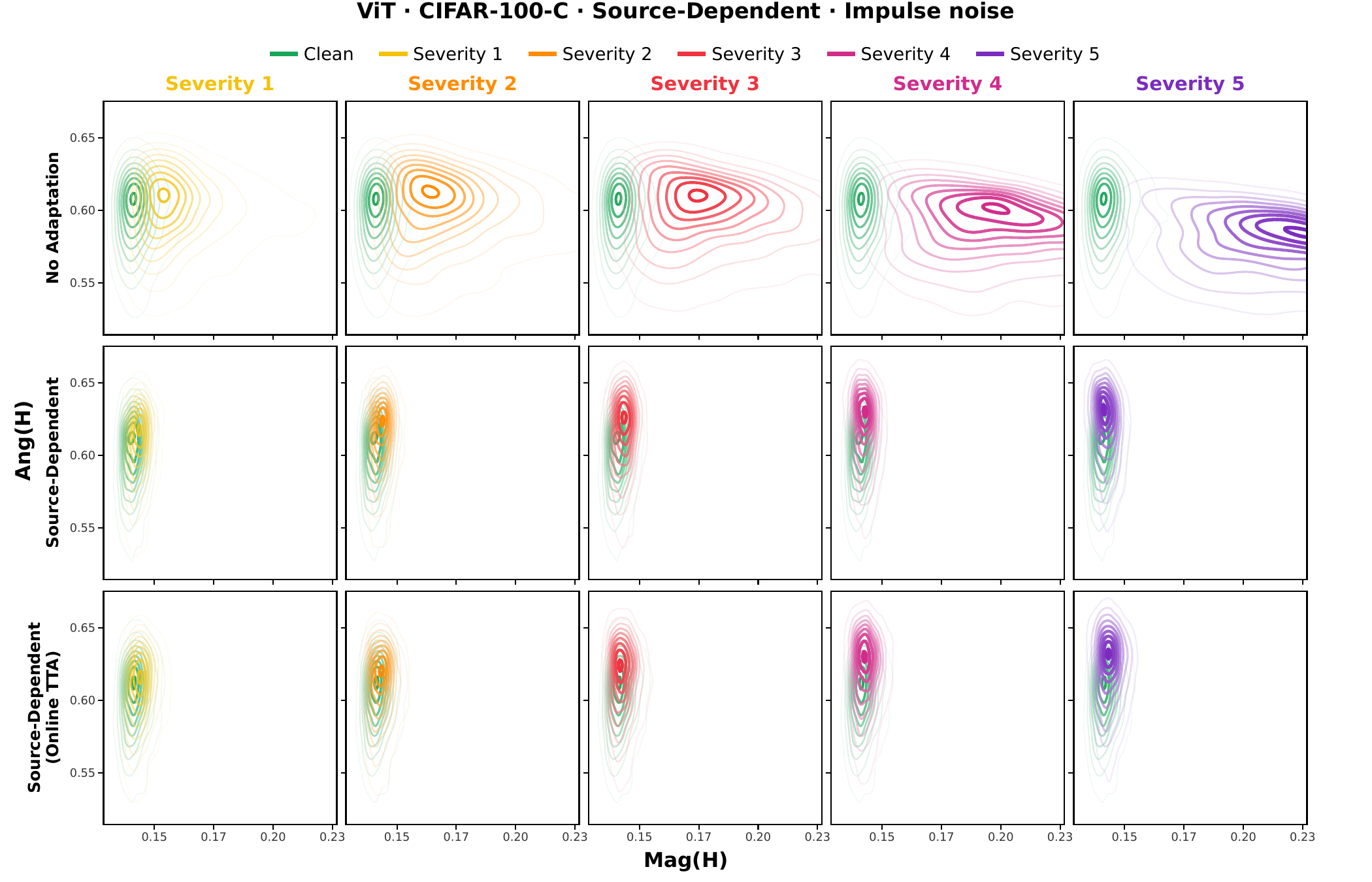}\\[0.45em]
% \includesvg[width=0.97\linewidth,inkscapearea=drawing]{imgs/fig2_vit_sd_in_fog_3row_panel}
\includegraphics[width=0.97\linewidth]{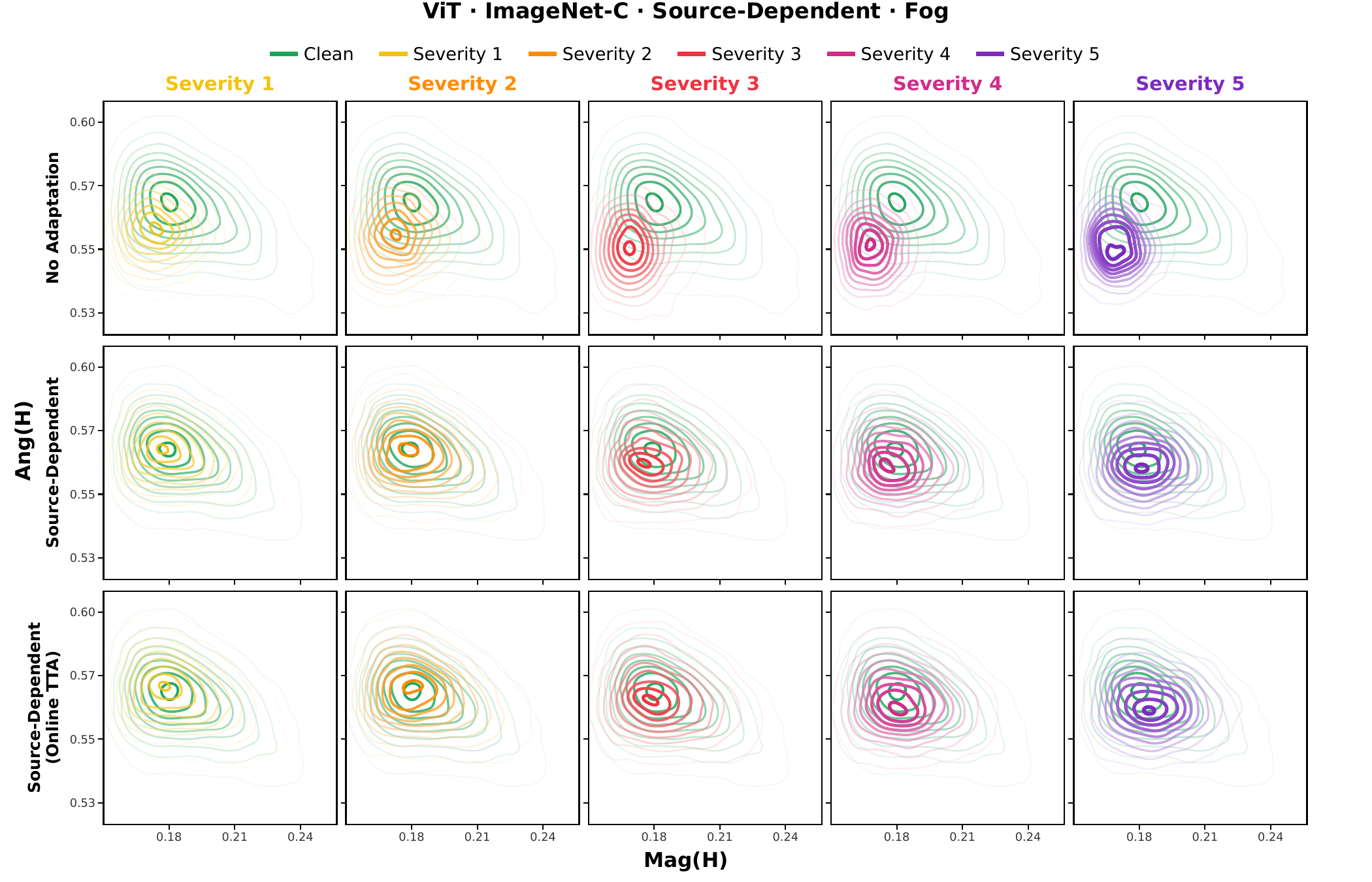}
\caption{\textbf{Trajectory-feature distributions, supervised ViT (source-dependent).} Green is the clean reference and colored contours are severities~1--5 (columns); the three rows are No Adaptation, offline Drift-TTA, and online Drift-TTA. Top: impulse noise on CIFAR-100-C; bottom: fog on ImageNet-C. Leave-one-family-out.}
\label{fig:traj-dist-3}
\end{figure}

\begin{figure}[p]
\centering
% \includesvg[width=0.97\linewidth,inkscapearea=drawing]{imgs/fig2_clip_sf_c10_motion_blur_3row_panel}
\includegraphics[width=0.97\linewidth]{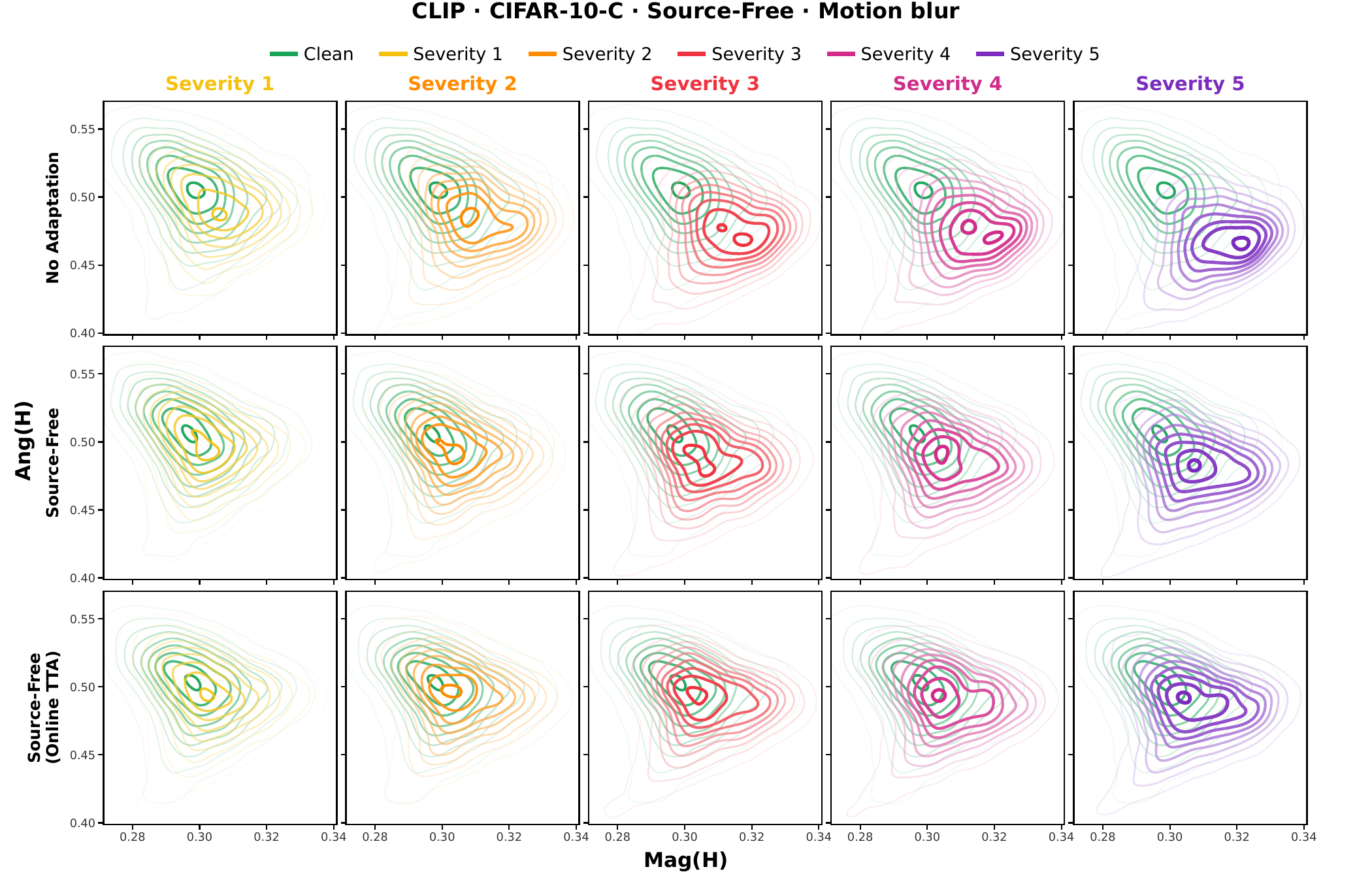}\\[0.45em]
% \includesvg[width=0.97\linewidth,inkscapearea=drawing]{imgs/fig2_clip_sf_c100_elastic_transform_3row_panel}
\includegraphics[width=0.97\linewidth]{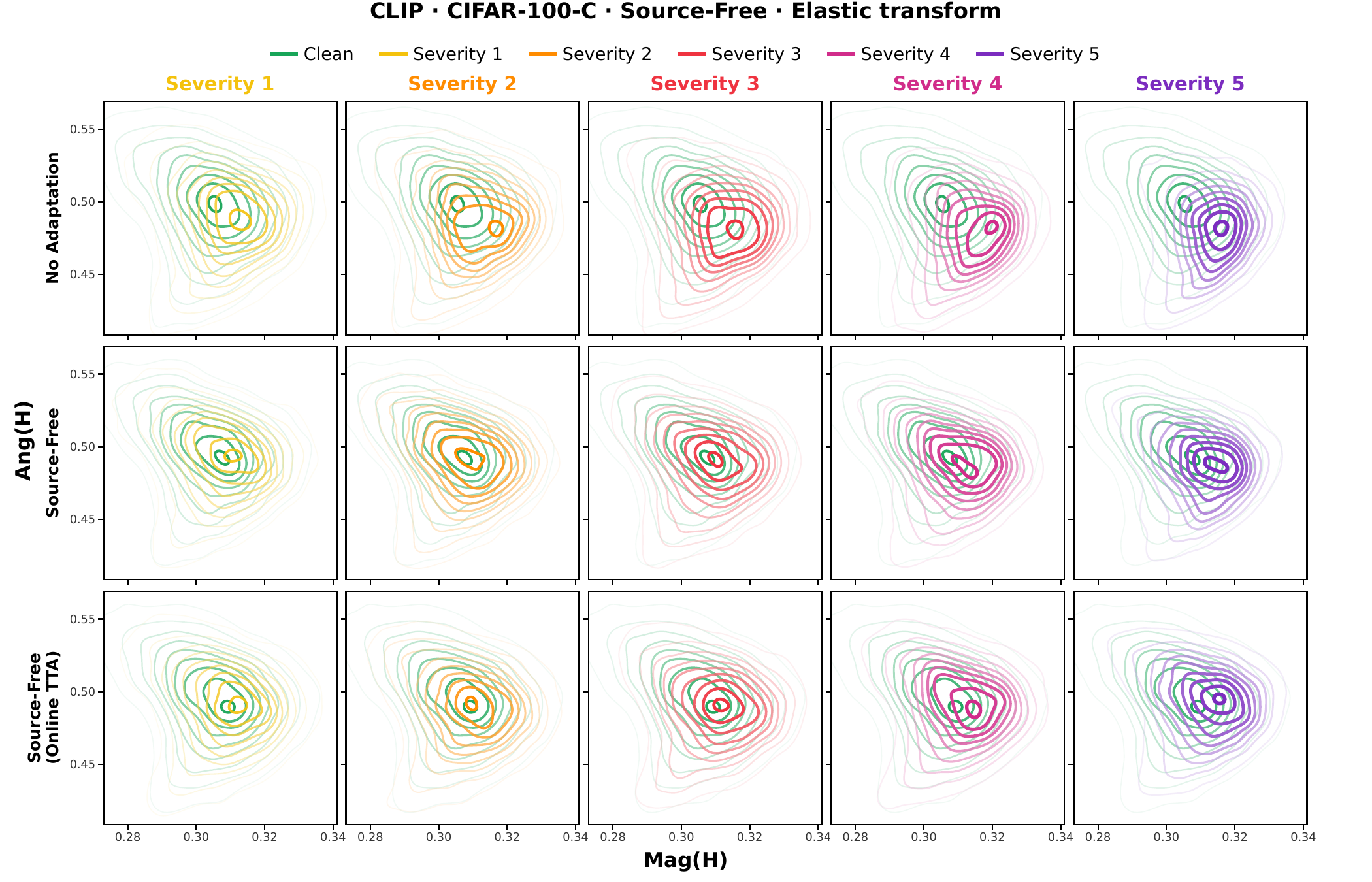}
\caption{\textbf{Trajectory-feature distributions, CLIP (source-free).} Green is the clean reference and colored contours are severities~1--5 (columns); the three rows are No Adaptation, offline Drift-TTA, and online Drift-TTA. Top: motion blur on CIFAR-10-C; bottom: elastic transform on CIFAR-100-C. Leave-one-family-out.}
\label{fig:traj-dist-4}
\end{figure}

\begin{figure}[p]
\centering
% \includesvg[width=0.97\linewidth,inkscapearea=drawing]{imgs/fig2_clip_sf_in_zoom_blur_3row_panel}
\includegraphics[width=0.97\linewidth]{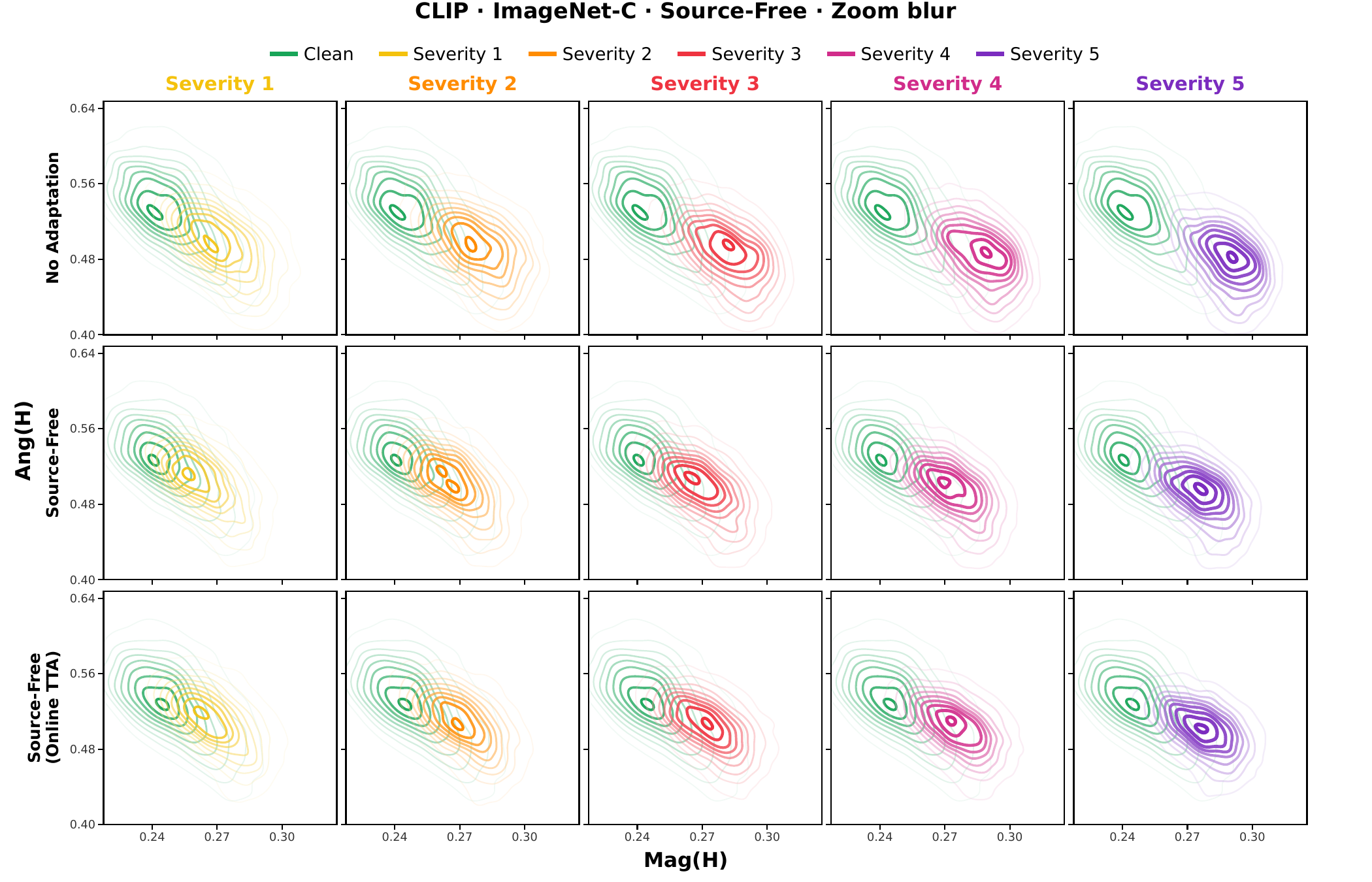}\\[0.45em]
% \includesvg[width=0.97\linewidth,inkscapearea=drawing]{imgs/fig2_clip_sd_c10_defocus_blur_3row_panel}
\includegraphics[width=0.97\linewidth]{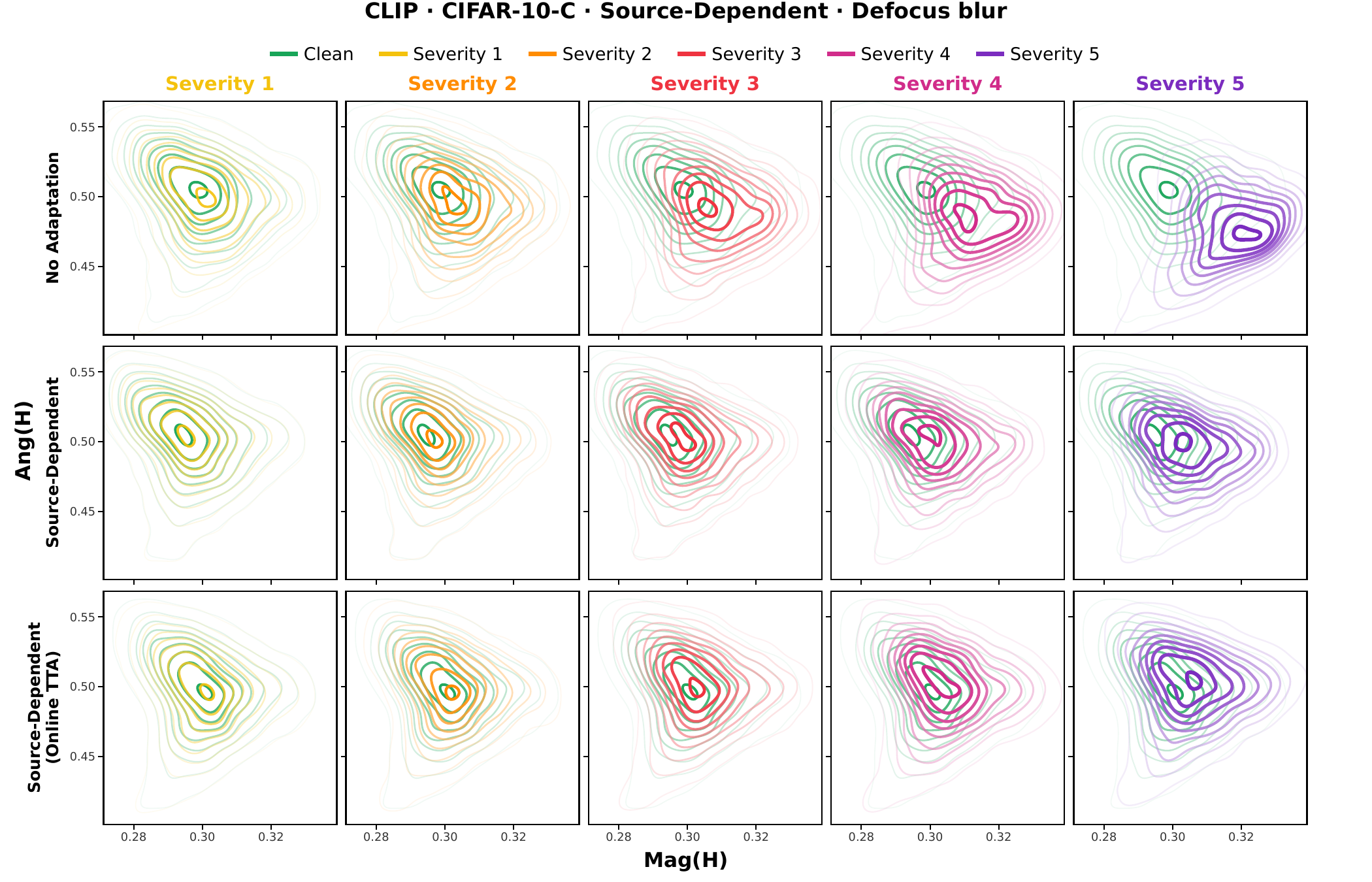}
\caption{\textbf{Trajectory-feature distributions, CLIP.} Green is the clean reference and colored contours are severities~1--5 (columns); the three rows are No Adaptation, offline Drift-TTA, and online Drift-TTA. Top: zoom blur on ImageNet-C (source-free); bottom: defocus blur on CIFAR-10-C (source-dependent). Leave-one-family-out.}
\label{fig:traj-dist-5}
\end{figure}

\begin{figure}[p]
\centering
% \includesvg[width=0.97\linewidth,inkscapearea=drawing]{imgs/fig2_clip_sd_c100_gaussian_noise_3row_panel}
\includegraphics[width=0.97\linewidth]{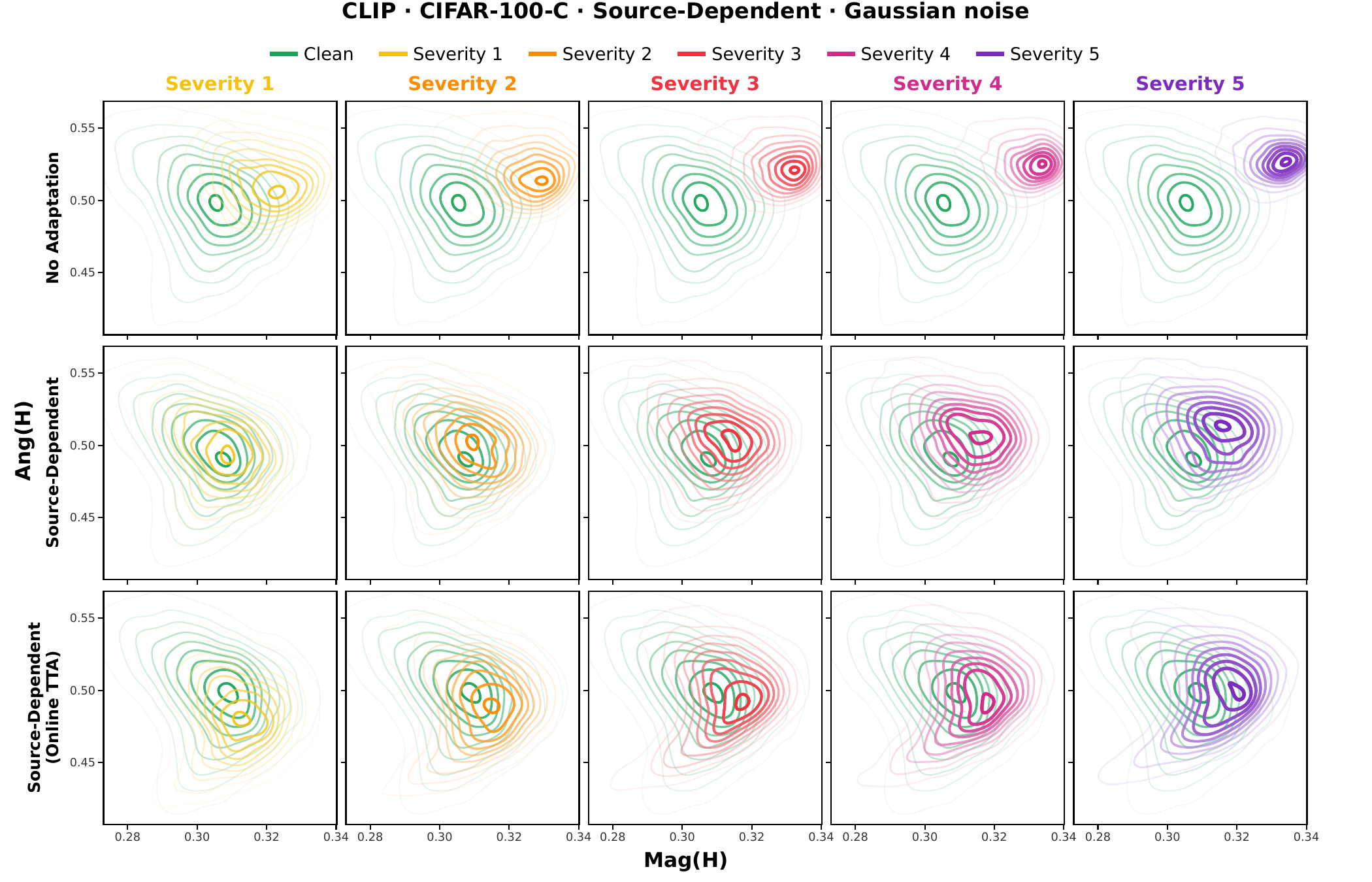}\\[0.45em]
% \includesvg[width=0.97\linewidth,inkscapearea=drawing]{imgs/fig2_clip_sd_in_contrast_3row_panel}
\includegraphics[width=0.97\linewidth]{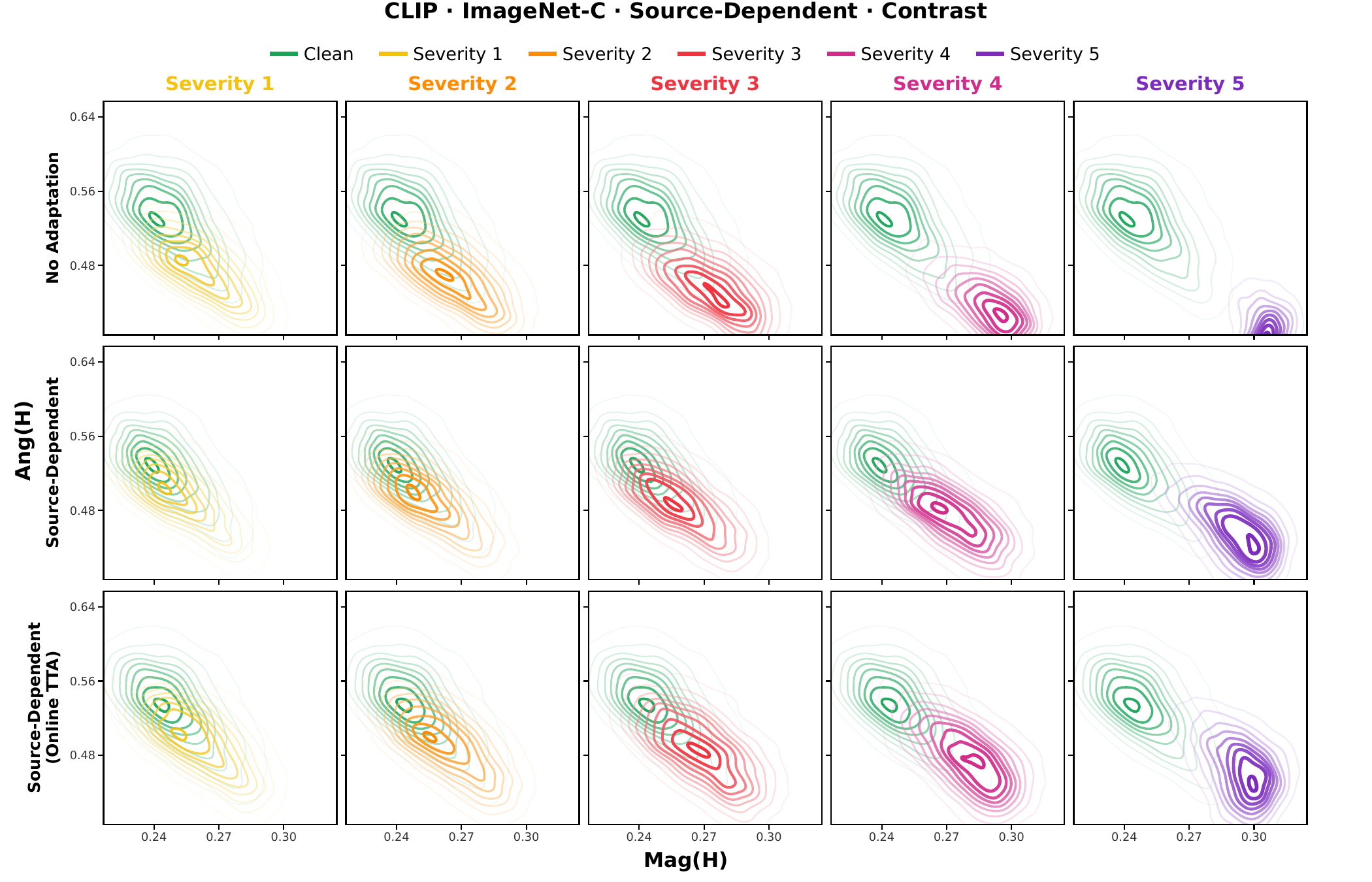}
\caption{\textbf{Trajectory-feature distributions, CLIP (source-dependent).} Green is the clean reference and colored contours are severities~1--5 (columns); the three rows are No Adaptation, offline Drift-TTA, and online Drift-TTA. Top: Gaussian noise on CIFAR-100-C; bottom: contrast on ImageNet-C. Leave-one-family-out.}
\label{fig:traj-dist-6}
\end{figure}

\end{document}